\newcommand{\trace}{\mathrm{trace}}
\newcommand{\cL}{{\mathcal L}}
\newcommand{\la}{{\lambda}}
\newcommand{\cD}{{\mathcal D}}
\newcommand{\cN}{{\mathcal N}}
\newcommand{\bR}{{\mathbb R}}
\newcommand{\bE}{{\mathbb E}}
\def\TV{\mathsf{TV}}
\def\<{\langle}
\def\>{\rangle}
\def\Lamp[#1]{\boldsymbol{\Lambda}_{\mathrm{AMP}}^{(#1)}}
\def\lalg[#1]{\Lambda_{\mathrm{alg}, #1}}
\def\de{{\rm d}}
\def\RR{\mathbb{R}}
\newcommand{\rd}{{\rm d}}
\newcommand{\del}{\partial}
\newcommand{\al}{\alpha}
\def\cO{\mathcal{O}}
\def\id{{\mathbb I}}
\def\cL{{\mathcal L}}
\newcommand{\E}{\mathbb{E}}
\newcommand{\N}{\mathcal{N}}
\newcommand{\sign}{\operatorname{sign}}
\newcommand{\RN}[1]{%
  \textup{\uppercase\expandafter{\romannumeral#1}}%
}
\newcommand{\RNum}[1]{\uppercase\expandafter{\romannumeral #1\relax}}
\newcommand\qU{U}
\newcommand\svarrho{\widehat{\varrho}}
\newcommand\sq{\widehat{q}}
\newcommand\sU{\widehat{U}}
\newcommand\sV{\widehat{V}}
\newcommand\sY{\widehat{Y}}
\newcommand\dq{\widetilde{q}}
\newcommand\dV{\widetilde{V}}
\newcommand\dU{\widetilde{U}}
\newcommand\dY{\widetilde{Y}}
\newcommand\dvarrho{\widetilde{\varrho}}
\newcommand\muast{\mu_{\ast}}
\newcommand\mft{\tau}
\theoremstyle{plain} 
\newtheorem{theorem}{Theorem}[section]
\newtheorem*{theorem*}{Theorem}
\newtheorem{lemma}[theorem]{Lemma}
\newtheorem*{lemma*}{Lemma}
\newtheorem*{corollary*}{Corollary}
\newtheorem{proposition}[theorem]{Proposition}
\newtheorem*{proposition*}{Proposition}
\newtheorem{assumption}[theorem]{Assumption}
\newtheorem*{assumption*}{Assumption}
\newtheorem*{definition*}{Definition}
\newtheorem{example}[theorem]{Example}
\newtheorem*{example*}{Example}
\newtheorem{remark}[theorem]{Remark}
\newtheorem*{remark*}{Remark}
\newtheorem*{remarks*}{Remarks}
\title{Convergence Analysis of Probability Flow ODE for Score-based Generative Models}
\date{}
\begin{document}

\author{Daniel~Zhengyu~Huang\textsuperscript{1}}
\address{\textsuperscript{1}Beijing International Center for Mathematical Research, Center for Machine Learning Research, Peking University, Beijing, China}
\email{huangdz@bicmr.pku.edu.cn}
\author{Jiaoyang Huang\textsuperscript{2}}
\address{\textsuperscript{2}Department of Statistics and Data Science at University of Pennsylvania, Philadelphia, PA, USA}
\email{huangjy@wharton.upenn.edu}
\author{Zhengjiang Lin\textsuperscript{3}}
\address{\textsuperscript{3}Department of Mathematics at Massachusetts Institute of Technology, Cambridge, MA, USA}
\email{linzj@mit.edu}

\begin{abstract}
Score-based generative models have emerged as a powerful approach for sampling high-dimensional probability distributions. Despite their effectiveness, their theoretical underpinnings remain relatively underdeveloped. In this work, we study the convergence properties of deterministic samplers based on probability flow ODEs from both theoretical and numerical perspectives. Assuming access to $L^2$-accurate estimates of the score function, we prove the total variation between the target and the generated data distributions can be bounded above by $\mathcal{O}(d^{3/4}\delta^{1/2})$ in the continuous time level, where  $d$ denotes the data dimension and $\delta$ represents the $L^2$-score matching error. For practical implementations using a $p$-th order Runge-Kutta integrator with step size $h$, we establish error bounds of $\mathcal{O}(d^{3/4}\delta^{1/2} + d\cdot(dh)^p)$ at the discrete level. Finally, we present numerical studies
on problems up to 128 dimensions to verify our theory.
\end{abstract}

\maketitle
\section{Introduction}

In recent years, score-based generative models \cite{sohl2015deep,ho2020denoising, song2019generative,song2020score,dhariwal2021diffusion} have emerged as a powerful paradigm for sampling high-dimensional probability distributions. Unlike traditional generative models that directly parameterize the mapping from random noise to target distribution samples~\cite{kingma2013auto,goodfellow2014generative,rezende2015variational,papamakarios2021normalizing}, score-based generative models
consist of two stochastic processes---the forward and reverse processes. 
The forward process transforms samples from the target data distribution $\muast$  with density $q_0$ into pure noise, a step commonly referred to as the diffusion process.  
The gradient of the log-density function, also known as the score function, is learned from these trajectories using score matching techniques~\cite{hyvarinen2005estimation,vincent2011connection,song2019generative,song2020score}.
The reverse process, guided by the score function, transforms random noise back into samples from $q_0$.
This methodology has been proven effective in synthesizing high-fidelity audio and image data~\cite{dhariwal2020jukebox,dhariwal2021diffusion,popov2021grad,ramesh2022hierarchical,esser2024scaling}.


The reverse process is commonly implemented either as stochastic dynamics or deterministic dynamics, the latter often formulated as probability flow ordinary differential equation (ODE). 
These probability flow ODEs can typically be discretized using numerical methods such as forward Euler, exponential integrator, Heun, and high-order Runge-Kutta methods. 
Recent advancements in methods like those proposed in \cite{song2020denoising,song2020score,lu2022dpm,lu2022dpm+,zhang2022fast,zhao2024unipc} have enabled denoising steps to be completed in just a few iterations (e.g., 50 steps), compared to the Euler-Maruyama scheme typically employed for stochastic dynamics, which often requires a significantly larger number of steps (e.g., 1000 steps). Consequently, these deterministic methods achieve better efficiency in generating samples with only moderate quality degradation.
The deterministic dynamics depends on the score function, which is typically learned by a neural network through the score matching process involving non-convex optimization. Consequently, the score estimation is inherently imperfect. 
This imprecision, coupled with discretization error, poses a critical question: 
\ul{How does the interplay between score matching error and discretization error influence the convergence of the deterministic dynamics towards the true data distribution?} 
Our work seeks to address this question by delving into the convergence analysis of probability flow ODEs within the context of score-based generative models.

For the stochastic dynamics,  convergence analyses have been explored in various works such as \cite{block2020generative,de2021diffusion, yang2022convergence,kwon2022score,tang2024contractive,lee2022convergence, chen2022sampling,lee2023convergence,chen2023score,chen2023improved,wu2024theoretical,tang2024score,mooney2024global}, with notable contributions from~\cite{chen2022sampling,lee2023convergence,chen2023improved,tang2024score},
offering convergence guarantees with
polynomial complexity, without relying on any structural
assumptions on the data distribution like log-concavity. The stochastic nature of these dynamics plays a crucial role in mitigating error accumulation.
However, the deterministic counterpart warrants further exploration. 
Related works include \cite{chen2023restoration}, which assumes no score matching error and
provides a discretization analysis for the probability flow ODE in KL divergence. However, their bounds exhibit a large dependence on dimensionality and are exponential in the Lipschitz constant of the score integrated over time.
In contrast, \cite{chen2024probability} assumes $L^2$ bounds on the score estimation and offers polynomial-time convergence guarantees for the probability
flow ODE combined with a stochastic Langevin corrector, without relying on any structural assumptions on the data distribution.
Similarly, \cite{li2024accelerating,li2023towards} 
provide polynomial-time convergence guarantees for the probability flow ODE by requiring control of the difference
between the derivatives of the true and approximate scores.
Additionally, \cite{albergo2022building,benton2023error} analyze the convergence of the deterministic dynamics at a continuous time level, exhibiting exponential dependence on the Lipschitz constant, stemming from a more general stochastic interpolant or flow matching setup~\cite{lipman2022flow,liu2022flow,albergo2023stochastic,boffi2206probability}. Finally,
\cite{gao2024convergence}  offers convergence analysis for the general probability flow ODEs with log-concavity data assumption, where the error bounds grow exponentially with time $T$ in the presence of the score matching error.

\subsection{Our Contributions}
We analyze the convergence of the probability flow ODE from both theoretical and numerical perspectives. Our detailed contributions are as follows:
\begin{itemize}
    \item We provide convergence guarantees of the probability flow ODE at the continuous time level under three mild assumptions. These assumptions are as follows: \cref{assumption:secon-moment} asserts that the target density has compact support, \cref{a:score-estimate} asserts the $L^2$ score matching error over time is bounded by $\delta$, and  \cref{a:score-derivative} asserts the first and second derivatives of the estimated score are bounded. Under these assumptions, we prove in \cref{theorem: main L^1 theorem} that the total variation distance between the target and the generated data distributions can be bounded above by $\cO(d^{3/4} \delta^{1/2})$, where  $d$ is the data dimension.
    \item We provide convergence guarantees of the probability flow ODE at the discretized level. To accommodate a $p$-th order time integrator, we further require \cref{a:score-high-derivative}, that the estimated score function's first $(p+1)$-th derivatives are bounded. We establish in \cref{theorem: main L^1 theorem discretized} that the total variation distance between the target and the generated data distributions can be bounded above by $\mathcal{O}(d^{3/4}\delta^{1/2} + d\cdot(dh)^p)$. This implies an iteration complexity bound $\cO(d^{1+1/p} \varepsilon^{-1/p})$ for achieving a total variation accuracy of $\varepsilon$.

    \item  We verify our theoretical discoveries through numerical studies on problems with Gaussian mixture target densities up to 128 dimensions.  By intentionally introducing artificial score matching errors and employing the widely used second-order Heun's time integrator, our numerical results demonstrate a total variation error of $\mathcal{O}(\delta + h^2)$ (for the marginal distributions). The quadratic dependence of $h$ matches our theory with $p=2$.
    
\end{itemize}

In our theoretical proof at the continuous time level, we combine the method of characteristic lines and calculus of variations to estimate the total variation between the generated data distribution  $\widehat q_t$ and the target distribution $q_t$ along the diffusion process. Compared to using Gr\"{o}nwall's inequality directly, our error estimate in Theorem~\ref{theorem: L^1 error} does not include an exponential term in time. We provide two mathematically rigorous yet simple proofs of Theorem~\ref{theorem: L^1 error}, and also illustrate our intuition in Section~\ref{section: L^1 error of transport equation}. Furthermore, our methods imply a more general Theorem~\ref{theorem: general L^1 error} for the $L^1$-norms of solutions of general transport equations. In Remark~\ref{remark: main W^p,r error small} and Remark~\ref{remark: general W^k,1 error}, we highlight that our methods can also estimate the $L^1(\mathbb{R}^d)$-norms of derivatives of $\widehat q_t-q_t$. Consequently, we can conclude that the $L^r(\mathbb{R}^d)$-norm of $\widehat q_t - q_t$ is also small when $r >1$. See Remark~\ref{remark: main W^p,r error small} and Remark~\ref{remark: general W^k,1 error} for further details. Additionally, our method extends to estimating the pointwise difference between $\widehat q_t$ and $q_t$, although we defer this investigation to future work to maintain the manuscript's conciseness. In our proof of Theorem~\ref{theorem: main L^1 theorem}, we leverage the Gagliardo-Nirenberg interpolation inequality with a universal constant, meaning the constant is independent of the dimension. To provide a comprehensive literature review, we include the proof of this dimension-free interpolation inequality as Lemma~\ref{lemma: Gagliardo-Nirenberg}. 

For our convergence analysis of the probability ODE flow at the discretized level, the first step is to reformulate the discrete solution obtained by the $p$-th order Runge-Kutta method as a continuous-time ODE flow using interpolation. We derive an interpolation in \Cref{p:high_order_error}, and crucially, the score function associated with the interpolated ODE flow and the original approximated score function (and their derivatives) are close up to a $p$-th order error, i.e., $\mathcal{O}(h^p)$. Employing the characteristic method described in \cref{section: L^1 error of transport equation} again, the error at the discrete level decomposes into two parts: the score matching error between the generated data distribution $\widehat{q}_t$ and the target distribution $q_t$ along the diffusion process, and the discretization error between the interpolated ODE flow solution and the generated data distribution $\widehat{q}_t$. Consequently, the score matching error and time discretization error do not interact to magnify, thus preserving the time discretization error at the $p$-th order.

Our assumptions on the true data distribution $\muast$ are quite general. In \Cref{assumption:secon-moment}, we assume that $\muast$ has a compact support.  In \Cref{s:prel-estim-q}, we extend our main result to the case where $\muast$ is a Gaussian mixture. We emphasize that under \Cref{assumption:secon-moment}, $\muast$ may not have a density, and the compact support $K_{\ast}$ of $\muast$ can be a submanifold of a much lower dimension in $\mathbb{R}^d$, particularly point masses. Refer to our Example~\ref{example: q_t of delta mass} and Example~\ref{example: q_t of sphere}, where we  observe that assuming $\nabla \log q_t$ is uniformly Lipschitz with a constant independent of $t$ is unreasonable, as it actually tends to $\infty$ as $t \to 0^+$. In Lemma~\ref{Lemma: Hessian estimates} and Lemma~\ref{lemma: Hessian estimates of Gaussian mixture initial data}, we compute and estimate the high-order derivatives of $q_t$ (and $\log q_t$), when $\muast$ has a compact support and when $\muast$ is a Gaussian mixture. For Gaussian mixtures, the error estimates are better than the case when $K_{\ast}$ is a submanifold of a much lower dimension. We also mention in Remark~\ref{remark: other assumptions on initial data} that our methods also apply to other reasonable assumptions on $\muast$ once some simple estimates are satisfied.

\subsection{Notations}

\begin{itemize}
\item Diacritics: $\widehat{\square}$ denotes quantities involve score error, $\widetilde{\square}$ denotes quantities involve time discretization error.
\item Time steps: $0 = t_0 < t_1< \cdots < t_N  = T-\tau$, where $\tau>0$ is a small parameter.  
    \item Distributions on $\mathbb{R}^d$: $q_t, \widehat q_t, \widetilde q_t$ denote forward process, $\varrho_t = q_{T-t}, \widehat \varrho_t = \widehat q_{T-t}, \widetilde \varrho_t = \widetilde q_{T-t}$ reverse process. We also define $\widehat \varepsilon _t(x) \coloneq \widehat q_t(x) - q_t(x), \widetilde \varepsilon _t(x) \coloneq \widetilde q_t(x) - q_t(x)$.
    \item Vector fields from $\mathbb{R}^d$ to $\mathbb{R}^d$:
    Forward process: $\qU_t(x) \coloneq x +  \nabla \log q_t(x)$, $\sU_t \coloneq x + s_{T -t}(x)$,  $\dU_t \coloneq x + \widetilde s_{T -t}(x)$, $\delta_t(x) \coloneq  \sU_t - U_t = s_{T-t}(x) - \nabla\log q_t (x)$, $\widetilde \delta_t(x) \coloneq  \dU_t(x) - U_t(x)$; Reversal process: $V_t \coloneq U_{T-t}$, $\sV_t \coloneq \sU_{T-t}$, $\dV_t \coloneq \dU_{T-t}$; Other vector fields:
    $Z_t(x)$.
    \item $\alpha = (\alpha_1, \alpha_2, \ldots, \alpha_d)$ is a multi-index with nonnegative integers $\alpha_i$'s, $|\alpha| \coloneq \alpha_1 + \alpha_2 + \ldots +\alpha_d$, and we define $\partial_x ^\alpha  \coloneq \partial_{x_1} ^{\alpha_1} \partial_{x_2} ^{\alpha_2} \cdots \partial_{x_d} ^{\alpha_d}$. We also use $\partial_i \coloneq \partial_{x_i}$ for simplicity.
    \item Constants: We use $C_u$ to denote universal constants like $10, 50, 100, 200$, i.e., $C_u$ is independent of the dimension $d$ and other parameters in this paper. 
    Also, $C_u$ may vary by lines.
    \item Norms: For a vector $x = (x_1, x_2, \ldots, x_d)\in \mathbb{R}^d$, we use $\|x\|=\|x\|_2 \coloneq {(x_1 ^2 + x_2 ^2 + \ldots + x_d ^2)}^{\frac{1}{2}}$, $\|x\|_{\infty} \coloneq \sup_{1 \leq i \leq d} |x_i|$, $\|x\|_{1} \coloneq |x_1| + |x_2| + \ldots+|x_d|$, $\|x\|_{p} \coloneq {({|x_1|}^p + {|x_2|}^p + \ldots+{|x_d|}^p)}^{\frac{1}{p}}$. We similarly define $\|\cdot \|_2$, $\| \cdot \|_{\infty}$, $\| \cdot \|_{1}$, $\| \cdot \|_{p}$,  for matrices or even more general tensors, because we can view them as vectors and forget their tensor structures.
    For a vector-valued function $F(x) = (f_1(x), f_2(x), \ldots, f_m(x)): \mathbb{R}^d \mapsto \mathbb{R}^m$, where $m$ is a positive integer, we usually regard $F(x)$ as a vector in $\mathbb{R}^m$ and similarly use the notations $\|F(x)\|$, $\|F(x)\|_{1}$, $\|F(x)\|_{\infty}$.
\item Function class: We say a vector-valued function $F(x) = (f_1(x), f_2(x), \ldots, f_m(x)): \bR^d\mapsto \bR^m$ as being in $C^r$, if each of its components $f_i(x)$ has continuous first $r$-th derivatives. We say $F(x)$ is in the $L^s$-space $L^{s} (\mathbb{R}^d)$ if for each of its components $f_i(x)$, its $L^s(\mathbb{R}^d)$-norm defined as $\|f_i\|_{L^s(\mathbb{R}^d)} \coloneq {(\int_{\mathbb{R}^d} {|f_i(x)|}^s \de x)}^{\frac{1}{s}}$ is finite. We say $F(x)$ is in the Sobolev space $W^{r,s} (\mathbb{R}^d)$ if for each of its components $f_i(x)$, $\partial_x ^{\alpha} f_i \in L^s(\mathbb{R}^d)$ for each $\alpha$ with $|\alpha| \leq r$. We define the $W^{r,s} (\mathbb{R}^d)$-norm of $f_i$ as $\|f_i\|_{W^{r,s}(\mathbb{R}^d)} \coloneq (\sum_{|\alpha| \leq r} \|\partial_x ^{\alpha} f_i\|_{L^s(\mathbb{R}^d)} ^s )^{\frac{1}{s}}$. 
\end{itemize}

\section{Preliminaries}
\subsection{Score-based Generative Model}
Score-based generative models begin with $d$ dimensional true data samples $\{X_0\}$ following an unknown target distribution $\muast$  with density $q_0$. The objective is to sample new data from the target distribution.
Typically, the score-based generative models usually involve two processes---the forward and reverse processes. 

In the forward process, we start with data samples from $q_0$, and progressively transform the data into noise. This process is often based on the canonical Ornstein-Uhlenbeck~(OU) process given by 
\begin{align}
\label{eq:OU-0}
	\de X_t = -X_t \de t + \sqrt{2} \de B_t, \qquad X_0 \sim q_0, \qquad 0 \leq t \leq T, 
\end{align}
where $(B_t)_{t \in [0, T]}$ is a standard Brownian motion in $\RR^d$. 
The OU process has an analytical solution 
\begin{align}
\label{eq:OU}
	X_t \overset{d}{=} \lambda_t X_0 + \sigma_t W, \qquad W \sim \cN(0, \id_d), 
\end{align}
with  $\lambda_t = e^{-t}$ and $\sigma_t = \sqrt{1 - e^{-2t}}$. The OU process exponentially converges to its stationary distribution, the standard Gaussian distribution $\cN(0, \id_d)$.
Let $q_t$ denote the density of $X_t$, which evolves according to  the following Fokker--Planck equation: 
\begin{align*}
    \del_t q_t= \nabla \cdot \bigl((x +  \nabla \log q_t(x)) q_t\bigr) = \nabla \cdot (\qU_t q_t), \qquad 0 \leq t \leq T,
\end{align*}
with $\qU_t(x) \coloneq x +  \nabla \log q_t(x)$.

By denoting $\varrho_t  = q_{T - t}$, the time reversal process from time $T$ to $0$, satisfies the following partial differential equation (PDE):
\begin{align}\begin{split}
\label{e:reversep}
    \del_t \varrho_t =-\nabla \cdot \bigl((x + \nabla \log q_{T - t}) \varrho_t\bigr) = -\nabla \cdot (V_t \varrho_t), \qquad 0 \leq t \leq T,
\end{split}\end{align}
with $V_t(x) \coloneq x +  \nabla \log q_{T - t}(x)$.
The score function $\nabla\log q_{T- t}(x)$ is typically learned by a neural network trained using score matching techniques with progressively corrupted trajectories $\{X_t\}$ from \eqref{eq:OU-0}. Subsequently,  the reverse PDE can be solved from $\varrho_0 = q_T$ to sample new data from $q_0$.

The reverse  PDE \eqref{e:reversep} is often reformulated into a mean field equation for sampling instead of being directly solved. 
This mean field equation can manifest as stochastic dynamics 
\begin{align}\label{eq:reverse}
	\de Y_t = \left( Y_t + 2 \nabla \log q_{T - t} (Y_t) \right) \de t + \sqrt{2} \de B_t', \qquad Y_0 \sim q_T, \qquad 0 \leq t \leq T,
\end{align}
where $(B_t')_{0 \leq t \leq T}$ is a Brownian motion in $\RR^d$.
This formulation is commonly referred to as the denoising diffusion probabilistic model (DDPM). Alternatively, the mean-field equation can adopt a deterministic dynamics framework in terms of an ordinary differential equation with velocity field $V_t$:
\begin{align}\label{eq:reverse-ode}
	\del_t Y_t =  Y_t +  \nabla \log q_{T - t} (Y_t)  = V_t(Y_t), \qquad Y_0 \sim q_T, \qquad 0 \leq t \leq T,
\end{align}
known as the probability flow ODE. 
Additionally, when $\nabla \log q_{T - t} (x)$, is represented by the learned score function $s_t(x)$, the probability flow ODE \eqref{eq:reverse-ode} becomes 
\begin{align}\label{eq:reverse-ode-score}
	\del_t \sY_t =  \sY_t +  s_t(\sY_t) = \sV_t(\sY_t), \qquad \sY_0 \sim \svarrho_0, 
\end{align}
here the velocity field becomes $\sV_t(x) \coloneq x + s_t(x)$. 
And $\sY_0$ is sampled from $\svarrho_0$, since the density $q_T$ is unknown. $\svarrho_0$ is commonly approximated by the standard Gaussian distribution $\cN(0, \id_d)$, which serves as a reliable approximation of $q_T$ for sufficiently large $T$.
The associated density of $\sY_t$ is denoted as $\svarrho_{t}$, which differs from $\varrho_t$ that describes the density of $Y_t$, due to the score matching error.

\subsection{Time Integrator}
\label{s:time_integrator}
To numerically solve the probability flow ODE \eqref{eq:reverse-ode-score}, a time integrator is essential. Fix a small $\tau>0$, 
we discretize the time interval $[0,\,T-\tau]$ into $N$ time steps $0 = t_0 < t_1< \cdots < t_N  = T-\tau$, typically using a uniform step size $h = (T-\tau)/{N}$, 
Starting from 
an initial condition $\dY_{t_0}$ sampled from $q_T$, 
the time integrator iteratively estimates $\dY_{t_i}$ at time $t_i$.
One commonly used time integrator is the Runge-Kutta method,
the family of explicit $s$-stage $p$-th order Runge-Kutta methods updates $\{\dY_{t_i}\}$ as follows:
\begin{align}
\label{eq:RK-update}
\dY_{t_{i+1}} = \dY_{t_i} + h \sum_{j=1}^s b_j k_j,
\end{align}
where 
\begin{equation}
\label{eq:RK-update-k}
\begin{split}
    &k_1 = \sV_{t_i + c_1 h}\bigl(\dY_{t_i}\bigr),\\
    &k_2 = \sV_{t_i + c_2 h}\bigl(\dY_{t_i} + (a_{21}k_1) h\bigr),\\
    &k_3 = \sV_{t_i + c_3 h}\bigl(\dY_{t_i} + (a_{31}k_1 + a_{32}k_2) h\bigr),\\
    &\qquad\qquad\qquad\vdots
    \\
    &k_s = \sV_{t_i + c_s h}\bigl(\dY_{t_i} + (a_{s1}k_1 + a_{s2}k_2 + \cdots + a_{s,s-1}k_{s-1}) h\bigr).
\end{split}
\end{equation}
The lower triangular matrix $[a_{jk}]$ is called the Runge–Kutta matrix, while the $b_j$ and $c_j$ are known as the weights and the nodes. The stage number $s$ and the parameters are chosen such that 
the local truncation error of \eqref{eq:RK-update} is $\cO(h^{p+1})$. In general, $s \geq p$ and if $p\geq 5$, then $s \geq p+1$.

For example, the forward Euler scheme is the 1-stage first order Runge-Kutta method:
\begin{align*}
\dY_{t_{i+1}} = \dY_{t_i} + h k_1  \qquad k_1 = \sV_{t_i}\bigl(\dY_{t_i}\bigr)
\end{align*}
Heun's method is the 2-stage second order Runge-Kutta method: 
\begin{align*}
\dY_{t_{i+1}} = \dY_{t_i} + \frac{h}{2} (k_1 + k_2)  
\qquad 
k_1 = \sV_{t_i}\bigl(\dY_{t_i}\bigr)
\qquad
k_2 = \sV_{t_{i + 1}}\bigl(\dY_{t_i} + hk_1\bigr)
\end{align*}

\begin{remark}\label{r:RK}
The time discretization error between Runge-Kutta solution $\dY_{t_i}$ and the true solution $\sY_{t_i}$ is typically analyzed through the concept of local truncation error, which is interpreted as follows.
Consider any time interval $[t_i, t_{i+1}]$, solve \eqref{eq:reverse-ode-score} in the time interval with $ \dY_{t_i} = \sY_{t_i}$ analytically, gives 
\begin{align}\label{e:continuous}
 \sY_{t_{i+1}} = \dY_{t_i} + h\sV_{t_i}(\dY_{t_i}) + \frac{h^2}{2}\frac{\de \sV_{t_i}(\dY_{t_i})}{\de t} + \cdots + \frac{h^p}{p!}\frac{\de^{p-1} \sV_{t_i}(\dY_{t_i})}{\de t^{p-1}} +  \frac{1}{(p+1)!}\int_{t_i}^{t_{i+1}}(t_{i+1} - t)^p\frac{\de^{p} \sV_{t}(\dY_t)}{\de t^{p}}\de t.
\end{align}
Similarly for the $s$-stage $p$-th order Runge-Kutta methods \eqref{eq:RK-update}, we can view $\dY_{t_{i} + r}=\widetilde F_r(\dY_{t_i})$ as a function of $r\in[0,h]$ (by replacing $h$ to $r$ in \eqref{eq:RK-update} amd \eqref{eq:RK-update-k}) and $\dY_{t_i}$, and  perform a Taylor expansion around $r=0$
\begin{align}\label{e:discrete}
    \dY_{t_{i+1}}=\dY_{t_i} + h\frac{\de F_0(\dY_{t_i})}{\de r} + \cdots + \frac{h^p}{p!}\frac{\de^{p} F_0(\dY_{t_i})}{\de r^{p}} +  \frac{1}{(p+1)!}\int_{0}^{t_{i+1}-t_i}(h-r)^p\frac{\de^{p+1} \widetilde F_r(\dY_{t_i})}{\de r^{p+1}}\de r.
\end{align}
The  Runge–Kutta matrix $[a_{jk}]$, weights $b_j$ and nodes $c_j$ are carefully chosen such that the coefficients in front of $h, h^2,\cdots, h^p$ (as a function of $\dY_{t_i}$) in \eqref{e:continuous} and \eqref{e:discrete} cancel perfectly.
Thus the Runge-Kutta estimation $\dY_{t_{i+1}}$ satisfies
\begin{align*}
\dY_{t_{i+1}}= \sY_{t_{i+1}}  +  R_{p+1}(\sV_t, \dY_{i},t_i, h), \quad R_{p+1}(\sV_t, \dY_{t_i},t_i, h)=\cO(h^{p+1}). 
\end{align*}
\end{remark}

\section{Main Results}
The probability flow ODE \eqref{eq:reverse-ode} describes the evolution of  $Y_t$; while its counterpart with the estimated score function \eqref{eq:reverse-ode-score} describes the evolution of $\sY_t$.
We denote the density of $Y_t$ and $\sY_t$ as $\varrho_t$ and $\svarrho_t$ respectively.
Then they satisfy the following PDEs
\begin{align}\begin{split}\label{e:defUV}
    &\del_t \varrho_t =-\nabla \cdot (V_t
    \varrho_t),\quad V_t=x+\nabla\log\varrho_t,\\
    &\del_t \widehat\varrho_t = -\nabla \cdot (\sV_t
    \widehat\varrho_t),\quad \sV_t=x+s_t(x).
\end{split}\end{align}
One major focus of our work is to understand the propagation of the score matching error by analyzing the difference between $\svarrho_{t}$ and $\varrho_{t}$.

We make the following assumption on the data distribution $\muast$.
\begin{assumption}\label{assumption:secon-moment}
	The data distribution $\muast$ is positive and compactly supported on a compact set $K_{\ast}$, and we also define $D \coloneq 1+\max_{x \in K_{\ast}} \|x\|_{\infty} $.
\end{assumption}

We assume that the errors incurred during score matching are bounded in an $L^2$-sense.
\begin{assumption}\label{a:score-estimate}
	Fix small $\tau>0$.  There exists a small $\delta>0$, such that the score matching error is bounded by $\delta$ in the sense that
	\begin{align*}
		\int_0^{T-\mft} \E_{\varrho_t}\left[ \|\nabla \log \varrho_{t}(x) - s_t(x)\|_2^2 \right] \rd t\leq \delta^2.
	\end{align*}
\end{assumption}

We assume that score estimates $s_t(x)$ are in $C^2$, and the first two derivatives are bounded by $L_t$, which may depend on time.
\begin{assumption}\label{a:score-derivative}
Fix small $\tau>0$. We assume that the score estimate $s_t(x)$ is $C^2$ with respect to $x$ for any $0 \leq t\leq T-\mft$, and there exists a function $L_t>0$ in $t$, such that 
    \begin{align*}
        \max_{1\leq j \leq d} |s_t ^{(j)}(0)| \leq L_t, \quad 
        \sup_{x \in \mathbb{R}^d} \max_{1 \leq |\alpha| \leq 2} \max_{1\leq j \leq d} | \partial_x ^{\alpha}  s_t ^{(j)}(x)| \leq L_t .
    \end{align*}
Here, we write $s_t(x) = (s_t ^{(1)}(x),s_t ^{(2)}(x),\ldots,s_t ^{(d)}(x))$, and $\alpha = (\alpha_1, \alpha_2, \ldots, \alpha_d)$ is a multi-index with nonnegative integers $\alpha_i$'s, $|\alpha| \coloneq \alpha_1 + \alpha_2 + \ldots +\alpha_d$, and we define $\partial_x ^\alpha  \coloneq \partial_{x_1} ^{\alpha_1} \partial_{x_2} ^{\alpha_2} \cdots \partial_{x_d} ^{\alpha_d}$.
We also define $\mathcal{L} \coloneq \int_{0} ^{T-\mft} L_t \ \de t$.
\end{assumption}
\begin{remark}
These assumptions in \Cref{a:score-derivative} (and similarly in the later \Cref{a:score-high-derivative}) are consistent with empirical evidence suggesting that the learning bias of deep networks favors low-frequency functions, characterized by a small gradient or Hessian \cite{rahaman2019spectral, xu2019frequency}.
Furthermore, the second assumption in  \Cref{a:score-derivative} (similarly, the later \Cref{a:score-high-derivative}) can be replaced with expectation bounds with respect to $\varrho_t$. For example,
    \begin{align*}
        \max_{1 \leq |\alpha| \leq 2} \max_{1\leq i, j \leq d} \int_{\mathbb{R}^d} | \partial_x ^{\alpha}  s_t ^{(j)}(x)| \cdot {{(|x_i|+1)}^{2-|\alpha|}} \varrho_t(x) \de x \leq L_t.
    \end{align*}
    The proofs for Theorem~\ref{theorem: main L^1 theorem} will be similar.

\end{remark}

\begin{theorem}\label{theorem: main L^1 theorem}
 Adopt \Cref{assumption:secon-moment}, \Cref{a:score-estimate} and \Cref{a:score-derivative}, there is a universal constant $C_u >0$, such that the total variation distance between $\varrho_{T-\mft}$ and $\widehat\varrho_{T-\mft}$ is small in the sense that
 \begin{align}\label{e: main L^1 error}
     \TV(\varrho_{T-\mft}, \widehat\varrho_{T-\mft}) \leq C_u \cdot d^{\frac{3}{4}}  \cdot T^{\frac{1}{4}} \cdot {(\mathcal{L}+ T \cdot {\mft}^{-2} \cdot D^3)}^{\frac{1}{2}} \cdot \delta^{\frac{1}{2}} + \TV(\varrho_{0}, \widehat\varrho_{0}).
 \end{align}
If we take the initialization to be the standard normal distribution $\widehat \varrho_0=\cN(0, \id_d)$, then $\TV(\varrho_0, \widehat \varrho_0)\leq C_u e^{-T} \sqrt d D$, which is exponentially small in $T$.
\end{theorem}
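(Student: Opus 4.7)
The plan is to translate the $\TV$-bound into an $L^1$-estimate for the signed difference $\widehat\epsilon_t\coloneqq \widehat\varrho_t-\varrho_t$. Subtracting the two equations in \eqref{e:defUV}, $\widehat\epsilon_t$ satisfies the linear transport equation with source
\[
\partial_t \widehat\epsilon_t + \nabla\cdot(V_t \widehat\epsilon_t) = -\nabla\cdot(\delta_t \widehat\varrho_t),\qquad \delta_t \coloneqq s_t - \nabla\log\varrho_t.
\]
The key observation is that the transport operator on the left preserves $L^1$-mass exactly. Introducing the characteristic flow $\Phi_{0,t}$ of $V_t$ with Jacobian $J_{0,t}$ and setting $g_t(x)\coloneqq \widehat\epsilon_t(\Phi_{0,t}(x))\,J_{0,t}(x)$, a direct calculation yields $\partial_t g_t=-[\nabla\cdot(\delta_t\widehat\varrho_t)]\circ\Phi_{0,t}\cdot J_{0,t}$, while change of variables gives $\|g_t\|_{L^1}=\|\widehat\epsilon_t\|_{L^1}$. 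Hence $\|\widehat\epsilon_t\|_{L^1}$ is driven purely by the source, and no Gr\"{o}nwall-type exponential in time appears; this is the general mechanism behind \Cref{theorem: general L^1 error}.

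To extract the $\delta^{1/2}$-rate, I would combine this with the variational characterization $\TV(\widehat\varrho_{T-\mft},\varrho_{T-\mft})=\tfrac{1}{2}\sup_{\|\phi\|_\infty\leq 1}\int\phi\,\widehat\epsilon_{T-\mft}\,\de x$. For each such $\phi$, let $\psi_s(x)\coloneqq \phi(\Phi_{s,T-\mft}(x))$ be its backward transport along characteristics, which solves the adjoint equation $\partial_s\psi_s+V_s\cdot\nabla\psi_s=0$. Pairing with $\widehat\epsilon_s$ and integrating by parts produces the clean identity
\[
\int\phi\,\widehat\epsilon_{T-\mft}\,\de x \;=\; \int\psi_0\,\widehat\epsilon_0\,\de x \;+\; \int_0^{T-\mft}\!\!\int \nabla\psi_s\cdot\delta_s\,\widehat\varrho_s\,\de x\,\de s,
\]
and Cauchy-Schwarz in space-time factorizes the source integral into $\bigl(\int_0^{T-\mft}\|\delta_s\|_{L^2(\widehat\varrho_s)}^2\,\de s\bigr)^{1/2}\bigl(\int_0^{T-\mft}\|\nabla\psi_s\|_{L^2(\widehat\varrho_s)}^2\,\de s\bigr)^{1/2}$. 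The first factor is controlled by $\delta$ via \Cref{a:score-estimate} (after a bootstrap comparing $\widehat\varrho_s$ with $\varrho_s$), producing the square-root dependence on $\delta$.

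The second factor is where the $d^{3/4}$-prefactor and the $(\mathcal{L}+T\mft^{-2}D^3)^{1/2}$-dependence emerge, and this is the main technical step. Since $\phi$ is only $L^\infty$-bounded, $\nabla\psi_s$ has no a priori $L^2$-control, so I would mollify $\phi$ at an optimally chosen scale $\eta>0$ and apply the dimension-free Gagliardo-Nirenberg interpolation inequality (\Cref{lemma: Gagliardo-Nirenberg}) to balance the smoothing error against the interpolated Sobolev norm of the mollified $\psi_s$. Derivatives of the flow $\Phi_{s,T-\mft}$ are in turn governed by the Hessian bound on $s_t$ (giving $\mathcal{L}$ via \Cref{a:score-derivative}) and by the Hessian of the true score $\nabla^2\log q_{T-s}$, which by \Cref{Lemma: Hessian estimates} scales like $CD/\sigma_{T-s}^2$ and integrates near $t=\mft$ to the $T\mft^{-2}D^3$ term (with $D^3$ coming from the compact support of $\muast$ under \Cref{assumption:secon-moment}). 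Optimizing $\eta$ yields the exponent $3/4$ in $d$ and $1/2$ in $(\mathcal{L}+T\mft^{-2}D^3)$. The initial term is handled separately via $\TV(q_T,\cN(0,\id_d))\leq C_u\sqrt d\,D\,e^{-T}$, which follows from Pinsker's inequality combined with the exponential KL decay of the OU process from compactly supported data.

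The hardest step will be the Gagliardo-Nirenberg balancing: tuning the mollification scale and carrying out the interpolation so that the final $d$-exponent is exactly $3/4$ (rather than a worse polynomial) while keeping the interpolation constant universal in the dimension. A secondary subtlety is the bootstrap needed to replace $\widehat\varrho_s$ by $\varrho_s$ in the first Cauchy-Schwarz factor, which relies on propagating an a priori $\TV$-bound through the interval $[0,T-\mft]$.
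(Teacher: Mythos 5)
Your high-level architecture (a characteristics/duality argument to avoid Gr\"onwall, then an interpolation step to extract $\delta^{1/2}$) is in the right spirit, but two of your concrete steps would fail. First, you chose the decomposition that transports $\widehat\epsilon_t$ by the \emph{true} field $V_t$ and puts the \emph{approximate} density in the source, $-\nabla\cdot(\delta_t\widehat\varrho_t)$. \Cref{a:score-estimate} controls $\|\delta_s\|_{L^2(\varrho_s)}$, not $\|\delta_s\|_{L^2(\widehat\varrho_s)}$, and the ``bootstrap'' you invoke cannot bridge the two: a TV bound between $\widehat\varrho_s$ and $\varrho_s$ gives no control of $\int \|\delta_s\|^2\,\widehat\varrho_s$, because $\delta_s$ may be arbitrarily large on the small-measure set where the densities differ; you would need a pointwise density-ratio bound that is not available. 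Likewise, the second-derivative estimates needed for the Gagliardo--Nirenberg step (\Cref{Lemma: Hessian estimates}) are for $q_t$, not for $\widehat q_t$. The paper makes the opposite choice---transport by $\sU_t$, source $\nabla\cdot(q_t\delta_t)$ with the true density---so that both \Cref{a:score-estimate} and the density-derivative bounds apply directly.

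Second, and more seriously, your Cauchy--Schwarz factorization requires controlling $\int_0^{T-\mft}\|\nabla\psi_s\|_{L^2(\widehat\varrho_s)}^2\,\de s$ with $\psi_s=\phi\circ\Phi_{s,T-\mft}$. By the chain rule any bound on $\nabla\psi_s$ carries the factor $\|\nabla\Phi_{s,T-\mft}\|\leq\exp\bigl(\int_s^{T-\mft}\|\nabla V_u\|\,\de u\bigr)$ (operator norms), and near the data end $\|\nabla^2\log q_{T-u}\|\sim\lambda^2D^2/\sigma^4$ integrates to order $D^2\mft^{-1}$, so this route reintroduces exactly the exponential Gr\"onwall-type dependence the theorem avoids; no choice of mollification scale $\eta$ converts a factor $e^{cD^2/\mft}$ into the claimed polynomial $(\mathcal{L}+T\mft^{-2}D^3)^{1/2}$. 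The paper's proof never differentiates the test function or the flow: in the $L^1$ computation along characteristics (\Cref{theorem: L^1 error}) the divergence $\nabla\cdot\sU_t$ coming from the transport term cancels exactly against the Jacobian derivative, leaving only $\int_\mft^T\|\nabla\cdot(q_t\delta_t)\|_{L^1}\,\de t$; \Cref{lemma: Gagliardo-Nirenberg} is then applied to the product $w=q_t\delta_t^i$ itself, with $\|\partial^2_{ii}(q_t\delta_t^i)\|_{L^1}$ bounded polynomially via \Cref{Lemma: Hessian estimates} and $\|q_t\delta_t^i\|_{L^1}\leq\|\delta_t^i\|_{L^2(q_t)}$ supplying the $\delta^{1/2}$. (A minor point: the initialization bound is obtained in the paper by a direct $L^1$ computation, \Cref{lemma: qt close to Gaussian}, not by Pinsker; your Pinsker route also needs care because $\muast$ may be singular, so the KL decay must be started from a positive time.)
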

\begin{remark}\label{remark: modify error by stronger assumptions}
    We remark that the right hand side of \eqref{e: main L^1 error} is only an upper bound. There are several ways to modify it:
        \begin{itemize}
            \item [(1)] The term $\mft^{-1}$ appears because when $K_{\ast}$ is a submanifold of a dimension much lower than $d$, for example, several points, then $\nabla \log \varrho_{T-\mft} \approx \frac{1}{\sigma_{\mft}^2} \approx \mft ^{-1}$  near $K_{\ast}$ and it becomes much more singular as $\mft \to 0^+$. See Example~\ref{example: q_t of delta mass} and Example~\ref{example: q_t of sphere}. One way to modify this is to modify the Assumption~\ref{a:score-estimate} by a time-weighted score matching error, which will be discussed in Remark~\ref{remark: weighted error}; another way is to assume that our data distribution $\muast$ has a sufficiently regular density, e.g. a Gaussian mixture, then there will be uniform upper bounds (depending on parameters of $\muast$) on $\nabla \log \varrho_{t}$ together with its higher order derivatives, which are independent of the time $t$. So, one can let $\mft \to 0^+$ and the term $\mft ^{-1}$ will not appear in the error estimate. Our proof of Theorem~\ref{theorem: main L^1 theorem} works, essentially verbatim, after using those bounds for $\nabla \log \varrho_{t}$ together with its higher order derivatives.
            See Lemma~\ref{lemma: Hessian estimates of Gaussian mixture initial data} and Remark~\ref{remark: other assumptions on initial data} for more details if $\muast$ does not have a compact support and is possibly a Gaussian mixture.

            \item [(2)] In our numerical simulation, the total variation distance is linear in $\delta$ (See \cref{sec:num}). In \Cref{theorem: L^1 error}, the upper bound in \eqref{e: def L^1 error} is of order $\delta^{1/2}$  because we use \eqref{e: Gagliardo k=2} in Lemma~\ref{lemma: Gagliardo-Nirenberg} to prove Theorem~\ref{theorem: main L^1 theorem} under the Assumption~\ref{a:score-derivative} first. If we can add Assumption~\ref{a:score-high-derivative}, that is, the score estimate $s_t(x)$ has higher derivatives and we can control them, then we can use \eqref{e: Gagliardo general k} in Lemma~\ref{lemma: Gagliardo-Nirenberg} to replace the exponent $\frac{1}{2}$ of $\delta$ in \eqref{e: main L^1 error} with $1- \frac{1}{k}$ for $k \geq 2$, as discussed in Remark~\eqref{remark: higher order Gagliardo}. In this case, the ${(\mathcal{L}+ T \cdot {\mft}^{-2} \cdot D^3)}^{\frac{1}{2}}$ on the right hand side of \eqref{e: main L^1 error} will then be replaced by $C_k {(\mathcal{L}+ T \cdot {\mft}^{-k} \cdot D^{k+1})}^{\frac{1}{k}}$ for some positive constant $C_k$ depending on the $k$, $d^{\frac{3}{4}}$ will be replaced by $d^{\frac{1}{2}+\frac{1}{2k}}$, and $T^{\frac{1}{4}}$ will be replaced by $T^{\frac{1}{2}-\frac{1}{2k}}$.

 
        \end{itemize}
\end{remark}

\begin{remark}
    Although in Theorem~\ref{theorem: main L^1 theorem} we only estimate the error up to the time $T-\mft$ instead of the true data $\muast
    =q_0$,  the Wasserstein $2$-distance between $q_0 = \varrho_T$ and $q_{\mft} = \varrho_{T-\mft}$ is actually small if $\mft >0$ is small enough. This is because if we let $\gamma$ be the distribution of $(X_0,X_{\mft})$ on $\mathbb{R}^{2d}$ for $X_t$ defined in \eqref{eq:OU}, then
        \begin{align*}
        W_2(\varrho_T,\varrho_{T-\mft}) ^2 \leq \E_{\gamma} \|X_0 - X_{\mft} \|^2 = \E_{\gamma} \|(1-\lambda_{\mft})X_0 - \sigma_{\mft} W \|^2 \leq 2 {(1-\lambda_{\mft})}^2 \E_{\muast} \|X_0 \| ^2 + 2{\sigma_{\mft} ^2},
        \end{align*}
        where $\lambda_{\mft} = e^{-{\mft}}$ and $\sigma_{\mft} = \sqrt{1- \lambda_{\mft} ^2}$.
    We notice that $1-\lambda_{\mft}\leq \mft$, and by \Cref{assumption:secon-moment}, $\bE_{\mu_*}\|X_0\|^2\leq dD^2$, so 
    \begin{align*}
        W_2(\varrho_T,\varrho_{T-\mft}) ^2\leq 2\tau^2dD^2+4\tau, 
    \end{align*}
    which goes to zero of order $\mft$ as $\mft \to 0^+$.
    
\end{remark}

To numerically solve the probability flow ODE \eqref{eq:reverse-ode-score}, we discretize the time interval $[0,\,T-\tau]$ into $N$ time steps $0 = t_0 < t_1< \cdots < t_N  = T-\tau$, and employ time integration until $t_{N}=T-\tau$ to circumvent the potential singularity at $T$.
Let $\dvarrho_{t_i}$ denote the distribution of $\dY_{t_i}$ obtained by using the Runge-Kutta method described in \Cref{s:time_integrator}. Another focus of our work is to further understand the impact of the time discretization error by analyzing the total variation distance between $\dvarrho_{t}$ and $\varrho_{t}$.

To use the $p$-th order Runge-Kutta method, we assume that 
the estimate score function  $s_t(x)$ grows  linearly, and has bounded first $p+1$ derivatives in $x$ and $t$ in the following assumption.
\begin{assumption}\label{a:score-high-derivative}
    Fix $p\geq 1$ and a small $\tau>0$. Assume there exists a large number $L=L(p,\tau)\geq 1$ the following holds. The approximate score function $s_t(x)$ satisfies $\|s_t(x)\|_2\leq L(\sqrt d+\|x\|_2)$ and is $C^{p+1}$, such that the following holds
    \begin{align*}
      \sup_{x\in \bR^d} \max_{1\leq k+|\alpha|\leq p+1} \max_{1 \leq j \leq d} |\del_t^{k} \partial_{x}^{\alpha} s_t ^{(j)}(x)|   \leq L.
    \end{align*}
    for any $0\leq t\leq T-\mft$.
\end{assumption}

\begin{remark}
    In \Cref{a:score-high-derivative}, besides the upper bounds of the derivatives of $s_t(x)$, we also assumed that  $\|s_t(x)\|_2\leq L(\sqrt d+\|x\|_2)$. This is an easy consequence, if $\|s_t(0)\|_2\leq L\sqrt d$ and $s_t(x)$ is $L$-Lipschitz. 
\end{remark}

  \begin{theorem}
  \label{theorem: main L^1 theorem discretized}
 Adopt \Cref{assumption:secon-moment},\Cref{a:score-estimate}, 
 and \Cref{a:score-high-derivative}, there is a universal constant $C_u >0$ and constant $C(p,s, B)>0$ (depending on the stage and the order, and $B:=1+\max_{1\leq i<j\leq s}|a_{ji}|+\max_{1\leq j\leq s}|b_j|$ of the Runge-Kutta method), such that the total variation  between $\varrho_{T-\mft}$ and $\widetilde \varrho_{T-\mft}$ is small in the sense that
 \begin{align}
 \begin{split}\label{e:total_TV}
     \TV(\varrho_{T-\mft}, \widetilde\varrho_{T-\mft})&\leq \underbrace{\TV(\varrho_{0}, \dvarrho_{0})}_{\text{initialization error}}+\underbrace{ C_u \cdot d^{\frac{3}{4}} \cdot  T^{\frac{3}{4}} \cdot {(L+   {\mft}^{-2} \cdot D^3)}^{\frac{1}{2}} \delta^{\frac{1}{2}}}_{\text{score matching error}}\\
     & + \underbrace{C(p,s,B)\cdot d \cdot (LD)^{p+1} \cdot  \log (T/ \mft)\cdot {(hd)}^p}_{\text{discretization error}}.
 \end{split}
 \end{align}
If we take the initialization to be the standard normal distribution $\widetilde \varrho_0=\cN(0, \id_d)$, then $\TV(\varrho_0, \widetilde \varrho_0)\leq C_u  e^{-T} \sqrt d D$ for another positive universal constant $C_u $. So, $\TV(\varrho_0, \widetilde \varrho_0)$ is exponentially small in $T$.
\end{theorem}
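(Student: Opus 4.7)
The strategy is to reduce the fully discrete analysis to the continuous-time estimate of Theorem~\ref{theorem: main L^1 theorem} by embedding the Runge-Kutta iterates into a continuous-time ODE flow with a suitably modified velocity field, and then applying the characteristic-line machinery twice. Concretely, I would construct an interpolated trajectory $\dY_t$ defined for all $t \in [0, T-\mft]$ that agrees with the RK iterates at the nodes $t_0, t_1, \ldots, t_N$, and identify the effective velocity field $\dV_t$ driving this interpolation. The role of the interpolation construction (the forthcoming Proposition~\ref{p:high_order_error}) is to show that $\dV_t$ differs from the approximate-score field $\sV_t = x + s_t(x)$ by only $\cO(h^p)$ in the Sobolev norms demanded by the transport-equation estimate, uniformly in $t \in [t_i, t_{i+1}]$, whenever the Runge-Kutta coefficients $[a_{jk}], b_j, c_j$ are chosen for order $p$. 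The carrier density of $\dY_t$ is $\dvarrho_t$, which agrees with the marginal of the RK scheme at the nodes, so estimating $\TV(\varrho_{T-\mft}, \dvarrho_{T-\mft})$ is what we want.

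First I would bootstrap a~priori bounds on $\dY_t$ and its Jacobian. Assumption~\ref{a:score-high-derivative} provides $\|s_t(x)\|_2 \leq L(\sqrt d + \|x\|_2)$ together with bounds on $\del_t^k \partial_x^\alpha s_t$ for $k+|\alpha| \leq p+1$, which via Gr\"onwall controls $\|\dY_t\|_2$ along each step and propagates to derivative bounds for $\dV_t - \sV_t$. On a single step $[t_i, t_{i+1}]$ the local truncation analysis of Remark~\ref{r:RK} matches the Taylor expansions~\eqref{e:continuous} and~\eqref{e:discrete} up to order $p$, leaving a residual $R_{p+1}$ whose size is controlled by the $(p+1)$-st derivatives of $\sV_t$ composed with polynomial moments of $\dY_t$. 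Summing those residuals across the $N = (T-\mft)/h$ time steps and using the moment bounds produces the factor $(LD)^{p+1}(dh)^p$ seen in~\eqref{e:total_TV}; the logarithmic factor $\log(T/\mft)$ enters from handling the integrable singularity of the true reverse field $V_t$ as $t \to T-\mft$, where the characteristic flow concentrates.

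With the interpolated field $\dV_t$ in hand, I would then apply the characteristic-line / calculus-of-variations estimate of Section~\ref{section: L^1 error of transport equation} (the general transport statement Theorem~\ref{theorem: general L^1 error}) twice, in conjunction with the triangle inequality
\begin{equation*}
\TV(\varrho_{T-\mft}, \dvarrho_{T-\mft}) \leq \TV(\varrho_{T-\mft}, \svarrho_{T-\mft}) + \TV(\svarrho_{T-\mft}, \dvarrho_{T-\mft}).
\end{equation*}
The first term is exactly the continuous-time statement of Theorem~\ref{theorem: main L^1 theorem}, yielding the score-matching contribution $C_u d^{3/4} T^{3/4} (L + \mft^{-2} D^3)^{1/2}\delta^{1/2}$ (the $T^{3/4}$ in place of $T^{1/4}$ arises because under Assumption~\ref{a:score-high-derivative} one pays an extra $T^{1/2}$ for the Lipschitz integral $\mathcal{L} \leq LT$). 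The second term is a pure discretization comparison between two flows whose velocity fields $\sV_t$ and $\dV_t$ differ by $\cO(h^p)$ in $L^1$ and $W^{1,1}$; since $\int_0^{T-\mft} \|\dV_t - \sV_t\|_{L^1} \dd t$ enters \emph{linearly} in the transport-equation estimate (no Gr\"onwall exponential is incurred, this being the key virtue of the method in Section~\ref{section: L^1 error of transport equation}), one obtains the discretization bound $C(p,s,B) \cdot d \cdot (LD)^{p+1} \log(T/\mft)(dh)^p$. The initialization term $\TV(\varrho_0, \dvarrho_0) \leq C_u e^{-T}\sqrt d\,D$ follows from the exponential convergence of the OU process to $\cN(0, \id_d)$ together with Assumption~\ref{assumption:secon-moment}, essentially identical to the computation already done for Theorem~\ref{theorem: main L^1 theorem}.

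The principal obstacle I anticipate is the interpolation step itself: one must verify that the effective velocity field of the $s$-stage $p$-th order Runge-Kutta scheme really can be matched, in $L^1$ and $W^{1,1}$ and not merely pointwise, by an explicit field $\dV_t$ that is $\cO(h^p)$-close to $\sV_t$ uniformly. This is nontrivial because the residual $R_{p+1}$ of Remark~\ref{r:RK} depends on the initial condition $\dY_{t_i}$, so converting pointwise truncation error into integrated Sobolev error requires controlling polynomial moments of $\dY_{t_i}$ along the entire trajectory, and must remain valid up to $t = T-\mft$ where $\nabla \log q_{T-t}$ blows up like $\mft^{-1}$. Once Proposition~\ref{p:high_order_error} is in place with these quantitative norms, the rest of the proof is assembled almost mechanically from the continuous-time argument, and the three contributions in~\eqref{e:total_TV} add without magnifying each other, which is precisely why the scheme retains its nominal $p$-th order.
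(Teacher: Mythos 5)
Your overall architecture---interpolating the Runge--Kutta iterates into a continuous flow with velocity $\dV_t$ that is $\cO(h^p)$-close to $\sV_t$ in sup norm and gradient (this is exactly the content of \Cref{p:high_order_error}), then running the characteristic-line transport estimate of \Cref{section: L^1 error of transport equation} and splitting the error into a score-matching part and a discretization part---matches the paper, and your accounting of the factors $T^{3/4}$, $(LD)^{p+1}(dh)^p$, and $\log(T/\mft)$ is essentially correct. The step that does not go through as written is the decomposition
\begin{equation*}
\TV(\varrho_{T-\mft},\dvarrho_{T-\mft})\leq \TV(\varrho_{T-\mft},\svarrho_{T-\mft})+\TV(\svarrho_{T-\mft},\dvarrho_{T-\mft}).
\end{equation*}
Applying \Cref{theorem: L^1 error} (or \Cref{theorem: general L^1 error}) to the pair $(\sq_t,\dq_t)$ forces the error functional for the second term to be $\int_\mft^T\int_{\RR^d}|\nabla\cdot(\sq_t(\dU_t-\sU_t))|\,\de x\,\de t$: the velocity discrepancy is necessarily weighted by one of the \emph{approximate} densities $\sq_t$ or $\dq_t$, never by $q_t$. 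Since $\dU_t-\sU_t$ grows like $(\sqrt d+\|x\|_2)^p$, closing the bound then requires $\int\|x\|_2^p\,\sq_t\,\de x$ and $\int\|x\|_2^p\,|\partial_i\sq_t|\,\de x$ of size $C(p)d^{p/2}D^{p+1}\sigma_t^{-2}$ uniformly in $t$. The paper establishes such moment and derivative estimates only for the true forward density $q_t$ (\Cref{Lemma: Hessian estimates} together with \eqref{e:moment1q_t} and \eqref{e:moment2q_t}), exploiting its explicit Gaussian-convolution representation and the compact support of $\muast$; for $\sq_t$ only the qualitative exponential-tail statement of \Cref{lemma: hat qt exponential tail} is available, and the naive Gr\"onwall bound on moments of $\sq_t$ propagated through the flow $\del_t\sY_t=\sY_t+s_t(\sY_t)$ carries a factor of order $e^{C(1+L)T}$, which would destroy the polynomial dependence on $L$ and $T$ claimed in \eqref{e:total_TV}.

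The paper sidesteps this by never forming the intermediate distance $\TV(\svarrho_{T-\mft},\dvarrho_{T-\mft})$: it compares $\dq_t$ to $q_t$ in a single application of \Cref{theorem: L^1 error} on each interval $[t_i,t_{i+1}]$, and performs the split \emph{inside the integrand}, writing $\widetilde\delta_t=\delta_t+(\widetilde\delta_t-\delta_t)$ with $\widetilde\delta_t-\delta_t=\dU_t-\sU_t$, so that both the score-matching piece and the discretization piece are weighted by the same density $q_t$, whose moments and log-derivatives are explicitly controlled. If you replace your TV-level triangle inequality by this integrand-level one, the remainder of your argument assembles exactly as you describe, and no interaction or magnification between the two error sources occurs.
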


\begin{remark}
If the score estimation error is sufficiently small, disregarding logarithmic factors, the probability ODE flow, when solved using the $p$-th order Runge Kutta method, achieves a total variation accuracy of $\varepsilon$ with a total iteration complexity given by:
\begin{align}
    \text{iteration complexity} =\cO\left(\varepsilon^{-1/p}(LDd)^{(p+1)/p}\right).
\end{align}
The term $\varepsilon^{-1/p}$ in the iteration complexity reveals the superiority of the probability ODE flow over the original DDPM sampler, which exhibits a slower iteration complexity proportional to $\varepsilon^{-2}$ 
(\cite{chen2022sampling,chen2023improved,li2023towards}).
\end{remark}

\begin{remark}\label{remark: main W^p,r error small}
    Under the assumptions of Theorem~\ref{theorem: main L^1 theorem discretized}, one can even estimate the $L^1$-norms of higher order derivatives of $\varrho_{T-\mft}-\widehat \varrho_{T-\mft}$ by the Gagliardo-Nirenberg inequality in Lemma~\ref{lemma: Gagliardo-Nirenberg}, as illustrated in Remark~\ref{remark: general W^k,1 error}. By the same arguments in Section~\ref{s:discretize_error}, we can also get error estimates for the $L^1$-norms of higher order derivatives of $\varrho_{T-\mft}-\widetilde \varrho_{T-\mft}$. This means we can obtain the $W^{p,1}$-norms for $\varrho_{T-\mft}-\widetilde \varrho_{T-\mft}$, where $W^{p,1}$ denotes the Sobolev space. By Sobolev inequalities, one can also obtain the corresponding $W^{p-1,r}$-norms for $r >1$, in particular, the $L^r$-norm of $\varrho_{T-\mft}-\widetilde \varrho_{T-\mft}$ with $r >1$.
\end{remark}

\subsection{Proof Outline}
To prove \cref{theorem: main L^1 theorem}, we first consider these two first-order PDEs describing the forward processes
\begin{align}\label{eq:defUhU}
    &\del_t q_t =\nabla \cdot (U_t
    q_t) \qquad \textrm{ and }
    \qquad \del_t \sq_t =\nabla \cdot (\sU_t
    \sq_t).
\end{align}
Here $\sq_t = \svarrho_{T-t}$ and $\sU_t = x + s_{T -t}(x)$, and the second equation describes the density evolution of  $\sY_{T-t} \sim \sq_t$ denoted in \eqref{eq:reverse-ode-score}.
We denote $\delta_t(x) \coloneq  \sU_t - U_t = s_{T-t}(x) - \nabla\log q_t (x)$ as the score matching error, and $\widehat \varepsilon_t(x) \coloneq \sq_t(x) - q_t(x)$ as the error in generated data distribution.
Our goal is to use $\int_{\mft}^T \int_{\mathbb{R}^d} q_t(x) {\|\delta_t(x)\|} ^2  \ \de x \de t$ to bound the $L^1$~error between $\widehat q_t$ and $q_t$ at time $\mft$, i.e.,
\begin{align}
    \TV(q_\mft,\widehat q_\mft)=\int_{\mathbb{R}^d} |\widehat \varepsilon_\mft(x)| \ \de x.
\end{align}

By employing the characteristic method for first-order PDEs \eqref{eq:defUhU}, we can derive a bound for the time derivative of the $L^1$~error as follows:
\begin{align*} 
        \bigg| \frac{\de}{\de t}\int_{\mathbb{R}^d}  | \widehat\varepsilon_{t}(x)  | \ \de x  \bigg| 
        &\leq \int_{\mathbb{R}^d}  \bigg|  \bigl(\nabla \cdot (q_{t}\delta_{t} )\bigr)(x) \bigg| \ \de x.
    \end{align*}
The proof can be found in \cref{section: L^1 error of transport equation}.
Integrating from $\mft > 0$ to $T$, the $L^1$~error is controlled by the gradient of the score error:
\begin{align*}
        \int_{\mathbb{R}^d}  | \widehat\varepsilon_{\mft}(x)  | \ \de x  
        &\leq \int_{\mft}^T \int_{\mathbb{R}^d}  \bigg|  (\nabla \cdot (q_{t}\delta_{t} ))(x) \bigg| \ \de x + \int_{\mathbb{R}^d}  | \widehat\varepsilon_{T}(x)  | \ \de x .
    \end{align*}
Then we use Gagliardo-Nirenberg \cref{lemma: Gagliardo-Nirenberg} and estimations on derivatives of density $q_t$ as presented in \cref{s:prel-estim-q} to control each component of the gradient $\nabla \cdot (q_{t}\delta_{t})$ in the right-hand side in terms of  the $L^2$ score error, leading to
\begin{align*}
         \int_{\mft}^T\int_{\mathbb{R}^d}  \bigg|  (\nabla \cdot (q_{t}\delta_{t} ))(x) \bigg| \ \de x \de t 
         \leq C_u \cdot d^{\frac{3}{4}}  \cdot T^{\frac{1}{4}} \cdot {(\mathcal{L}+ T \cdot {\mft}^{-2} \cdot D^3)}^{\frac{1}{2}} 
 {\bigg(\int_{\mft} ^T \int_{\mathbb{R}^d}q_t(x) {\|\delta_t(x)\|_2 ^2} \ \de x \ \de t \bigg) ^{\frac{1}{4}}  }.
    \end{align*}
    We observe a $1/2$-th order dependence on the $L^2$ score error and linear dependence on the dimensionality $d$. The presence of $\mft^{-2}$ is attributed to the possibility that true data lies on a submanifold of lower dimensionality than $d$.
For detailed proof and improved bounds concerning the Gaussian mixture true data distribution, please refer to \cref{section:score estimation error}.

To prove \cref{theorem: main L^1 theorem discretized}, we first 
interpolate the discrete solution $\{\dY_{t_i}\}_{i=0}^{N}$ obtained by the $p$-th order Runge-Kutta method using interpolation. Then, we obtain a continuous time process on each time interval $[t_i,t_{i+1}]$, which  can then be treated as an ODE flow 
\begin{equation*}
    \partial_t \dY_t = \dV_t(\dY_t),\quad t_i\leq t\leq t_{i+1},
\end{equation*}
where $\widetilde V_t$ is continuous on the $t$-direction when $t \in [t_i,t_{i+1}]$, but it may not be continuous crossing each $t_i$. 
The discrepancy between $\dV_t$ and $\sV_t$ is studied in \cref{p:high_order_error}. Specifically, we analyze
\begin{align}
\label{p:high_order_error-copy}
        \|\dV_{t}(x)-\sV_{t}(x)\|_{\infty},\quad  \|\nabla (\dV_{t}(x)-\sV_{t}(x))\|_{\infty} \leq C(p,s,B) \cdot L\bigl((\sqrt d+\|x\|_2)h\sqrt dL\bigr)^p,
    \end{align}
which essentially represents the Runge-Kutta local truncation error with detailed dimension and Lipschitz constants. Let $\dq_{t}$ denote the density of $\dY_{T-t}$, which satisfies the forward process
\begin{align*}
    \del_t \dq_{t} = 
    \nabla \cdot (\dU_t
    \dq_{t}),\quad \widetilde q_T=\widehat q_T,
\end{align*}
with $\dU_t = \dV_{T-t}$. 
We will quantify the total variation between $\dq_t$ and $q_t$, the density of $Y_{T-t}$.
We define $\widetilde \delta_t(x) \coloneq  \dU_t(x) -  U_t(x)$,  $\widetilde \varepsilon_t(x) \coloneq \dq_t(x) - q_t(x)$. By using the characteristic method described in \cref{section: L^1 error of transport equation} again, the error at the discrete level boils down to the score matching error and time discretization error:
\begin{align*}
        &\phantom{{}={}}\int_{\mathbb{R}^d}  | \widetilde\varepsilon_{\mft}(x)  | \ \de x  - 
        \int_{\mathbb{R}^d}  | \widetilde\varepsilon_{T}(x)  | \ \de x \\
        &\leq 
        \sum_{i=0}^{N-1}\int_{T - t_{i+1}} ^{T - t_{i}} \int_{\mathbb{R}^d}  \bigg|  (\nabla \cdot (q_{t} \delta_{t} ))(x) \bigg| \ \de x \de t + \sum_{i=0}^{N-1}\int_{T - t_{i+1}} ^{T - t_{i}} \int_{\mathbb{R}^d}  \bigg|  (\nabla \cdot (q_{t} (\widetilde \delta_{t}- \delta_{t}) ))(x) \bigg| \ \de x \de t .
    \end{align*}
By using the fact that $\widetilde \delta_{t}- \delta_{t} = \dU_{t}- \sU_{t} = \dV_{T-t}- \sV_{T-t}$, the discretization error becomes
\begin{align*}
\int_{T - t_{i+1}} ^{T - t_{i}} \int_{\mathbb{R}^d}  \bigg|  (\nabla \cdot (q_{t} (\widetilde \delta_{t}- \delta_{t}) ))(x) \bigg| \ \de x  \de t 
=\int_{t_{i}} ^{t_{i+1}} \int_{\mathbb{R}^d}  \bigg|  (\nabla \cdot (q_{T - t} (\dV_{t}- \sV_{t}))(x) \bigg| \ \de x \de t .
\end{align*}
Using the Runge-Kutta local truncation error estimations from \eqref{p:high_order_error-copy}, the discretization error can be bounded as
\begin{align*} 
    \int_{t_{i}} ^{t_{i+1}} \int_{\mathbb{R}^d}  \bigg|  (\nabla \cdot (q_{T-t} (\dV_{t}- \sV_{t}))(x) \bigg| \ \de x \de t    &\leq C(p,s,B)\cdot d  \cdot (LD)^{p+1} \cdot   (dh)^{p}\cdot\int_{t_{i}} ^{t_{i+1}}\frac{1}{\sigma_t^2}\rd t.
    \end{align*}
As a result, the score matching error and time discretization error do not interact to magnify, thereby preserving the time discretization error at $p$-th order, albeit with significant dimensionality dependence.
The detailed proof is in \cref{s:discretize_error}.

\section{Numerical Study}
\label{sec:num}
In this section, we numerically analyze the convergence rate of the probability flow ODE, specifically focusing on a $K$-mode Gaussian mixture target distribution
\begin{equation}
\label{eq:GM}
    q_0(x) = \sum_{k=1}^K w_k \N(x; m_k, C_k).
\end{equation}
The forward process, as denoted in \eqref{eq:OU} with  $\lambda_t = e^{-t}$ and $\sigma_t = \sqrt{1 - e^{-2t}}$, yields
\begin{align*}
	q_t(y) = &  \int_{\RR^d} \frac{1}{(\sqrt{2\pi} \sigma_t)^d} \cdot \exp \left( -\frac{\|y - \lambda_t x\|_2^2}{2\sigma_t^2} \right) q_0(x) \de x 
	=  \sum_{k=1}^K w_k \N(y; \lambda_t m_k, \lambda_t^2 C_k + \sigma_t^2 I).
\end{align*}
The score function takes the following analytical form
\begin{align}
\label{eq:GM-score}
	\nabla_x \log q_t(x)
	= & -\sum_{k=1}^K\frac{w_k\N(x; \lambda_t m_k, \lambda_t^2 C_k + \sigma_t^2 I)}{q_t(x)} (\lambda_t^2C_k + \sigma_t^2I)^{-1}(x - \lambda_t m_k).
\end{align}
In general, the score function is represented by a neural network with inputs $t$ and $x$, trained through score matching with sequentially corrupted training data~\cite{hyvarinen2005estimation,vincent2011connection,song2019generative,song2020score}. However, in our present work, we circumvent the score matching step. Instead, we assume that we have access  to an imperfect score function characterized by the following three types of artificial score errors $\delta(t, x) = s_{T-t}(x) - \nabla \log q_t(x)$ 
\begin{itemize}
    \item constant error : $\delta(t, x) = \delta \frac{1}{\sqrt{d}}$;
    \item linear error: $\delta(t, x) = \delta \frac{x - m}{\sqrt{d}}$;
    \item sinusoidal error: $\delta(t, x) = \delta \sin(x) \frac{x - m}{\sqrt{d}}$.
\end{itemize}
Here $m$ is the mean of the target Gaussian mixture distribution. The $\sin$ function is used for pointwise evaluation, and its product with the following term also represents pointwise multiplication.
In the subsequent numerical investigation, we evaluate the convergence rate of the probability flow ODE for estimating $q_0$ using an analytical score function \eqref{eq:GM-score} with various magnitudes of artificial score errors parameterized by a scalar $\delta$. Specifically, we consider $\delta$ values of $0.005$, $0.01$, $0.02$, $0.04$, $0.08$, and $0.16$.
For the probability flow ODE, we integrate the deterministic reverse process \eqref{eq:reverse-ode-score} using Heun's second-order time integrator until $T=8$. Because the Gaussian mixture target density has no singularities, we integrate the reverse process until the final time $T$ with $\tau = 0$.
Based on our theoretical analysis outlined in \cref{theorem: main L^1 theorem discretized}, to balance the score error and time discretization error, we choose $h^2$ to be approximately equal to $\delta$. Consequently, as we vary the magnitude of the score error $\delta$ from $0.005$ to $0.16$, we adjust the corresponding number of time steps as follows: $N_t = 96$, $64$, $48$, $32$, $24$, and $16$. It is worth noticing the step size must be within the stability regime of the explicit time integrator. 
To initialize the ODE flow, we sample $J=4\times10^4$ particles from the standard Gaussian distribution $\mathcal{N}(x; 0, \id_d)$. All code used to produce the numerical results and figures in this paper are available at
    \url{https://github.com/PKU-CMEGroup/InverseProblems.jl/blob/master/Diffusion/Gaussian-mixture-density.ipynb}\,.

\subsection{One Dimensional Test}
\label{ssec:1D-test}
We first consider a one dimensional 3-mode Gaussian mixture \eqref{eq:GM} with 
$$w = [0.1;\,0.4;\,0.5] \quad m = [-6.0;\,4.0;\,6.0] \quad \textrm{and}\quad C = [0.25;\,0.25;\,0.25].$$
Initially, we explore the convergence behavior of the probability flow ODE in the mean field limit by numerically solving the Fokker-Planck PDE \eqref{e:defUV}. We employ an initial distribution $\mathcal{N}(x; 0, 1)$ and discretize the computational domain $[-10,10]$ into $1000$ cells using a second-order finite volume method \cite{van1974towards}. Integration is performed with Heun's second-order time integrator using a time step of $h = 10^{-3}$ until $T=8$. To ensure accurate understanding at the continuous level, we choose $\Delta x$ and $h$ to be sufficiently small, such that discretization errors are negligible compared to the score error. 
The corresponding score function, reference density $q_t$, and estimated density $\widehat\varrho_{T-t}$ under various imperfect score estimations are illustrated in \cref{fig:1D-pde-density}. While the solution error increases with larger $\delta$, all modes are captured, including the left-side mode with a relatively small density. The convergence rate, assessed in terms of the total variation $\TV(q_0, \widehat{\varrho}_T)$, relative mean error, and relative covariance error, is depicted in \cref{fig:1D-pde-error}. The linear relationship between these errors and $\delta$ is clearly demonstrated.
\begin{figure}[ht]
\centering
    \includegraphics[width=0.9\textwidth]{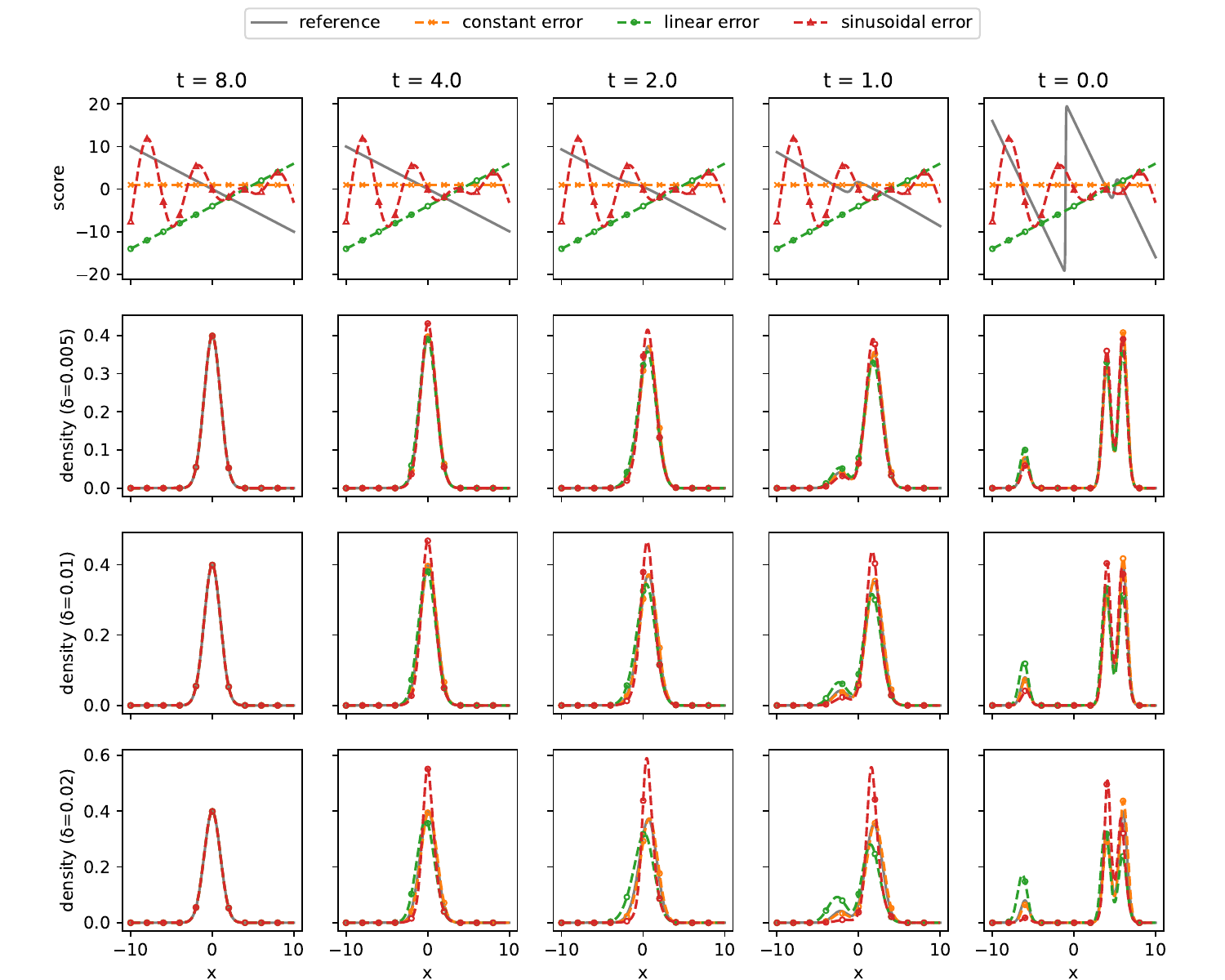}
    \caption{One dimensional test: Density estimations obtained by solving the Fokker-Planck PDE \eqref{e:defUV} numerically with various artificial score errors. From top to bottom: score, estimated density with $\delta = 0.005,\,0.01,\,0.02$. From left to right estimated $q_t$ at $t = 8,\,4,\,2,\,1,\,0$.
     }
    \label{fig:1D-pde-density}
\end{figure}

\begin{figure}[ht]
\centering
    \includegraphics[width=0.9\textwidth]{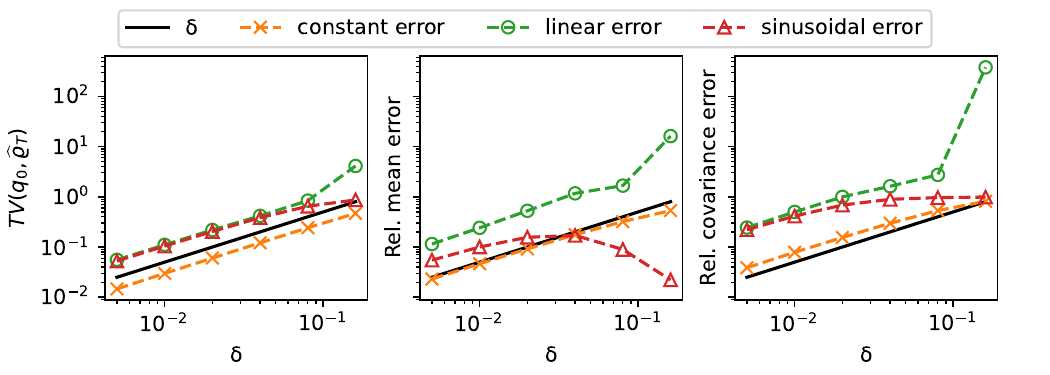}
    \caption{One dimensional test: convergence of the density estimations obtained by solving the Fokker-Planck PDE \eqref{e:defUV} numerically with various artificial score errors.
     }
    \label{fig:1D-pde-error}
\end{figure}

Then, we investigate the convergence of the probability ODE flow by integrating the deterministic reverse process \eqref{eq:reverse-ode-score}. The corresponding score function, reference density $q_t$, and estimated density $\widehat\varrho_{T-t}$ with various imperfect score estimations are depicted in \cref{fig:1D-ode-density}. Kernel density estimates are computed with bandwidth determined by Silverman's rule \cite{silverman2018density} $\Bigl(\frac{4}{J(d+2)}\Bigr)^{1/(d+4)}(1-\frac{t}{2T})$ (interpolating from $0.5$ to $1$). Notably, the estimated densities closely resemble the PDE solution (See \cref{fig:1D-pde-density}), highlighting the significant efficiency of the time integrator with such small numbers of time steps. The convergence rate, evaluated in terms of the total variation $\TV(q_0, \widehat{\varrho}_T)$, relative mean error, and relative covariance error, is depicted in \cref{fig:1D-ode-error}. The linear relationship between these errors and $\delta$ (or $h^2$) is clearly demonstrated.

\begin{figure}[ht]
\centering
    \includegraphics[width=0.9\textwidth]{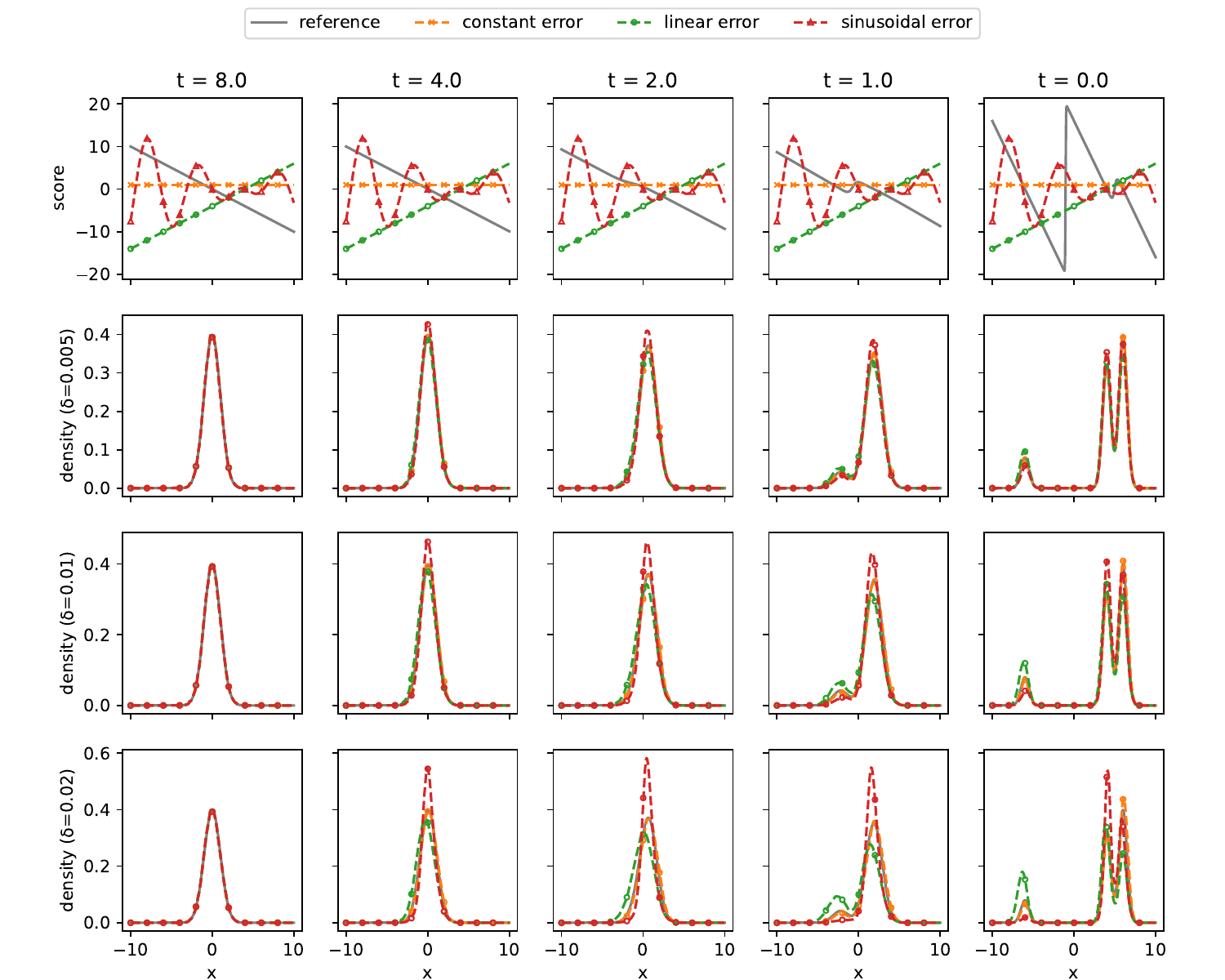}
    \caption{One dimensional test: density estimations obtained by solving the probability flow ODE with Heun's method with various artificial score errors. From top to bottom: score, estimated density with $\delta = 0.005,\,0.01,\,0.02$. From left to right estimated $q_t$ at $t = 8,\,4,\,2,\,1,\,0$.
     }
    \label{fig:1D-ode-density}
\end{figure}

\begin{figure}[ht]
\centering
    \includegraphics[width=0.9\textwidth]{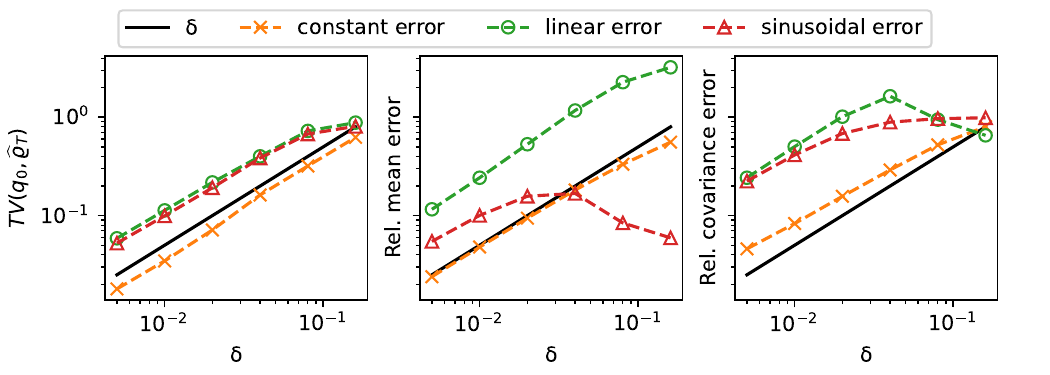}
    \caption{One dimensional test: convergence of the density estimations obtained by solving the probability flow ODE with Heun's method, where $h^2  \approx \delta $ with various artificial score errors.
     }
    \label{fig:1D-ode-error}
\end{figure}

\subsection{High Dimensional Test}
\label{ssec:HD-test}
Finally, we consider $d$ dimensional 5-mode Gaussian mixtures \eqref{eq:GM}. The weights are sampled uniformly $w_k \sim $ Uniform$[0,1]$ and are then normalized to sum to 1. The means are generated from a Gaussian distribution, $m_k \sim \N(0, 3^2\id_d)$, and the covariance matrices $C_k$ are generated as $C_k = \frac{1}{8}(W_k^T W_k/d + \id_d)$ with $(W_k)_{ij} \sim \N(0,1)$ for $i,j = 1,\,\cdots,\, d$. 

We investigate the convergence of the ODE flow by integrating the deterministic reverse process \eqref{eq:reverse-ode-score} with the same setup as before. For $d=128$, we visualize the results in terms of the marginal density for the first dimension, including its score, reference density $q^1_{t}$, and estimated density $\widehat\varrho^1_{T-t}$ with various artificial score errors, as depicted in \cref{fig:high-D-ode-density}. We compute kernel density estimates with the bandwidth determined by the same Silverman's rule \cite{silverman2018density} as in \cref{ssec:1D-test}.
Surprisingly, the estimated densities are as good as those of the one-dimensional test (See \cref{fig:1D-ode-density}). We further consider $d = 8$ and $32$ by generating the target density $q_0$ through marginalizing the 128-dimensional  Gaussian mixture target density.
The convergence rate, evaluated in terms of the total variation  of the marginal density $\TV(q^1_{0}, \widehat{\varrho}^1_{T})$, relative mean error, and relative covariance error, is depicted in \cref{fig:high-D-ode-error}. The linear relationship between these errors and $\delta$ (or $h^2$) is clearly demonstrated. Additionally, we also explore the scenario without score matching errors for comprehensive analysis. The convergence rate, evaluated using the same error indicators, is presented in \cref{fig:high-D-ode-discretization-error}, showing the quadratic relationship with $h^2$.
Notably, in this high dimensional test study, we do not observe any dimension dependence.

\begin{figure}[ht]
\centering
    \includegraphics[width=0.9\textwidth]{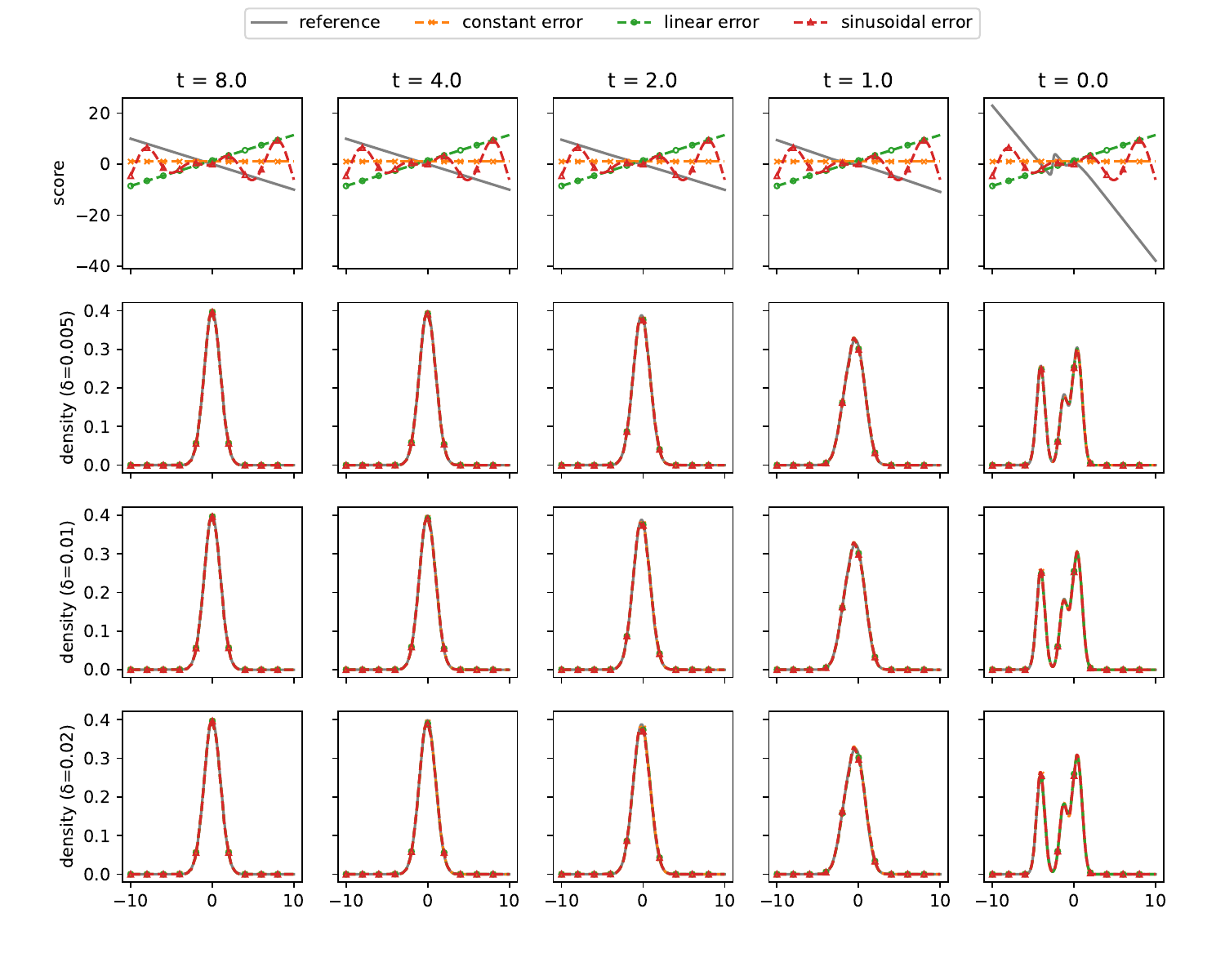}
    \caption{128 dimensional test: 
    marginal densities obtained by solving the probability flow ODE with Heun's method with various artificial score errors. From top to bottom: score, estimated densities with $\delta = 0.005,\,0.01,\,0.02$. From left to right estimated $q_t$ at $t = 8,\,4,\,2,\,1,\,0$.
     }
    \label{fig:high-D-ode-density}
\end{figure}

\begin{figure}[ht]
\centering
    \includegraphics[width=0.9\textwidth]{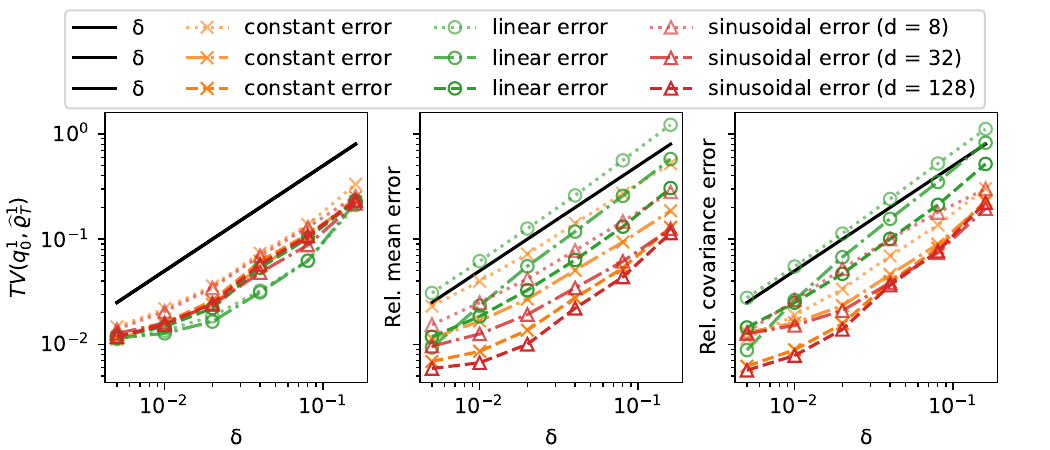}
    \caption{High dimensional test: convergence of the density estimations obtained by solving the probability flow ODE with Heun's method, where $h^2  \approx \delta $ with various artificial score errors. The dotted lines, dash dot lines  and dashed lines  indicate $d=8, 32$ and, $128$ dimension tests, respectively.
     }
    \label{fig:high-D-ode-error}
\end{figure}

\begin{figure}[ht]
\centering
    \includegraphics[width=0.9\textwidth]{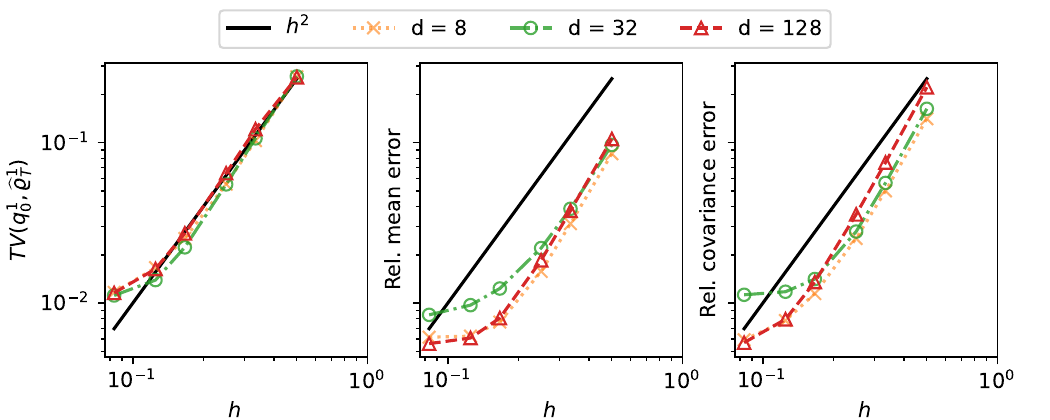}
    \caption{High dimensional test: convergence of the density estimations with no score errors obtained by solving the probability flow ODE with Heun's method, across different time step sizes $h$. The dotted cross lines, dash dot circle lines  and dashed triangle lines indicate $d=8, 32$ and, $128$ dimension tests, respectively.
     }
    \label{fig:high-D-ode-discretization-error}
\end{figure}
\newpage


\section{Conclusion}
In this study, we have investigated the convergence of score-based generative model based on the probability flow ODE, both theoretically and numerically.
Our analysis provided theoretical convergence guarantees at both continuous and discrete levels. 
Additionally, our numerical studies, conducted on problems up to 128 dimensions, provided empirical verification of our theoretical findings. One notable observation is the superior error bound of $\mathcal{O}(\delta + h^p)$, indicating potentially sharper estimations with improved  dimension and score matching error dependence. Moreover, conducting rigorous numerical-based analyses with neural network-based score estimation errors is also a focus for future research.

\vspace{0.5in}
\noindent {\bf Acknowledgments} 
DZH is supported by the high-performance computing platform of Peking University.
JH is supported by NSF grant DMS-2331096, and the Sloan research fellowship. 
ZJL would like to thank Professor Guido De Philippis for the insightful discussions.
\bibliography{References.bib}

\newcommand{\noopsort}[1]{} \newcommand{\printfirst}[2]{#1} \newcommand{\singleletter}[1]{#1} \newcommand{\switchargs}[2]{#2#1}
\begin{thebibliography}{10}

\bibitem{albergo2023stochastic}
M.~S. Albergo, N.~M. Boffi, and E.~Vanden-Eijnden.
\newblock Stochastic interpolants: A unifying framework for flows and diffusions.
\newblock {\em arXiv preprint arXiv:2303.08797}, 2023.

\bibitem{albergo2022building}
M.~S. Albergo and E.~Vanden-Eijnden.
\newblock Building normalizing flows with stochastic interpolants.
\newblock {\em arXiv preprint arXiv:2209.15571}, 2022.

\bibitem{Ambrosio2004Transport}
L.~Ambrosio.
\newblock Transport equation and {Cauchy} problem for {BV} vector fields.
\newblock {\em Invent. Math.}, 158(2):227--260, 2004.

\bibitem{benton2023error}
J.~Benton, G.~Deligiannidis, and A.~Doucet.
\newblock Error bounds for flow matching methods.
\newblock {\em arXiv preprint arXiv:2305.16860}, 2023.

\bibitem{block2020generative}
A.~Block, Y.~Mroueh, and A.~Rakhlin.
\newblock Generative modeling with denoising auto-encoders and langevin sampling.
\newblock {\em arXiv preprint arXiv:2002.00107}, 2020.

\bibitem{boffi2206probability}
N.~M. Boffi and E.~Vanden-Eijnden.
\newblock Probability flow solution of the fokker--planck equation.
\newblock {\em Machine Learning: Science and Technology}, 4(3):035012, 2023.

\bibitem{chen2023improved}
H.~Chen, H.~Lee, and J.~Lu.
\newblock Improved analysis of score-based generative modeling: User-friendly bounds under minimal smoothness assumptions.
\newblock In {\em International Conference on Machine Learning}, pages 4735--4763. PMLR, 2023.

\bibitem{chen2023score}
M.~Chen, K.~Huang, T.~Zhao, and M.~Wang.
\newblock Score approximation, estimation and distribution recovery of diffusion models on low-dimensional data.
\newblock In {\em International Conference on Machine Learning}, pages 4672--4712. PMLR, 2023.

\bibitem{chen2024probability}
S.~Chen, S.~Chewi, H.~Lee, Y.~Li, J.~Lu, and A.~Salim.
\newblock The probability flow ode is provably fast.
\newblock {\em Advances in Neural Information Processing Systems}, 36, 2024.

\bibitem{chen2022sampling}
S.~Chen, S.~Chewi, J.~Li, Y.~Li, A.~Salim, and A.~R. Zhang.
\newblock Sampling is as easy as learning the score: theory for diffusion models with minimal data assumptions.
\newblock {\em arXiv preprint arXiv:2209.11215}, 2022.

\bibitem{chen2023restoration}
S.~Chen, G.~Daras, and A.~Dimakis.
\newblock Restoration-degradation beyond linear diffusions: A non-asymptotic analysis for ddim-type samplers.
\newblock In {\em International Conference on Machine Learning}, pages 4462--4484. PMLR, 2023.

\bibitem{de2021diffusion}
V.~De~Bortoli, J.~Thornton, J.~Heng, and A.~Doucet.
\newblock Diffusion schr{\"o}dinger bridge with applications to score-based generative modeling.
\newblock {\em Advances in Neural Information Processing Systems}, 34:17695--17709, 2021.

\bibitem{dhariwal2020jukebox}
P.~Dhariwal, H.~Jun, C.~Payne, J.~W. Kim, A.~Radford, and I.~Sutskever.
\newblock Jukebox: A generative model for music.
\newblock {\em arXiv preprint arXiv:2005.00341}, 2020.

\bibitem{dhariwal2021diffusion}
P.~Dhariwal and A.~Nichol.
\newblock Diffusion models beat gans on image synthesis.
\newblock {\em Advances in neural information processing systems}, 34:8780--8794, 2021.

\bibitem{esser2024scaling}
P.~Esser, S.~Kulal, A.~Blattmann, R.~Entezari, J.~M{\"u}ller, H.~Saini, Y.~Levi, D.~Lorenz, A.~Sauer, F.~Boesel, et~al.
\newblock Scaling rectified flow transformers for high-resolution image synthesis.
\newblock In {\em Forty-first international conference on machine learning}, 2024.

\bibitem{Fiorenza2021Gagliardo}
A.~Fiorenza, M.~R. Formica, T.~G. Roskovec, and F.~Soudsk{\'y}.
\newblock Detailed proof of classical {Gagliardo}-{Nirenberg} interpolation inequality with historical remarks.
\newblock {\em Z. Anal. Anwend.}, 40(2):217--236, 2021.

\bibitem{gao2024convergence}
X.~Gao and L.~Zhu.
\newblock Convergence analysis for general probability flow odes of diffusion models in wasserstein distances.
\newblock {\em arXiv preprint arXiv:2401.17958}, 2024.

\bibitem{goodfellow2014generative}
I.~Goodfellow, J.~Pouget-Abadie, M.~Mirza, B.~Xu, D.~Warde-Farley, S.~Ozair, A.~Courville, and Y.~Bengio.
\newblock Generative adversarial nets.
\newblock {\em Advances in neural information processing systems}, 27, 2014.

\bibitem{ho2020denoising}
J.~Ho, A.~Jain, and P.~Abbeel.
\newblock Denoising diffusion probabilistic models.
\newblock {\em Advances in neural information processing systems}, 33:6840--6851, 2020.

\bibitem{hyvarinen2005estimation}
A.~Hyv{\"a}rinen and P.~Dayan.
\newblock Estimation of non-normalized statistical models by score matching.
\newblock {\em Journal of Machine Learning Research}, 6(4):695--709, 2005.

\bibitem{kingma2013auto}
D.~P. Kingma and M.~Welling.
\newblock Auto-encoding variational bayes.
\newblock {\em arXiv preprint arXiv:1312.6114}, 2013.

\bibitem{kwon2022score}
D.~Kwon, Y.~Fan, and K.~Lee.
\newblock Score-based generative modeling secretly minimizes the wasserstein distance.
\newblock {\em Advances in Neural Information Processing Systems}, 35:20205--20217, 2022.

\bibitem{lee2022convergence}
H.~Lee, J.~Lu, and Y.~Tan.
\newblock Convergence for score-based generative modeling with polynomial complexity.
\newblock {\em Advances in Neural Information Processing Systems}, 35:22870--22882, 2022.

\bibitem{lee2023convergence}
H.~Lee, J.~Lu, and Y.~Tan.
\newblock Convergence of score-based generative modeling for general data distributions.
\newblock In {\em International Conference on Algorithmic Learning Theory}, pages 946--985. PMLR, 2023.

\bibitem{li2024accelerating}
G.~Li, Y.~Huang, T.~Efimov, Y.~Wei, Y.~Chi, and Y.~Chen.
\newblock Accelerating convergence of score-based diffusion models, provably.
\newblock {\em arXiv preprint arXiv:2403.03852}, 2024.

\bibitem{li2023towards}
G.~Li, Y.~Wei, Y.~Chen, and Y.~Chi.
\newblock Towards faster non-asymptotic convergence for diffusion-based generative models.
\newblock {\em arXiv preprint arXiv:2306.09251}, 2023.

\bibitem{lipman2022flow}
Y.~Lipman, R.~T. Chen, H.~Ben-Hamu, M.~Nickel, and M.~Le.
\newblock Flow matching for generative modeling.
\newblock {\em arXiv preprint arXiv:2210.02747}, 2022.

\bibitem{liu2022flow}
X.~Liu, C.~Gong, and Q.~Liu.
\newblock Flow straight and fast: Learning to generate and transfer data with rectified flow.
\newblock {\em arXiv preprint arXiv:2209.03003}, 2022.

\bibitem{lu2022dpm}
C.~Lu, Y.~Zhou, F.~Bao, J.~Chen, C.~Li, and J.~Zhu.
\newblock Dpm-solver: A fast ode solver for diffusion probabilistic model sampling in around 10 steps.
\newblock {\em Advances in Neural Information Processing Systems}, 35:5775--5787, 2022.

\bibitem{lu2022dpm+}
C.~Lu, Y.~Zhou, F.~Bao, J.~Chen, C.~Li, and J.~Zhu.
\newblock Dpm-solver++: Fast solver for guided sampling of diffusion probabilistic models.
\newblock {\em arXiv preprint arXiv:2211.01095}, 2022.

\bibitem{mooney2024global}
C.~Mooney, Z.~Wang, J.~Xin, and Y.~Yu.
\newblock Global well-posedness and convergence analysis of score-based generative models via sharp lipschitz estimates.
\newblock {\em arXiv preprint arXiv:2405.16104}, 2024.

\bibitem{papamakarios2021normalizing}
G.~Papamakarios, E.~Nalisnick, D.~J. Rezende, S.~Mohamed, and B.~Lakshminarayanan.
\newblock Normalizing flows for probabilistic modeling and inference.
\newblock {\em Journal of Machine Learning Research}, 22(57):1--64, 2021.

\bibitem{popov2021grad}
V.~Popov, I.~Vovk, V.~Gogoryan, T.~Sadekova, and M.~Kudinov.
\newblock Grad-tts: A diffusion probabilistic model for text-to-speech.
\newblock In {\em International Conference on Machine Learning}, pages 8599--8608. PMLR, 2021.

\bibitem{rahaman2019spectral}
N.~Rahaman, A.~Baratin, D.~Arpit, F.~Draxler, M.~Lin, F.~Hamprecht, Y.~Bengio, and A.~Courville.
\newblock On the spectral bias of neural networks.
\newblock In {\em International conference on machine learning}, pages 5301--5310. PMLR, 2019.

\bibitem{ramesh2022hierarchical}
A.~Ramesh, P.~Dhariwal, A.~Nichol, C.~Chu, and M.~Chen.
\newblock Hierarchical text-conditional image generation with clip latents.
\newblock {\em arXiv preprint arXiv:2204.06125}, 1(2):3, 2022.

\bibitem{rezende2015variational}
D.~Rezende and S.~Mohamed.
\newblock Variational inference with normalizing flows.
\newblock In {\em International conference on machine learning}, pages 1530--1538. PMLR, 2015.

\bibitem{silverman2018density}
B.~W. Silverman.
\newblock {\em Density estimation for statistics and data analysis}.
\newblock Routledge, 2018.

\bibitem{sohl2015deep}
J.~Sohl-Dickstein, E.~Weiss, N.~Maheswaranathan, and S.~Ganguli.
\newblock Deep unsupervised learning using nonequilibrium thermodynamics.
\newblock In {\em International conference on machine learning}, pages 2256--2265. PMLR, 2015.

\bibitem{song2020denoising}
J.~Song, C.~Meng, and S.~Ermon.
\newblock Denoising diffusion implicit models.
\newblock {\em arXiv preprint arXiv:2010.02502}, 2020.

\bibitem{song2019generative}
Y.~Song and S.~Ermon.
\newblock Generative modeling by estimating gradients of the data distribution.
\newblock {\em Advances in neural information processing systems}, 32, 2019.

\bibitem{song2020score}
Y.~Song, J.~Sohl-Dickstein, D.~P. Kingma, A.~Kumar, S.~Ermon, and B.~Poole.
\newblock Score-based generative modeling through stochastic differential equations.
\newblock {\em arXiv preprint arXiv:2011.13456}, 2020.

\bibitem{tang2024contractive}
W.~Tang and H.~Zhao.
\newblock Contractive diffusion probabilistic models.
\newblock {\em arXiv preprint arXiv:2401.13115}, 2024.

\bibitem{tang2024score}
W.~Tang and H.~Zhao.
\newblock Score-based diffusion models via stochastic differential equations--a technical tutorial.
\newblock {\em arXiv preprint arXiv:2402.07487}, 2024.

\bibitem{van1974towards}
B.~Van~Leer.
\newblock Towards the ultimate conservative difference scheme. ii. monotonicity and conservation combined in a second-order scheme.
\newblock {\em Journal of computational physics}, 14(4):361--370, 1974.

\bibitem{vincent2011connection}
P.~Vincent.
\newblock A connection between score matching and denoising autoencoders.
\newblock {\em Neural computation}, 23(7):1661--1674, 2011.

\bibitem{wu2024theoretical}
Y.~Wu, M.~Chen, Z.~Li, M.~Wang, and Y.~Wei.
\newblock Theoretical insights for diffusion guidance: A case study for gaussian mixture models.
\newblock {\em arXiv preprint arXiv:2403.01639}, 2024.

\bibitem{xu2019frequency}
Z.-Q.~J. Xu, Y.~Zhang, T.~Luo, Y.~Xiao, and Z.~Ma.
\newblock Frequency principle: Fourier analysis sheds light on deep neural networks.
\newblock {\em arXiv preprint arXiv:1901.06523}, 2019.

\bibitem{yang2022convergence}
K.~Y. Yang and A.~Wibisono.
\newblock Convergence of the inexact langevin algorithm and score-based generative models in kl divergence.
\newblock {\em arXiv preprint arXiv:2211.01512}, 2022.

\bibitem{zhang2022fast}
Q.~Zhang and Y.~Chen.
\newblock Fast sampling of diffusion models with exponential integrator.
\newblock {\em arXiv preprint arXiv:2204.13902}, 2022.

\bibitem{zhao2024unipc}
W.~Zhao, L.~Bai, Y.~Rao, J.~Zhou, and J.~Lu.
\newblock Unipc: A unified predictor-corrector framework for fast sampling of diffusion models.
\newblock {\em Advances in Neural Information Processing Systems}, 36, 2024.

\end{thebibliography}
\bibliographystyle{abbrv}

\appendix

\section{Total Variation Estimates along Probability Flow}
\label{section: L^1 error of transport equation}

In this section, we prove the following general theorem for continuity equations, which is independent of the particular choice of $U_t, \sU_t$ (equivalently, $V_{T-t}, \sV_{T-t}$) in \eqref{e:defUV}.
\begin{theorem}\label{theorem: L^1 error}
    Fix any $0 < \mft < T$. Let $\widehat q_t(x), q_t(x) \in C^1([\mft, T] \times \mathbb{R}^d) \cap L^1 ([\mft, T] \times \mathbb{R}^d) $ solve the following two continuity equations on $\mathbb{R}^d$ respectively,
    \begin{align*}
    \del_t q_t =\nabla \cdot (U_t
    q_t),\quad
    \del_t \widehat q_t =\nabla \cdot (\sU_t
    \widehat q_t).
    \end{align*}
    We also assume that for $t \in [\mft, T]$, $U_t, \sU_t$ are locally Lipschitz on $\mathbb{R}^d$. Then, if we denote
    $\delta_t(x) \coloneq  \sU_t(x) - U_t(x)$, $\widehat\varepsilon_t(x) \coloneq \widehat q_t(x) - q_t(x)$, we have that, 
    \begin{align*}
        \bigg| \int_{\mathbb{R}^d}  | \widehat\varepsilon_{\mft}(x)  | \ \de x  - \int_{\mathbb{R}^d}  | \widehat\varepsilon_{T}(x)  | \ \de x \bigg | \leq \int_{\mft} ^T E(t) \ \de t, \quad \text{with } E(t) \coloneq \int_{\mathbb{R}^d}  \bigg|  (\nabla \cdot (q_{t}\delta_{t} ))(x) \bigg| \ \de x.
    \end{align*}
\end{theorem}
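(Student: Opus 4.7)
The plan is to derive a transport equation for the difference $\widehat\varepsilon_t = \widehat q_t - q_t$ and then show that its $L^1$-norm changes at most at rate $E(t)$, because the transport part is $L^1$-preserving (characteristic method) while the only non-transport contribution is the source $\nabla\cdot(q_t\delta_t)$. Subtracting the two continuity equations and using the identity $\sU_t\widehat q_t - U_tq_t = \sU_t\widehat\varepsilon_t + q_t\delta_t$, I obtain
\begin{equation*}
    \partial_t \widehat\varepsilon_t \;=\; \nabla\cdot(\sU_t\widehat\varepsilon_t) \;+\; \nabla\cdot(q_t\delta_t),
\end{equation*}
so heuristically $\widehat\varepsilon_t$ is transported along the flow of $-\sU_t$ and forced by $\nabla\cdot(q_t\delta_t)$. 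Integrating in $x$ after multiplication by $\mathrm{sgn}(\widehat\varepsilon_t)$ should yield $|\tfrac{d}{dt}\|\widehat\varepsilon_t\|_{L^1}|\leq E(t)$, and integrating in $t$ over $[\mft,T]$ gives the claim.

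To make this rigorous I would approximate the absolute value by a smooth even convex function, e.g. $\phi_\eta(s):=\sqrt{s^2+\eta^2}-\eta$, which satisfies $\phi_\eta(s)\to|s|$, $\phi_\eta'(s)\to\mathrm{sgn}(s)$, $|\phi_\eta'|\leq 1$, and $s\phi_\eta'(s)\to|s|$ as $\eta\to 0^+$. Testing the PDE for $\widehat\varepsilon_t$ against $\phi_\eta'(\widehat\varepsilon_t)$ and using $\phi_\eta'(\widehat\varepsilon_t)\,\sU_t\cdot\nabla\widehat\varepsilon_t = \sU_t\cdot\nabla\phi_\eta(\widehat\varepsilon_t)$ together with an integration by parts, the transport term reduces to
\begin{equation*}
    \int_{\bR^d}\phi_\eta'(\widehat\varepsilon_t)\,\nabla\cdot(\sU_t\widehat\varepsilon_t)\,\de x \;=\; \int_{\bR^d}(\nabla\cdot\sU_t)\bigl[\widehat\varepsilon_t\phi_\eta'(\widehat\varepsilon_t)-\phi_\eta(\widehat\varepsilon_t)\bigr]\de x,
\end{equation*}
and the bracketed quantity tends to $0$ pointwise (with uniform bound) as $\eta\to 0^+$, so by dominated convergence the whole transport contribution vanishes in the limit. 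The remaining source term is bounded by $\int|\nabla\cdot(q_t\delta_t)|\de x=E(t)$ uniformly in $\eta$, yielding $|\tfrac{d}{dt}\int|\widehat\varepsilon_t|\de x|\leq E(t)$ and hence the stated inequality after integration over $[\mft,T]$.

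The main technical obstacle is justifying the integration by parts and the passage to the limit $\eta\to 0^+$ in a clean way, since the hypothesis gives only $q_t,\widehat q_t\in C^1\cap L^1$ and $U_t,\sU_t$ locally Lipschitz, with no uniform decay at infinity assumed. I would handle this by introducing a smooth cutoff $\chi_R(x)$ supported in $\{\|x\|\leq 2R\}$ and equal to $1$ on $\{\|x\|\leq R\}$, working with $\int\phi_\eta(\widehat\varepsilon_t)\chi_R\,\de x$ first, and then sending $R\to\infty$ using the integrability of $\widehat\varepsilon_t$ and the fact that boundary terms from $\nabla\chi_R$ vanish in the limit by absolute continuity of the $L^1$ integral. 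This two-parameter limit (first $R\to\infty$, then $\eta\to 0^+$) is the one spot where some care is genuinely needed; everything else is algebraic. An alternative route, presumably the second rigorous proof the authors allude to, is to use the flow $\dot X_t=-\sU_t(X_t)$ generated by the locally Lipschitz $\sU_t$, pull $\widehat\varepsilon_t$ back along the characteristics, and apply Duhamel's formula together with the change-of-variables formula to obtain the same $L^1$-bound directly; I would keep this as a backup in case the regularization argument encounters subtleties at infinity.
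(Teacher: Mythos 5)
Your regularization argument is essentially the paper's own second proof of this theorem: the authors use $\beta_a(s)=\sqrt{a^2+s^2}$ in place of your $\phi_\eta$, integrate over balls $B_R$, and pass $a\to 0^+$ then $R\to\infty$ exactly as you describe, while your ``backup'' via characteristics and Jacobi's formula is their first proof. The one point to tighten is the boundary term at infinity: since $\sU_t$ is only locally Lipschitz and may grow, the flux $\sU_t\widehat\varepsilon_t\cdot\nabla\chi_R$ is not controlled by integrability of $\widehat\varepsilon_t$ alone; the paper adds the explicit hypothesis $\|\sU_t\|_2\,|\widehat\varepsilon_t|\in L^1(\mathbb{R}^d\times[\mft,T])$ and then only extracts a subsequence $R_j\to\infty$ along which the flux vanishes, so you should state the analogous assumption (harmless in the application, where the densities have exponential tails) rather than appeal to absolute continuity of the $L^1$ integral.
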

\begin{proof}[First Proof]

Fix $\mft >0$. For $t >0$, we define the total variation between $\widehat q_t(x)$ and $q_t(x)$ as
    \begin{align}\label{e: def L^1 error}
        f(t) \coloneq \int_{\mathbb{R}^d}  | \widehat\varepsilon_{t+\mft}(x)  | \ \de x.
    \end{align}
We use $x_t \coloneq T_t(x)$ to denote the solution of the ODE (also called the characteristic line)
\begin{align*}
    \frac{\de}{\de t} T_t(x) = - \sU_{\mft+t}(T_t(x)),
\end{align*}
with the initial data $x_0 = T_0(x) = x$.
We notice that, by change of variables (we can do this because $\sU_t$ is locally Lipschitz on $\mathbb{R}^d$, so, $T_t$ is a diffeomorphism of $\mathbb{R}^d$),
    \begin{align}\label{eqn:f(t) change of variable}
        f(t) = \int_{\mathbb{R}^d}  | \widehat\varepsilon_{t+\mft}(x_t)   | JT_t(x)\ \de x,
    \end{align}
where $JT_t(x) = |\det(\nabla T_t(x))|$. A direct computation shows that
\begin{align*}
    \frac{\de}{\de t} \widehat\varepsilon_{t+\mft}(x) = (\nabla \cdot (\sU_{t+\mft} \widehat\varepsilon_{t+\mft}))(x)  + (\nabla \cdot (q_{t+\mft}\delta_{t+\mft} ))(x).
\end{align*}
Hence,
\begin{align*}
    \begin{split}
        \frac{\de}{\de t} \left[\widehat\varepsilon_{t+\mft}(x_t) \right]&= \left(\frac{\de}{\de t} \widehat\varepsilon_{t+\mft}\right)(x_t) + \left[\nabla \widehat\varepsilon_{t+\mft} (x_t) \cdot \frac{\de}{\de t}x_t \right]
        \\  &= (\nabla \cdot (\sU_{t+\mft} \widehat\varepsilon_{t+\mft}))(x_t)  + (\nabla \cdot (q_{t+\mft}\delta_{t+\mft} ))(x_t) + \left[\nabla \widehat\varepsilon_{t+\mft} (x_t) \cdot \left( - \sU_{\mft+t}(x_t)\right)\right]
        \\  &= (\nabla \cdot \sU_{t+\mft})(x_t) \cdot \widehat\varepsilon_{t+\mft}(x_t) + (\nabla \cdot (q_{t+\mft}\delta_{t+\mft} ))(x_t).
    \end{split}
\end{align*}
Jacobi's formula gives that
\begin{align*}
    \frac{\de}{\de t} \det(\nabla T_t(x)) = \trace \bigg(\bigg(\frac{\de}{\de t}\nabla T_t(x) \bigg) \cdot {\bigg(\nabla T_t(x)\bigg)}^{-1} \bigg) \cdot \det(\nabla T_t(x)).
\end{align*}
We also notice that 
\begin{align*}
    \frac{\de}{\de t} \nabla T_t(x) = \nabla \frac{\de}{\de t} T_t(x) =- \nabla (\sU_{\mft+t}(T_t(x))) = - (\nabla \sU_{\mft+t}) (T_t(x)) \cdot \nabla T_t(x).
\end{align*}
So,
\begin{align*}
    \frac{\de}{\de t} \det(\nabla T_t(x)) = -\trace \bigg((\nabla \sU_{\mft+t}) (T_t(x))  \bigg) \cdot \det(\nabla T_t(x)) = -(\nabla \cdot \sU_{\mft+t}) (T_t(x)) \cdot \det(\nabla T_t(x)).
\end{align*}
Also, by \eqref{eqn:f(t) change of variable}, we have that
    \begin{equation*}
        \begin{split}
            \bigg| \frac{\de}{\de t} f(t) \bigg| &= \bigg| \int_{\mathbb{R}^d}  \frac{\de}{\de t} \big| \widehat\varepsilon_{t+\mft}(x_t)   \det(\nabla T_t(x)) \big|\ \de x \bigg| \leq \int_{\mathbb{R}^d} \bigg|  \frac{\de}{\de t} \big| \widehat\varepsilon_{t+\mft}(x_t)   \det(\nabla T_t(x)) \big|  \bigg| \ \de x
            \\  &= \int_{\mathbb{R}^d} \bigg|  \frac{\de}{\de t} \big[ \widehat\varepsilon_{t+\mft}(x_t)   \det(\nabla T_t(x)) \big] \bigg| \ \de x .
        \end{split}
    \end{equation*}
Hence,
    \begin{equation*}
    \begin{split}
        \bigg| \frac{\de}{\de t} f(t) \bigg| &\leq \int_{\mathbb{R}^d}  \bigg| \big[ (\nabla \cdot \sU_{t+\mft})(x_t) \cdot \widehat\varepsilon_{t+\mft}(x_t) + (\nabla \cdot (q_{t+\mft}\delta_{t+\mft} ))(x_t) \big]\cdot \det(\nabla T_t(x)) 
        \\ &\quad +  \widehat\varepsilon_{t+\mft}(x_t)    \cdot ( -(\nabla \cdot \sU_{\mft+t}) (T_t(x)))\cdot \det(\nabla T_t(x)) \bigg| \ \de x
        \\ &= \int_{\mathbb{R}^d}  \bigg|  (\nabla \cdot (q_{t+\mft}\delta_{t+\mft} ))(x_t) \bigg| \cdot JT_t(x) \ \de x
        \\ &= \int_{\mathbb{R}^d}  \bigg|  (\nabla \cdot (q_{t+\mft}\delta_{t+\mft} ))(x) \bigg| \ \de x = E(t+\mft),
            \end{split}
    \end{equation*}
where the second last equality is by change of variables again. 
Hence, for any $t'>0$,
    \begin{align}\label{e: L^ error estimate}
        f(0) \leq \int_0 ^{t'} E(t+\mft) \ \de t + f(t').
    \end{align}
One can also use the characteristic line method starting from $t = T$ to $t = \mft$, and then obtain an inverse inequality. Hence, we finish the proof of the theorem.
\end{proof}

\begin{remark}
    Notice that in this first proof, we exchange the order of integrals and derivatives. We can do this in our problem setting \eqref{e:defUV}, because by Assumption~\ref{assumption:secon-moment}, $\muast$ is compactly supported, so $q_t(x)$ always has an exponential tail as $\|x\| \to +\infty$. $\widehat q_{t} (x)$ also has such a property by our Lemma~\ref{lemma: hat qt exponential tail}.
\end{remark}

Indeed, one can prove the following more general theorem with reasonable assumptions, essentially verbatim, by the same method.
\begin{theorem}\label{theorem: general L^1 error}
    Fix any $0 < \mft < T$. Let $p_t(x) \in C^1([\mft, T] \times \mathbb{R}^d) \cap L^1 ([\mft, T] \times \mathbb{R}^d) $ solve the following continuity equation on $\mathbb{R}^d$ with $h_t(x) \in L^1 ([\mft, T] \times \mathbb{R}^d)$,
    \begin{align*}
    \del_t p_t(x) =(\nabla \cdot (Z_t
    p_t))(x) + h_t(x).
    \end{align*}
    We also assume that for $t \in [\mft, T]$, $Z_t$ is locally Lipschitz on $\mathbb{R}^d$. Then, for almost all $t \in [\mft,T]$, we have that, 
    \begin{align*}
        \bigg| \frac{\de}{\de t} \int_{\mathbb{R}^d}  | p_{t}(x)  | \ \de x  \bigg | \leq \int_{\mathbb{R}^d}  |h_t(x)| \ \de x.
    \end{align*}
Hence,
    \begin{align*}
        \bigg| \int_{\mathbb{R}^d}  | p_{\mft}(x)  | \de x - \int_{\mathbb{R}^d}  | p_{T}(x)  | \de x \bigg | \leq \int_{\mft} ^T \int_{\mathbb{R}^d}  |h_t(x)| \ \de x.
    \end{align*}
    
\end{theorem}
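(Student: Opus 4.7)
The plan is to mimic the characteristic-line argument used in the first proof of Theorem~\ref{theorem: L^1 error}, with $Z_t$ playing the role previously played by $\sU_t$ and with the source term $h_t$ absorbing what was previously $\nabla\cdot(q_t\delta_t)$. Fix $\mft$ and define the characteristic flow $T_t\colon \mathbb{R}^d\to\mathbb{R}^d$ by $\tfrac{\de}{\de t}T_t(x) = -Z_{\mft+t}(T_t(x))$ with $T_0(x)=x$. Since $Z_t$ is locally Lipschitz, $T_t$ is a $C^1$-diffeomorphism for each $t$, and the change of variables $y = T_t(x)$ gives
\[\int_{\mathbb{R}^d}|p_{t+\mft}(y)|\,\de y = \int_{\mathbb{R}^d}|p_{t+\mft}(T_t(x))|\,JT_t(x)\,\de x,\qquad JT_t(x):=|\det\nabla T_t(x)|.\]

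I would then differentiate the integrand on the right hand side. Using $\del_tp_t = \nabla\cdot(Z_tp_t)+h_t$, the ODE for $T_t$, and the chain rule, one obtains
\[\frac{\de}{\de t}p_{t+\mft}(T_t(x)) = (\nabla\cdot Z_{t+\mft})(T_t(x))\,p_{t+\mft}(T_t(x)) + h_{t+\mft}(T_t(x)),\]
while Jacobi's formula for the Jacobian determinant gives
\[\frac{\de}{\de t}JT_t(x) = -(\nabla\cdot Z_{t+\mft})(T_t(x))\,JT_t(x).\]
The whole point of the characteristic substitution is that the two copies of $\nabla\cdot Z$ cancel upon applying the product rule, leaving the clean identity
\[\frac{\de}{\de t}\bigl[p_{t+\mft}(T_t(x))\,JT_t(x)\bigr] = h_{t+\mft}(T_t(x))\,JT_t(x).\]

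To handle the absolute value, I would mollify $r\mapsto|r|$ by $\phi_\eps(r)=\sqrt{r^2+\eps^2}-\eps$, differentiate $\phi_\eps(p_{t+\mft}(T_t(x)))\,JT_t(x)$ using the identity above, and pass to the limit $\eps\to 0^+$; since $|\phi_\eps'|\leq 1$, dominated convergence yields the pointwise bound
\[\bigl|\frac{\de}{\de t}\bigl[|p_{t+\mft}(T_t(x))|\,JT_t(x)\bigr]\bigr|\leq |h_{t+\mft}(T_t(x))|\,JT_t(x)\]
for almost every $t$. Integrating in $x$, interchanging $\de/\de t$ with the $x$-integral, and undoing the change of variables via $y=T_t(x)$ then gives $\bigl|\tfrac{\de}{\de t}\|p_{t+\mft}\|_{L^1}\bigr|\leq \|h_{t+\mft}\|_{L^1}$, and the integrated inequality follows from a further integration in $t$ from $0$ to $T-\mft$. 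The main technical obstacle is this final commutation of $\de/\de t$ with the $x$-integral together with the nonsmoothness of $|\cdot|$; both are handled simultaneously by the $\phi_\eps$ mollification, whose derivative is uniformly bounded, making the assumed $L^1$ regularity of $p$ and $h$ sufficient to justify all dominated-convergence steps.
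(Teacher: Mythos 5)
Your argument is essentially the paper's own (first) proof: Theorem~\ref{theorem: general L^1 error} is obtained by running the characteristic-line computation from the first proof of Theorem~\ref{theorem: L^1 error} with $Z_t$ in place of $\sU_t$ and $h_t$ in place of $\nabla\cdot(q_t\delta_t)$, exploiting exactly the cancellation of $\nabla\cdot Z$ between the transported density and the Jacobian; your mollifier $\phi_\eps$ is the same device ($\beta_a(s)=\sqrt{a^2+s^2}$) the paper uses in its second proof to handle the nonsmooth absolute value. The only point to watch is the final interchange of $\de/\de t$ with the $x$-integral, which needs a domination hypothesis beyond $h\in L^1$; this is avoided (and your proof made fully rigorous) by integrating the exact identity $\frac{\de}{\de t}\bigl[p_{t+\mft}(T_t(x))\,JT_t(x)\bigr]=h_{t+\mft}(T_t(x))\,JT_t(x)$ in $t$ first, applying the reverse triangle inequality, and then using Fubini and the change of variables to land directly on the integrated inequality.
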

\begin{remark}\label{remark: general W^k,1 error}
    We will prove a Gagliardo-Nirenberg interpolation inequality with dimension free constants in Lemma~\ref{lemma: Gagliardo-Nirenberg}. With this inequality, if one can know that $\int_{\mathbb{R}^d}  |\partial^2 _{ii} p_{\mft}(x)  | \de x$ is bounded for some $i \in \llbracket 1, d \rrbracket$, then one can use Lemma~\ref{lemma: Gagliardo-Nirenberg} to see that $\int_{\mathbb{R}^d}  |\partial_i p_{\mft}(x)  | \de x$ is also small if $\int_{\mathbb{R}^d}  | p_{\mft}(x)  | \de x$ is small. Similar arguments work for higher order derivatives of $p_{\mft}(x)$. Moreover, if we can conclude that the $W^{k,1}(\mathbb{R}^d)$-norms of $p_{\mft}(x)$ is small, where $W^{k,1}$ denotes the Sobolev space, we can use Sobolev inequalities to conclude that the $W^{k-1,r}(\mathbb{R}^d)$-norms of $p_{\mft}(x)$ for a corresponding $r>1$ is also mall. In particular, we can conclude that the $L^r(\mathbb{R}^d)$-norm of $p_{\mft}(x)$ is small.
\end{remark}

Before we give the second proof of Theorem~\ref{theorem: L^1 error} and Theorem~\ref{theorem: general L^1 error}, we need to explain our intuition a little bit.
With those notations in Theore~\ref{theorem: general L^1 error},
    we notice that for
    \begin{align*}
        g(t) \coloneq \int_{\mathbb{R}^d}  |p_{t}(x)  | \ \de x,
    \end{align*}
we have that 
    \begin{align*}
        \frac{\de}{\de t} g(t) = \int_{\mathbb{R}^d}  \sign(p_{t}(x)) \frac{\de}{\de t}p_{t}(x)   \ \de x = \int_{ \{p_{t} \geq 0 \} }   \frac{\de}{\de t}p_{t}(x)   \ \de x - \int_{ \{p_{t} < 0 \} }   \frac{\de}{\de t}p_{t}(x)   \ \de x .
    \end{align*}
If the boundary $\partial \{p_{t} \geq 0 \} $ consists of $(d-1)$-dimensional piecewise smooth submanifolds (or at least rectifiable sets), then by the divergence theorem, we know that 
\begin{align*}
    \int_{ \{p_{t} \geq 0 \} }   (\nabla \cdot (Z_{t} p_{t}))(x)   \ \de x = \int_{\partial \{p_{t} \geq 0 \}  } \textbf{n}(x) \cdot (Z_{t} p_{t})(x) \ d\mathcal{H}^{d-1}(x) = 0,
\end{align*}
where $\textbf{n}(x)$ is the unit outer normal vector and $\mathcal{H}^{d-1}(\cdot)$ is the $(d-1)$-dimensional surface measure. Similarly,
\begin{align*}
    \int_{ \{p_{t} < 0 \} }   (\nabla \cdot (Z_{t} p_{t}))(x)   \ \de x  = 0.
\end{align*}
So, 
\begin{align*}
        \frac{\de}{\de t} g(t) = \int_{\mathbb{R}^d}  \sign(p_{t}(x))  h_t(x)  \ \de x, \  \text{and } \bigg| \frac{\de}{\de t} g(t) \bigg| \leq \int_{\mathbb{R}^d}  |h_t(x)| \ \de x.
\end{align*}
However, in general, we cannot know whether the boundary set $\partial \{p_{t} \geq 0 \} $ is always of $(d-1)$-dimension. For an arbitrarily given smooth function, its zero level set can also be arbitrarily strange and does not necessarily need to be of $(d-1)$-dimension.

The following second proof of Theorem~\ref{theorem: general L^1 error} is inspired by~\cite{Ambrosio2004Transport} and the communication with Professor Guido De Philippis. We also assume that $(\|Z_t(x)\| \cdot |p_t(x)|) \in L^1 (\mathbb{R}^d \times [\mft, T])$.
\begin{proof}[Second Proof]
    Let $a >0$. We let $\beta_{a}(s) = \sqrt{a^2 + s^2}$ be a function on $\mathbb{R}$ which approximates the function $|s|$ as $a \to 0^+$. Then, the function $p_{a,t}(x) \coloneq \beta_{a}(p_t(x))$ on $\mathbb{R}^d$ solves the equation
        \begin{align*}
            \begin{split}
                \del_t p_{a,t}(x) &= ((\nabla \cdot Z_t)(x)) p_t(x) \beta_a ' (p_t(x)) + Z_t(x) \cdot \nabla p_{a,t}(x) + h_t(x) \beta_a ' (p_t(x))
                \\  &= \nabla \cdot (Z_t(x)  p_{a,t}(x)) + ((\nabla \cdot Z_t)(x)) [p_t(x) \beta_a ' (p_t(x))- p_{a,t}(x)]+ h_t(x) \beta_a ' (p_t(x)).
            \end{split}
        \end{align*}
For any $R>0$, consider the integral of the above equation on the ball $B_R \coloneq B_R(0) \subset \mathbb{R}^d$, we have that 
    \begin{align*}
        \begin{split}
            \frac{\de}{\de t} \int_{B_R} p_{a,t}(x) \de x &= \int_{\partial B_R} (Z_t(x)  p_{a,t}(x)) \cdot \frac{x}{\|x\|_2} d\mathcal{H}^{d-1}(x) + \int_{B_R} ((\nabla \cdot Z_t)(x)) [p_t(x) \beta_a ' (p_t(x))- p_{a,t}(x)] \ \de x
            \\ & \quad + \int_{B_R} h_t(x) \beta_a ' (p_t(x)) \ \de x,
        \end{split}
    \end{align*}
and then for any $t',t'' \in [\mft,T]$ and $t' > t''$,
    \begin{align*}
        \begin{split}
            \bigg| \int_{B_R} p_{a,t''}(x) \de x - \int_{B_R} p_{a,t'}(x) \de x \bigg|&= \bigg| \int_{t''} ^{t'} \int_{\partial B_R} (Z_t(x)  p_{a,t}(x)) \cdot \frac{x}{\|x\|_2} d\mathcal{H}^{d-1}(x) \de t 
            \\ & \quad + \int_{t''} ^{t'} \int_{B_R} ((\nabla \cdot Z_t)(x)) [p_t(x) \beta_a ' (p_t(x))- p_{a,t}(x)] \ \de x \de t
            \\ & \quad + \int_{t''} ^{t'} \int_{B_R} h_t(x) \beta_a ' (p_t(x)) \ \de x \de t \bigg|.
        \end{split}
    \end{align*}
Notice that $|\beta_a ' (s)| = \frac{|s|}{\sqrt{a^2 + s^2}} \leq 1$, $|p_t(x) \beta_a ' (p_t(x))- p_{a,t}(x)| = \frac{a^2}{\sqrt{a^2 + p_t(x) ^2}} \leq 1$, and for any given $x \in B_R$, $\lim_{a \to 0^+} \frac{a^2}{\sqrt{a^2 + p_t(x) ^2}} = 0$. By the dominated convergence theorem, we can let $a \to 0^+$ on both sides and obtain that
\begin{align*}
        \begin{split}
            \bigg| \int_{B_R} |p_{t''}(x)| \de x - \int_{B_R} |p_{t'}(x)| \de x \bigg|&= \bigg| \int_{t''} ^{t'} \int_{\partial B_R} (Z_t(x)  |p_{t}(x)|) \cdot \frac{x}{\|x\|_2} d\mathcal{H}^{d-1}(x) \de t 
            \\ & \quad + \int_{t''} ^{t'} \int_{B_R} ((\nabla \cdot Z_t)(x)) \cdot 0 \ \de x \de t
            \\ & \quad + \int_{t''} ^{t'} \int_{B_R} h_t(x) \sign(p_t(x))\ \de x \de t \bigg|
        \\  &\leq \int_{t''} ^{t'} \int_{\partial B_R} \|Z_t(x)  \|_2 \cdot |p_{t}(x)|  d\mathcal{H}^{d-1}(x) \de t 
        \\  & \quad + \int_{t''} ^{t'} \int_{B_R} |h_t(x)|\ \de x \de t .
        \end{split}
    \end{align*}
Because we assumed that $(\|Z_t(x)\|_2 \cdot |p_t(x)|) \in L^1 (\mathbb{R}^d \times [\mft, T])$, we can choose a sequence  $\{R_j\}$ with $R_j \to +\infty$, such that the first term on the right hand side goes to $0$. So, by the monotone convergence theorem, after passing $R_j \to +\infty$, we obtain that for any $t',t'' \in [\mft,T]$ and $t' > t''$, 
    \begin{align*}
        \bigg| \int_{\mathbb{R}^d} |p_{t''}(x)| \de x - \int_{\mathbb{R}^d} |p_{t'}(x)| \de x \bigg| \leq \int_{t''} ^{t'} \int_{\mathbb{R}^d} |h_t(x)|\ \de x \de t .
    \end{align*}
\end{proof}

\section{Preliminary Estimates on Forward and Backward Density}
\label{s:prel-estim-q}

In this section, $U_t, \sU_t$ are defined as in \eqref{e:defUV}.
Under \Cref{assumption:secon-moment}, we take $\lambda_t = e^{-t}$ and $\sigma_t = \sqrt{1- \lambda_t ^2}$ ($t>0$), and we also assume that
    \begin{align}\label{e: definition of q_t}
        q_t (y) \coloneq \int_{\mathbb{R}^d} \frac{1}{{(\sqrt{2\pi} \sigma_t)}^{d}} \cdot e^{-\frac{\|y-x\|_2^2}{2{\sigma_t}^2}} \cdot \muast\bigg(\frac{x}{\lambda_t}\bigg) \frac{1}{{\lambda_t}^d} \ \de x,
    \end{align}
which satisfies the first-order PDE: $\del_t q_t =\nabla \cdot (U_t q_t)$ for $U_t=x+\nabla\log q_t (x)$.
We remark that although we use the notation $\muast(x / \lambda_t ^d) / \lambda_t ^d$ in the definition \eqref{e: definition of q_t} of $q_t$, because the data distribution $\muast$ can be supported on a submanifold $K_{\ast}$, or more general lower dimensional rectifiable sets, the meaning of it is actually the push-forward measure defined by $(\lambda_t)_{\#} \muast(A) \coloneq \muast(\lambda_t ^{-1} A)$ for any measurable set $A \subset \mathbb{R}^d$. So, the rescaling factor $\lambda_t ^d$ is actually $\lambda_t ^k$ for $k \leq d$ when $K_{\ast}$ is a $k-$dimensional submanifold in $\mathbb{R}^d$. But this notation doesn't affect our computations. As readers will see, the only property we will use is that in the integrand of \eqref{e: definition of q_t}, $x / \lambda_t \in K_{\ast}$ and hence $\|x\|_{\infty} \leq \lambda_t D$.

\begin{example}\label{example: q_t of delta mass}
    When $\mu_{\ast}$ is the delta mass at a point $y_0 \in \mathbb{R}^d$, then for $t >0$,
    \begin{align*}
        q_t(y) = \frac{1}{{(\sqrt{2\pi} \sigma_t)}^{d}} \cdot e^{-\frac{\|y-y_0\|_2^2}{2{\sigma_t}^2}}, \quad \nabla q_t(y) = -\frac{y-y_0}{\sigma_t^2} q_t(y), \quad \nabla^2 \log q_t(y) = - \frac{1}{\sigma_t^2} \cdot \id_d.
    \end{align*}
    We notice that as $t \to 0^+$, the derivatives of $\log q_t(y)$ blow up.
\end{example}
\begin{example}\label{example: q_t of sphere}
    When $\muast$ is the unit $2$-sphere $\mathbb{S}^2 \subset \mathbb{R}^3 \subset \mathbb{R}^d$, if we write $y=(y',y'') \in \mathbb{R}^3 \times \mathbb{R}^{d-3}$ and $y' = (y_1,y_2,y_3)$, a direct computation shows that
        \begin{align*}
            q_t(y) = \frac{2\pi}{{(\sqrt{2\pi} \sigma_t)}^{d}} \cdot e^{-\frac{\|y\|_2^2 + \lambda_t ^2}{2{\sigma_t}^2}} \cdot \frac{\sigma_t ^2}{\lambda_t\|y'\|_2} \cdot \bigg( e^{\frac{\lambda_t \|y'\|_2 }{\sigma_t ^2}} - e^{-\frac{\lambda_t \|y'\|_2 }{\sigma_t ^2}}\bigg).
        \end{align*}
    The derivatives of $\log q_t(y)$ also blow up as $t \to 0^+$ by a direct computation.
\end{example}

In general, we have the following estimates for space directions derivatives of $q_t(y)$ and $\log q_t(y)$. An interesting fact is that the upper bound in the statement of Lemma~\ref{Lemma: Hessian estimates} is independent of $y \in \mathbb{R}^d$ if the derivatives are of order two or higher.

\begin{lemma}\label{Lemma: Hessian estimates}
   For any $p \geq 3$, any $y\in\mathbb{R}^d$ and any $t >0$, 
        \begin{align}\label{e: all derivatives log qt}
            \|\nabla^p \log q_t (y)\|_{\infty} \leq (4 p!) \frac{ {\lambda_t} ^p D^p}{ {\sigma_t} ^{2p}} .
        \end{align}
    Also, for any $i,j$,
        \begin{align*}
            |\partial_{y_i} \log q_t (y)| \leq \frac{|y_i| +\lambda_t D}{\sigma_t ^2}, \text{ and } |\partial^2 _{y_i y_j} \log q_t (y)| \leq \frac{\delta_{ij}}{\sigma_t ^2} +  2\frac{\lambda_t ^2 D^2}{\sigma_t ^4}.
        \end{align*}
\end{lemma}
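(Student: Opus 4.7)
The plan is to exploit the factorization of $\log q_t$ coming from the Gaussian convolution structure of \eqref{e: definition of q_t}. Writing $K_t(y,x) = (2\pi\sigma_t^2)^{-d/2}\exp(-\|y-x\|_2^2/(2\sigma_t^2))$ and letting $\mu_t = (\lambda_t)_{\#}\mu_*$, which is supported in $[-\lambda_t D,\lambda_t D]^d$ by \Cref{assumption:secon-moment}, the identity $-\|y-x\|_2^2/(2\sigma_t^2) = -\|y\|_2^2/(2\sigma_t^2) + y\cdot x/\sigma_t^2 - \|x\|_2^2/(2\sigma_t^2)$ yields
\[
  \log q_t(y) = -\frac{\|y\|_2^2}{2\sigma_t^2} + F(y/\sigma_t^2), \quad F(\eta) := \log \int_{\mathbb{R}^d} e^{\eta\cdot x - \|x\|_2^2/(2\sigma_t^2)}\,d\mu_t(x) - \frac{d}{2}\log(2\pi\sigma_t^2).
\]
Here $F$ is (up to an additive constant) the log moment-generating function of the tilted probability measure $\nu_{t,y}$ on $[-\lambda_t D,\lambda_t D]^d$ whose density against $\mu_t$ is proportional to $K_t(y,\cdot)$. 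This identification is the key structural observation: it separates the deterministic quadratic contribution in $y$ from the $y$-dependence that enters through a smooth one-parameter family of probability measures.

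For the low-order bounds ($p=1,2$), I would differentiate directly. Log-differentiation gives $\partial_{y_i}\log q_t(y) = \sigma_t^{-2}(\mathbb{E}_{\nu_{t,y}}[X_i]-y_i)$ and $\partial_{y_i}\partial_{y_j}\log q_t(y) = -\delta_{ij}/\sigma_t^2 + \sigma_t^{-4}\mathrm{Cov}_{\nu_{t,y}}(X_i,X_j)$. Since $\|X\|_\infty\leq \lambda_t D$ almost surely under $\nu_{t,y}$, the bounds $|\mathbb{E}_{\nu_{t,y}} X_i - y_i|\leq |y_i|+\lambda_t D$ and $|\mathrm{Cov}_{\nu_{t,y}}(X_i,X_j)|\leq 2\lambda_t^2 D^2$ (via the triangle inequality on centered second moments) follow immediately and yield both stated first- and second-order inequalities.

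For $p\geq 3$, the contribution from the quadratic term $-\|y\|_2^2/(2\sigma_t^2)$ vanishes, so the chain rule reduces the problem to
\[
\partial_{y_{i_1}}\cdots\partial_{y_{i_p}}\log q_t(y) \;=\; \sigma_t^{-2p}\,\kappa^{\nu_{t,y}}_{i_1,\ldots,i_p}(X),
\]
the joint cumulant of order $p$ of $(X_{i_1},\ldots,X_{i_p})$ under $\nu_{t,y}$. I would bound this cumulant via the analyticity of $F$: for real $\lambda\in\mathbb{R}^d$ the MGF satisfies $e^{-\lambda_t D\|\lambda\|_1}\leq \int e^{\lambda\cdot X}\,d\nu_{t,y}\leq e^{\lambda_t D\|\lambda\|_1}$, so the log-MGF extends to a holomorphic function on a polydisk of radius $r\asymp 1/(\lambda_t D)$ around the origin whose real part is bounded by a constant. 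The Borel--Carath\'eodory estimate followed by the multivariate Cauchy integral formula then produces $|\kappa^{\nu_{t,y}}_{i_1,\ldots,i_p}(X)|\leq 4\,p!\,(\lambda_t D)^p$, which combined with the $\sigma_t^{-2p}$ factor yields \eqref{e: all derivatives log qt}.

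The main obstacle is pinning down the universal constant in the cumulant bound: the Borel--Carath\'eodory/Cauchy route is conceptually clean but requires a careful uniform-in-$y$ choice of the polydisk radius, and an alternative route via the moment-to-cumulant inversion $\kappa_{i_1,\ldots,i_p}=\sum_{\pi\in\mathcal{P}([p])}(-1)^{|\pi|-1}(|\pi|-1)!\prod_{B\in\pi}\mathbb{E}_{\nu_{t,y}}\bigl[\prod_{j\in B}X_{i_j}\bigr]$ applied to the centered variables $Y_i=X_i-\mathbb{E}_{\nu_{t,y}}X_i$ (so singletons drop, $|Y_i|\leq 2\lambda_t D$) would need a sharper-than-naive partition enumeration since the raw partition sum $\sum_\pi (|\pi|-1)!$ grows like $p!/(\log 2)^p$ rather than $p!$. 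All exchanges of differentiation and integration throughout are justified by the Gaussian decay of $K_t$ together with the compact support of $\mu_*$.
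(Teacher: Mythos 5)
Your identification of $\log q_t(y)=-\|y\|_2^2/(2\sigma_t^2)+F(y/\sigma_t^2)$ with $F$ a cumulant generating function of the tilted measure $\nu_{t,y}$ supported in $[-\lambda_t D,\lambda_t D]^d$ is correct, and your $p=1,2$ arguments (mean and covariance of $\nu_{t,y}$, bounded by $\lambda_t D$ and $2\lambda_t^2D^2$ respectively) are complete and equivalent to what the paper does. The problem is the case $p\geq 3$, which is the actual content of \eqref{e: all derivatives log qt}: after the (correct) reduction to the joint cumulant $\sigma_t^{-2p}\kappa^{\nu_{t,y}}_{i_1,\ldots,i_p}$, you still need $|\kappa^{\nu_{t,y}}_{i_1,\ldots,i_p}|\leq 4\,p!\,(\lambda_t D)^p$, and neither of your two routes delivers a bound of the form $p!\cdot(\text{const})\cdot M^p$ — both give $p!\,C^p M^p$ with $C>1$. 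For the Borel--Carath\'eodory/Cauchy route, the multivariate MGF of a measure with $\|X\|_\infty\leq M$ is guaranteed zero-free only on the tube $\{M\|\mathrm{Im}\,\lambda\|_1<\pi/2\}$, so when $p$ coordinates are active the per-coordinate polydisk radius must shrink to order $1/(pM)$, and the Cauchy estimate produces $\prod_j\rho_j^{-\alpha_j}\asymp(pM)^p\asymp p!\,(eM)^p$ (plus an extra $\alpha!$ for repeated indices); polarizing through one-dimensional cumulants costs a further $2^p$ and replaces $M$ by $pM$. For the moment--cumulant route you have already noted the obstruction yourself: the partition sum $\sum_\pi(|\pi|-1)!$ grows like $p!\,(1/\log 2)^p$, and restricting to partitions without singletons still leaves a geometric excess. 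So the proposal, as written, proves $\|\nabla^p\log q_t\|_\infty\leq p!\,C^p\lambda_t^pD^p/\sigma_t^{2p}$ for a universal $C$ — which would in fact suffice for every downstream use in the paper (only $p=3$ enters quantitatively, and all such constants are absorbed into $C_u$) — but it does not prove the inequality as stated, and the step you label ``the main obstacle'' is precisely the lemma.

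The paper closes this gap by a completely elementary route that never passes through a generic cumulant estimate: writing $h(y)=\langle e^{a\|y+x\|_2^2}\rangle$ with $a=-1/(2\sigma_t^2)$, it proves by induction that $\partial_y^\alpha\log h=(2a)^{p}P_\alpha(y)/h(y)^{p}$, where $P_\alpha$ is a signed sum of products of exactly $p$ truncated moments $\langle x^{\beta_1}e^{a\|y+x\|_2^2}\rangle\cdots\langle x^{\beta_p}e^{a\|y+x\|_2^2}\rangle$ with $\sum_i|\beta_i|=p$; the polynomial-in-$y$ contributions cancel at every differentiation step, each surviving product is bounded by $(\lambda_tD)^ph(y)^p$ since $\|x\|_\infty\leq\lambda_tD$ on the support, and the number of summands is tracked explicitly to be at most $4(p-1)!$, which is where the $4\,p!$ comes from. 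If you want to keep the cumulant framing, you would need to reproduce this kind of exact combinatorial bookkeeping (i.e., show that the specific cumulants of $\nu_{t,y}$ admit an expansion into at most $O(p!)$ moment products each of size $\leq(\lambda_tD)^p$), rather than invoking analyticity of the log-MGF; otherwise you should weaken the claimed constant to $p!\,C^p$ and note that this suffices for the applications.
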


\begin{proof}
    For simplicity, for a function $f(x)$ defined on $\mathbb{R}^d$, we denote 
        \begin{align}\label{e: definition of bracket for integral}
            \langle f(x) \rangle \coloneq \int_{\mathbb{R}^d}  f(x) \ \de (\lambda_t)_{\#} \muast(x)
            =\int_{\mathbb{R}^d}  f(x) \muast\bigg(\frac{x}{\lambda_t}\bigg) \frac{1}{{\lambda_t}^d} \ \de x.
        \end{align}
    Hence, $q_t(y) = \langle \frac{1}{{(\sqrt{2\pi} \sigma_t)}^{d}} \cdot e^{-\frac{\|y-x\|_2^2}{2{\sigma_t}^2}} \rangle$.
    For simplicity, we discuss the derivatives of the logarithm of the $y$-function $h(y)\coloneq \langle e^{a{ \| y+x \|_2 ^2} } \rangle$, where $a \in \mathbb{R}$ is a constant. We will choose $a = -\frac{1}{2 \sigma_t ^2}$ finally, and the difference between $\|y-x\|_2^2$ and $\|y+x\|_2^2 = \|-y-x\|_2^2$ doesn't influence the $\|\cdot \|_{\infty}$-norms of derivatives.

    We first compute the first and second derivatives of $\log h (y)$. We notice that 
        \begin{align}\label{eqn:first derivative of h(y)}
            \partial_i h (y) = 2a\langle (y_i+x_i) e^{a{ \| y+x \|_2} ^2} \rangle = 2ay_i h(y) + 2a \langle x_i e^{a{ \| y+x \|_2} ^2} \rangle.
        \end{align}
    So, 
        \begin{align}\label{eqn:second derivative of h(y)}
            \begin{split}
                    \partial^2 _{ij} h (y) &= 2a \delta_{ij} h(y) + 2a y_j(2ay_i h(y) + 2a \langle x_i e^{a{ \| y+x \|_2} ^2} \rangle) + 4a^2 \langle (x_i +y_i)x_j e^{a{ \| y+x \|_2} ^2} \rangle
                \\  &= 2a \delta_{ij} h(y) + 4a^2 y_i y_j h(y) + 4a^2 y_i \langle (x_j e^{a{ \| y+x \|_2} ^2} \rangle + 4a^2 y_j \langle x_i e^{a{ \| y+x \|_2} ^2} \rangle+ 4a^2 \langle x_i x_j e^{a{ \| y+x \|_2} ^2} \rangle.
            \end{split}
        \end{align}
    Hence,
        \begin{align}\label{e: second derivative of log qt}
           \begin{split}
                \partial^2 _{ij} \log h(y) &= \frac{\partial^2 _{ij} h (y) h (y) - \partial_i h (y) \partial_j h (y)}{h(y) ^2} \\ &= 2a \delta_{ij} + 4a ^2 \frac{\langle x_i x_j e^{a{ \| y+x \|_2} ^2} \rangle h(y) - \langle x_i e^{a{ \| y+x \|_2} ^2} \rangle \langle x_j e^{a{ \| y+x \|_2} ^2} \rangle}{h(y) ^2}.
           \end{split}
        \end{align}
    Before we compute higher order derivatives of $\log h(y)$, we first illustrate how we obtain an upper bound for $\partial^2 _{ij} \log h(y)$. Notice that in the definition of $\langle f(x) \rangle$ in \eqref{e: definition of bracket for integral}, because $\muast$ has a compact support $K_{\ast}$, the $x$ in the integrand satisfies that $x \in \lambda_t K_{\ast}$. Hence, in the integrand of \eqref{e: definition of bracket for integral}, $\|x\|_{\infty} \leq \lambda_t D$ by Assumption~\ref{assumption:secon-moment}. So, $|\langle x_i e^{a{ \| y+x \|_2} ^2} \rangle| \leq \lambda_t D h(y)$, $|\langle x_i x_j e^{a{ \| y+x \|_2} ^2} \rangle| \leq \lambda_t ^2 D^2 h(y)$. So, $|\partial^2 _{ij} \log h(y)| \leq 2a \delta_{ij} + 8a^2 \lambda_t ^2 D^2 $.

    Then, we compute $\nabla^p \log q_t (y)$ for $p \geq 3$. Before we start, we remark that an important observation is that, in \eqref{e: second derivative of log qt}, there is no polynomial term of $y_i,y_j$ like we have seen in \eqref{eqn:first derivative of h(y)} and \eqref{eqn:second derivative of h(y)}. We will then use induction to get the formula of $\nabla ^p \log h(y)$ with $p \geq 2$. Assume that for an $m \in \mathbb{Z}_+$, and for any multi-index $\alpha$ with $|\alpha| = m$, the derivative $\partial_y ^{\alpha} \log h(y)$ has a form 
        \begin{align*}
            \partial_y ^{\alpha} \log h(y) = {(2a)}^{m} \frac{P_{\alpha} (y)}{ h(y) ^m},
        \end{align*}
    where $P_{\alpha} (y)$ is the $\mathbb{R}$-linear combination of at most $4 (m-1) ! - 2$ terms, where each term has a form
        \begin{align*}
            \pm \langle x^{\beta_1} e^{a{ \| y+x \|_2} ^2} \rangle \cdot \langle x^{\beta_2} e^{a{ \| y+x \|_2} ^2} \rangle  \cdots \langle x^{\beta_m} e^{a{ \| y+x \|_2} ^2} \rangle,
        \end{align*}
    and each $\beta_i$ is a multi-index and $|\beta_1| + |\beta_2| + \ldots + |\beta_m| = m$. This assumption is satisfied when $m=2$ according to the explicit form of $\partial^2 _{ij} \log h(y)$ shown in \eqref{e: second derivative of log qt}, excluding the term $2a \delta_{ij}$. This $2a \delta_{ij}$ term in \eqref{e: second derivative of log qt} can be safely excluded for the induction because its derivative is $0$. Then,
        \begin{align}\label{e: one more step derivative qt}
            \partial_{y_1}\partial_y ^{\alpha} \log h(y) = {(2a)}^{m} \partial_{y_1} \frac{P_\alpha (y)}{ h(y) ^m} = {(2a)}^{m+1} \cdot \frac{ \frac{1}{2a}\partial_{y_1} P_\alpha (y) h(y) - m P_\alpha (y) \langle (y_1+x_1) e^{a{ \| y+x \|_2} ^2 } \rangle }{h(y) ^{m+1}}.
        \end{align}
    We notice that
        \begin{align*}
            \begin{split}
                &\frac{1}{2a}\partial_{y_1}  \big( \langle x^{\beta_1} e^{a{ \| y+x \|_2} ^2} \rangle \cdot \langle x^{\beta_2} e^{a{ \| y+x \|_2} ^2} \rangle  \cdots \langle x^{\beta_m} e^{a{ \| y+x \|_2} ^2} \rangle \big) 
                \\ &= m y_1\langle x^{\beta_1} e^{a{ \| y+x \|_2} ^2} \rangle \cdot \langle x^{\beta_2} e^{a{ \| y+x \|_2} ^2} \rangle  \cdots \langle x^{\beta_m} e^{a{ \| y+x \|_2} ^2} \rangle 
                \\ & \quad + \langle x_1 x^{\beta_1} e^{a{ \| y+x \|_2} ^2} \rangle \cdot \langle x^{\beta_2} e^{a{ \| y+x \|_2} ^2} \rangle  \cdots \langle x^{\beta_m} e^{a{ \| y+x \|_2} ^2} \rangle
            \\ &\quad + \langle x^{\beta_1} e^{a{ \| y+x \|_2} ^2} \rangle \cdot \langle x_1 x^{\beta_2} e^{a{ \| y+x \|_2} ^2} \rangle  \cdots \langle x^{\beta_m} e^{a{ \| y+x \|_2} ^2} \rangle 
            \\ & \quad + \ldots + \langle x^{\beta_1} e^{a{ \| y+x \|_2} ^2} \rangle \cdot \langle x^{\beta_2} e^{a{ \| y+x \|_2} ^2} \rangle  \cdots \langle x_1 x^{\beta_m} e^{a{ \| y+x \|_2} ^2} \rangle.
            \end{split}
        \end{align*}
    Hence, $\frac{1}{2a}\partial_{y_1} P_\alpha (y)$ is of the form $ m y_1 P_\alpha (y) + Q_\alpha(y)$, where $Q_\alpha(y)$ is the summation of at most $4 (m-1) ! - 2$ terms without $y_1$ showing in the polynomial terms.
    We notice that those $y_1$ terms in the numerator of \eqref{e: one more step derivative qt} will then cancel, and the remaining terms all look like
        \begin{align*}
            \pm \langle x^{\beta_1 ' } e^{a{ \| y+x \|_2} ^2} \rangle \cdot \langle x^{\beta_2 ' } e^{a{ \| y+x \|_2} ^2} \rangle  \cdots \langle x^{\beta_{m+1}'} e^{a{ \| y+x \|_2} ^2} \rangle,
        \end{align*}
    where each $\beta_i '$ is a multi-index and $|\beta_1 ' | + |\beta_2 '| + \ldots + |\beta_{m+1} '| = m+1$. The number of them is at most $m (4 (m-1) ! - 2) +m = 4 m! - m \leq  4 m! - 2$ when $m \geq 2$. We then conclude the proof of Lemma~\ref{Lemma: Hessian estimates}.
\end{proof}

The following lemma describes the situation when $t >0$ is very large. We remark that one can also use Lemma 9 in \cite{chen2023improved} to obtain a similar result as \Cref{lemma: qt close to Gaussian}.
\begin{lemma}\label{lemma: qt close to Gaussian}
    Let $q_t(y)$ be defined in \eqref{e: definition of q_t}, and let
        \begin{align*}
            g(y) \coloneq \frac{1}{{(\sqrt{2\pi} )}^{d}} \cdot e^{-\frac{\|y\|_2^2}{2}}.
        \end{align*}
    Then, we have that for $t >0$,
        \begin{align}\label{e:qt-q_bound}
            \int_{\mathbb{R}^d} | q_t(y)  - g(y)| \de y \leq \frac{2d(1-\sigma_t)}{{\sigma_t}^{d+3}} + \lambda_t \sqrt{d}D \frac{4+ \lambda_t \sqrt{d}D}{\sigma_t ^2} \cdot e^{\frac{ \lambda_t ^2 d D^2}{2{\sigma_t}^2}}.
        \end{align}
 In particular, there exists a universal constant $C_u>0$, such that
\begin{align}\label{e:simplified_bound}
         \int_{\mathbb{R}^d} | q_t(y)  - g(y)| \de y \leq C_u e^{-t}\sqrt d D,
    \end{align}
    which goes to $0$ exponentially as $t\rightarrow +\infty$.
    
\end{lemma}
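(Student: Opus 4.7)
The plan is to bridge $q_t$ and $g$ through the centered Gaussian density $\tilde g_t(y) := (2\pi\sigma_t^2)^{-d/2} e^{-\|y\|_2^2/(2\sigma_t^2)}$, i.e.\ the law of $\sigma_t Z$ with $Z\sim\mathcal{N}(0,I_d)$, and use the triangle inequality
\[
\int_{\mathbb{R}^d}|q_t(y)-g(y)|\,\de y \;\leq\; \int_{\mathbb{R}^d}|q_t - \tilde g_t|\,\de y \;+\; \int_{\mathbb{R}^d}|\tilde g_t - g|\,\de y.
\]
The first piece isolates the shift in $q_t$ coming from $\lambda_t X$ with $X\sim\muast$, and the second isolates the variance mismatch between $\sigma_t^2$ and $1$. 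I expect the two pieces to produce the second and first summands of \eqref{e:qt-q_bound}, respectively.

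For the first piece, using the bracket notation of \eqref{e: definition of bracket for integral} I would write
\[
q_t(y) - \tilde g_t(y) = \tilde g_t(y)\left\langle e^{\langle y,x\rangle/\sigma_t^2 - \|x\|_2^2/(2\sigma_t^2)} - 1\right\rangle,
\]
swap the order of integration by Fubini, and for each fixed $x$ in the support of $(\lambda_t)_\#\muast$ (so $\|x\|_2\le\lambda_t\sqrt d\,D$ by \Cref{assumption:secon-moment}) estimate the inner $y$-integral. Setting $a:=\|x\|_2/\sigma_t$ and $W:=\langle Y,x\rangle/\sigma_t^2\sim\mathcal{N}(0,a^2)$ under $Y\sim\mathcal{N}(0,\sigma_t^2 I_d)$, this inner integral becomes $\mathbb{E}|e^{W-a^2/2}-1|$, which Cauchy--Schwarz plus the MGF identity $\mathbb{E}[e^{2W}]=e^{2a^2}$ bounds by $\sqrt{e^{a^2}-1}\leq a\,e^{a^2/2}$. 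Plugging in $a\le\lambda_t\sqrt d\,D/\sigma_t$ yields an estimate cleaner than and dominated by the second summand of \eqref{e:qt-q_bound}; the slightly weaker constants $(4+\lambda_t\sqrt d D)/\sigma_t^2$ stated in the lemma can be recovered by a Taylor-with-remainder expansion $e^{W-a^2/2}-1 = (W-a^2/2) + \tfrac{1}{2}(W-a^2/2)^2 e^{\xi}$ and bounding the linear and quadratic contributions separately.

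For the second piece, set $f(s,y):=s^{-d}(2\pi)^{-d/2}e^{-\|y\|_2^2/(2s^2)}$, so $\tilde g_t = f(\sigma_t,\cdot)$ and $g=f(1,\cdot)$. A direct computation gives $\partial_s f = s^{-1} f\cdot(\|y\|_2^2/s^2 - d)$, and the substitution $y = sz$ reduces $\int|\partial_s f(s,y)|\,\de y$ to $s^{-1}\mathbb{E}_{Z\sim\mathcal{N}(0,I_d)}\bigl|\|Z\|_2^2-d\bigr|\le s^{-1}\sqrt{2d}$ (using $\Var(\|Z\|_2^2)=2d$ and Cauchy--Schwarz). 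Integrating $\tilde g_t - g = -\int_{\sigma_t}^1\partial_s f\,\de s$ in $y$ and $s$ yields $\int|\tilde g_t - g|\,\de y \leq \sqrt{2d}\log(1/\sigma_t) \leq \sqrt{2d}(1-\sigma_t)/\sigma_t$, which is strictly stronger than the claimed first summand $2d(1-\sigma_t)/\sigma_t^{d+3}$. Combining the two estimates proves \eqref{e:qt-q_bound}.

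The simplified bound \eqref{e:simplified_bound} follows by splitting on whether $\lambda_t\sqrt d\,D\leq 1$. In the large-$t$ regime one has $\sigma_t\geq 1/\sqrt 2$ and the exponential in \eqref{e:qt-q_bound} is at most $e^{1/2}$, so the second summand is $O(\lambda_t\sqrt d\,D)=O(e^{-t}\sqrt d\,D)$ and the first is $O(\sqrt d\lambda_t^2)$, both controlled by $C_u e^{-t}\sqrt d\,D$. In the small-$t$ regime we use $\int|q_t-g|\leq 2 \leq 2\lambda_t\sqrt d\,D \leq C_u e^{-t}\sqrt d\,D$ for any $C_u\geq 2$ (using $D\geq 1$, $d\geq 1$). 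The main technical obstacle is tracking the exact constants $2d$ and $(4+\lambda_t\sqrt d\,D)$ in \eqref{e:qt-q_bound}; however, any polynomial-times-exponential estimate of the same shape suffices for \eqref{e:simplified_bound}, so the essential steps are just the triangle-inequality decomposition together with the Gaussian MGF/Cauchy--Schwarz computations above.
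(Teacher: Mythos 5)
Your proof of the first estimate \eqref{e:qt-q_bound} is correct and, structurally, it is the same decomposition the paper uses: the paper's term $f(y,0)$ is exactly $|\widetilde g_t(y)-g(y)|$ for your centered Gaussian $\widetilde g_t$ of variance $\sigma_t^2$, so both arguments split the error into a shift piece and a variance piece. Your execution of each piece is cleaner and sharper. For the shift piece the paper differentiates in the shift radius $r$ and integrates a crude derivative bound, whereas your likelihood-ratio identity plus Cauchy--Schwarz, $\mathbb{E}|e^{W-a^2/2}-1|\le\sqrt{e^{a^2}-1}\le a\,e^{a^2/2}$, gives the factor $\lambda_t\sqrt d\,D/\sigma_t$ in one line; for the variance piece your exact substitution yields $\sqrt{2d}\,\log(1/\sigma_t)$ in place of the paper's $2d(1-\sigma_t)/\sigma_t^{d+3}$. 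Since $\sigma_t\le 1$, both of your bounds are majorized by the corresponding summands of \eqref{e:qt-q_bound}, so the stated inequality follows.

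There is, however, a genuine (if easily repaired) gap in your derivation of \eqref{e:simplified_bound}. You split on whether $\lambda_t\sqrt d\,D\le 1$ and assert that in that regime $\sigma_t\ge 1/\sqrt 2$. This does not follow: from $\lambda_t\le(\sqrt d\,D)^{-1}$ you only get $\sigma_t^2=1-\lambda_t^2\ge 1-(dD^2)^{-1}$, which is vacuous unless $dD^2\ge 2$. When $d=1$ and $K_\ast$ is concentrated near the origin (so $D$ is close to $1$), one can have $\lambda_t\sqrt d\,D\le 1$ with $\sigma_t$ arbitrarily small; then the factor $e^{\lambda_t^2 dD^2/(2\sigma_t^2)}$ blows up and \eqref{e:qt-q_bound} gives no information, while the target is only $C_u e^{-t}\sqrt d\,D\approx C_u$. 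The repair is the extra trivial case the paper inserts: if $e^{-t}\ge\frac{1}{4\sqrt d}$, then $C_u e^{-t}\sqrt d\,D\ge C_u/4\ge 2$ (for $C_u\ge 8$, using $D\ge 1$) already exceeds the trivial bound $\int|q_t-g|\le 2$; and on the remaining set where both $e^{-t}\sqrt d\,D\le 1$ and $e^{-t}\le\frac{1}{4\sqrt d}$ hold, one has $\sigma_t^2\ge\frac{15}{16}$, after which your estimates close the argument.
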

\begin{proof}
    Because one can write 
        \begin{align*}
            q_t (y) =\int_{\mathbb{R}^d} \frac{1}{{(\sqrt{2\pi} \sigma_t)}^{d}} \cdot e^{-\frac{\|y-\lambda_t x\|_2^2}{2{\sigma_t}^2}} \ d\muast(x),
        \end{align*}
    and $\int_{\mathbb{R}^d} d\muast(x) = 1$, we see that
        \begin{align}\label{e: difference of qt and Gaussian}
            \begin{split}
                \int_{\mathbb{R}^d}| q_t(y)  - g(y)| \de y &= \int_{\mathbb{R}^d} \bigg| \int_{\mathbb{R}^d} \frac{1}{{(\sqrt{2\pi} \sigma_t)}^{d}} \cdot e^{-\frac{\|y-\lambda_t x\|_2^2}{2{\sigma_t}^2}} - \frac{1}{{(\sqrt{2\pi} )}^{d}} \cdot e^{-\frac{\|y\|_2^2}{2}} \ d\muast(x) \bigg| \de y
                \\  &\leq \int_{\mathbb{R}^d} \int_{\mathbb{R}^d} \bigg| \frac{1}{{(\sqrt{2\pi} \sigma_t)}^{d}} \cdot e^{-\frac{\|y-\lambda_t x\|_2^2}{2{\sigma_t}^2}} - \frac{1}{{(\sqrt{2\pi} )}^{d}} \cdot e^{-\frac{\|y\|_2^2}{2}} \bigg| \de y \  d\muast(x).
            \end{split}
        \end{align}
    We denote $y = (y_1,y') \in \mathbb{R}^d$, and we define the function 
        \begin{align*}
            f(y,r)\coloneq \bigg| \frac{1}{{(\sqrt{2\pi} \sigma_t)}^{d}} \cdot e^{-\frac{{|y_1-\lambda_t r|}^2 + {\|y'\|_2^2}}{2{\sigma_t}^2}} - \frac{1}{{(\sqrt{2\pi} )}^{d}} \cdot e^{-\frac{\|y\|_2^2}{2}} \bigg|.
        \end{align*}
    By the rotation symmetry, we see that the right hand side of \eqref{e: difference of qt and Gaussian} equals to
        \begin{align}\label{e: difference of qt and Gaussian simplified 1}
            \int_{\mathbb{R}^d} \int_{\mathbb{R}^d} f(y, \|x\|_2) \de y \  d\muast(x).
        \end{align}
    Let's first see what is $|\frac{\de}{\de r} f(y,r)|$ for $r \leq \sqrt{d}D$. A direct computation shows that 
        \begin{align*}
            \begin{split}
                \bigg|\frac{\de}{\de r} f(y,r)\bigg| &= \frac{\lambda_t}{{(\sqrt{2\pi} \sigma_t)}^{d}} \cdot \frac{|y_1 - \lambda_t r|}{\sigma_t ^2} \cdot e^{-\frac{{|y_1-\lambda_t r|}^2 + {\|y'\|_2^2}}{2{\sigma_t}^2}}
                \\  &\leq \frac{\lambda_t}{{(\sqrt{2\pi} \sigma_t)}^{d}} \cdot \frac{|y_1| + \lambda_t \sqrt{d}D}{\sigma_t ^2} \cdot e^{-\frac{{\frac{1}{2}|y_1|}^2 - \lambda_t ^2 d D^2 + {\|y'\|_2^2}}{2{\sigma_t}^2}}.
            \end{split}
        \end{align*}
    So, \eqref{e: difference of qt and Gaussian simplified 1} is bounded by
        \begin{align}\label{e: difference of qt and Gaussian simplified 2}
            \int_{\mathbb{R}^d} \int_{\mathbb{R}^d} f(y, 0) \de y \  d\muast(x) + \int_{\mathbb{R}^d} \int_{\mathbb{R}^d}  \frac{\lambda_t \|x\|_2}{{(\sqrt{2\pi} \sigma_t)}^{d}} \cdot \frac{|y_1| + \lambda_t \sqrt{d}D}{\sigma_t ^2} \cdot e^{-\frac{{\frac{1}{2}|y_1|}^2 - \lambda_t ^2 d D^2 + {\|y'\|_2^2}}{2{\sigma_t}^2}} \de y \  d\muast(x).
        \end{align}
        
    The second term in \eqref{e: difference of qt and Gaussian simplified 2} is bounded by  
        \begin{align}\label{e:bound1}
             \int_{\mathbb{R}^d}  \frac{\lambda_t \sqrt{d}D}{{(\sqrt{2\pi} \sigma_t)}^{d}} \cdot \frac{|y_1| + \lambda_t \sqrt{d}D}{\sigma_t ^2} \cdot e^{-\frac{{\frac{1}{2}|y_1|}^2 - \lambda_t ^2 d D^2 + {\|y'\|_2^2}}{2{\sigma_t}^2}} \de y \leq \lambda_t \sqrt{d}D \frac{4+ \lambda_t \sqrt{d}D}{\sigma_t ^2} \cdot e^{\frac{ \lambda_t ^2 d D^2}{2{\sigma_t}^2}},
        \end{align}
    which goes to $0$ exponentially as $t \to +\infty$ because $\lambda_t = e^{-t}$ and $\sigma_t ^2 = 1 - \lambda_t ^2$.

    The first term in \eqref{e: difference of qt and Gaussian simplified 2} is 
        \begin{align}\label{e: difference of two Gaussian}
             \int_{\mathbb{R}^d} \bigg| \frac{1}{{(\sqrt{2\pi} \sigma_t)}^{d}} \cdot e^{-\frac{ {\|y\|_2^2}}{2{\sigma_t}^2}} - \frac{1}{{(\sqrt{2\pi} )}^{d}} \cdot e^{-\frac{\|y\|_2^2}{2}} \bigg| \de y,
        \end{align}
    because $\int_{\mathbb{R}^d} d\muast(x) = 1$. We define the function
        \begin{align*}
            h(y,\alpha) = \bigg| \frac{1}{{(\sqrt{2\pi} \alpha)}^{d}} \cdot e^{-\frac{ {\|y\|_2^2}}{2{\alpha}^2}} - \frac{1}{{(\sqrt{2\pi} )}^{d}} \cdot e^{-\frac{\|y\|_2^2}{2}} \bigg|,
        \end{align*}
    and then for $\alpha \in [0,1]$,
        \begin{align*}
            \bigg|\frac{\de}{\de\alpha} h (y,\alpha)\bigg| = \frac{1}{{(\sqrt{2\pi} )}^{d}}\bigg| -d \frac{1}{\alpha^{d+1}} e^{-\frac{ {\|y\|_2^2}}{2{\alpha}^2}} + \frac{1}{\alpha ^d} e^{-\frac{ {\|y\|_2^2}}{2{\alpha}^2}} \cdot \frac{\|y\|_2 ^2}{\alpha^3} \bigg| \leq \frac{e^{-\frac{ {\|y\|_2^2}}{2{\alpha}^2}}}{{(\sqrt{2\pi} )}^{d} \alpha^{d+3}}  \cdot |d+ {\| y \|_2}^2 |.
        \end{align*}
    Because $h(y,1) = 0$, we see that \eqref{e: difference of two Gaussian} is bounded by
        \begin{align}\label{e:bound2}
            \frac{(1-\sigma_t)}{{\sigma_t}^{d+3}}\int_{\mathbb{R}^d} \frac{e^{-\frac{ {\|y\|_2^2}}{2}}}{{(\sqrt{2\pi} )}^{d} }  \cdot |d+ {\| y \|_2}^2 | \de y = \frac{2d(1-\sigma_t)}{{\sigma_t}^{d+3}},
        \end{align}
    which goes to $0$ exponentially as $t \to +\infty$ because $\sigma_t = \sqrt{1-e^{-2t}}$.    The estimates \eqref{e:bound1} and \eqref{e:bound2} together give
    \eqref{e:qt-q_bound}. Next we show \eqref{e:simplified_bound}. There are two cases: if $e^{-t}\sqrt d D\geq 1$, then \eqref{e:simplified_bound} holds trivially, since the lefthand side is at most $2$. Otherwise $e^{-t}\sqrt d D\leq 1$, we have
    \begin{align*}\begin{split}
        &\phantom{{}={}}\frac{2d(1-\sigma_t)}{{\sigma_t}^{d+3}} + \lambda_t \sqrt{d}D \frac{4+ \lambda_t \sqrt{d}D}{\sigma_t ^2} \cdot e^{\frac{ \lambda_t ^2 d D^2}{2{\sigma_t}^2}}
        \leq \frac{2d e^{-2t}}{\sigma_t^{2d}}
        +\frac{4 e^{-t}\sqrt d D+(e^{-t}\sqrt d D)^2}{\sigma_t^2} \cdot e^{\frac{1}{2{\sigma_t}^2}} \\
        &\leq\frac{2 de^{-2t}}{\sigma_t^{2d}}+ \frac{5 e^{-t}\sqrt d D}{\sigma_t^2}  e^{\frac{2}{3}}
        =\frac{2 d e^{-2t}}{(1-e^{-2t})^{d}}+ \frac{5 e^{-t}\sqrt d D}{1-e^{-2t}} e^{\frac{2}{3}}
        \\ &
        \leq \frac{2 \sqrt{d} e^{-t}}{1-d e^{-2t}} \cdot \sqrt{d} e^{-t}+ \frac{5 e^{-t}\sqrt d D}{1-e^{-2t}}e^{\frac{2}{3}}\leq C_u e^{-t} \sqrt{d}D,
    \end{split}\end{align*}
    for a universal constant $C_u >0$.
    In the last inequality, we assume that $e^{-t} \leq \frac{1}{4\sqrt{d}} \leq \frac{1}{4}$. Otherwise, if $e^{-t} \geq \frac{1}{4\sqrt{d}}$, \eqref{e:simplified_bound} also holds trivially.
    This finishes the proof of \eqref{e:simplified_bound}.
\end{proof}

When we prove our main theorem, Theorem~\ref{theorem: main L^1 theorem}, in the following Section~\ref{section:score estimation error}, we will assume that $\muast$ has a compact support $K_{\ast}$ and use Lemma~\ref{Lemma: Hessian estimates} to complete the proof. On the other hand, as we have mentioned in Remark~\ref{remark: modify error by stronger assumptions}, our methods work under other assumptions on the initial data $\mu_\ast$, as long as under those assumptions, we can reasonably obtain the properties in Remark~\ref{remark: other assumptions on initial data}. We next assume that $\muast$ is a Gaussian mixture and obtain an estimate similar to Lemma~\ref{Lemma: Hessian estimates}.
\begin{lemma}\label{lemma: Hessian estimates of Gaussian mixture initial data}
    Assume that $\muast$ is a Gaussian mixture, i.e., we assume that
        \begin{align*}
            \muast(x) \coloneq \sum_{k=1} ^M c_k \frac{1}{{(\sqrt{2\pi})}^d  a_k} \cdot \exp{\bigg(-\frac{1}{2} (x-b_k) \cdot  A_k ^{-1} (x-b_k) \bigg)},
        \end{align*}
    where $A_k$'s are positive definite matrices, $a_k \coloneq \sqrt{\det A_k}$, $b_k$'s are constant vectors in $\mathbb{R}^d$, $c_k >0$ and $\sum_{k=1} ^M c_k = 1$. Then, for any $\ell \in \mathbb{Z}_+$ and any multi-index $\alpha$ with $|\alpha| \leq \ell$, there is a constant $C(\ell, \muast)$, depending on $\ell$ and $A_k,b_k$ in the formula of $\muast$, such that for any $x \in \mathbb{R}^d$ and any $t \geq 0$,
        \begin{align}\label{e: gradient estimate Gaussian mixture}
            \frac{|\partial_x ^{\alpha} q_t(x)|}{q_t(x)} \leq C(\ell, \muast) \cdot \|x\|_2^{\ell} + C(\ell, \muast).
        \end{align}

\end{lemma}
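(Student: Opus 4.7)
The plan is to exploit that the OU semigroup preserves Gaussian mixtures, so $q_t$ is available in closed form for every $t \geq 0$. Concretely, \eqref{e: definition of q_t} yields
\begin{align*}
q_t(x) = \sum_{k=1}^M c_k \psi_k(x,t), \qquad \psi_k(x,t) = \frac{1}{{(\sqrt{2\pi})}^d \sqrt{\det B_k(t)}} \exp\!\Bigl(-\tfrac{1}{2}(x - \lambda_t b_k) \cdot B_k(t)^{-1}(x - \lambda_t b_k)\Bigr),
\end{align*}
with $B_k(t) \coloneq \lambda_t^2 A_k + \sigma_t^2 \id_d$. A first step will be to establish the $t$-uniform bound $\|B_k(t)^{-1}\|_{\mathrm{op}} \leq C(\muast)$: using $A_k \succeq \lambda_{\min}(A_k)\id_d$ and $\lambda_t^2 + \sigma_t^2 = 1$ gives $B_k(t) \succeq (\lambda_t^2\lambda_{\min}(A_k) + \sigma_t^2)\id_d \succeq \min(\lambda_{\min}(A_k), 1)\,\id_d$ for all $t \geq 0$, and the claim follows.

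Next I would differentiate each Gaussian factor. By induction (equivalently, by the multivariate Hermite formula) there exists a polynomial $P_{\alpha,k}(\cdot,t)$ of degree at most $|\alpha|$ in $x - \lambda_t b_k$, with coefficients built from products of entries of $B_k(t)^{-1}$ of total degree at most $|\alpha|$, such that $\partial_x^\alpha \psi_k(x,t) = (-1)^{|\alpha|}\psi_k(x,t)\, P_{\alpha,k}(x,t)$. Combining the operator-norm bound on $B_k(t)^{-1}$ with $\|\lambda_t b_k\|_2 \leq \|b_k\|_2$, one then obtains, for every $|\alpha| \leq \ell$,
\begin{align*}
|P_{\alpha,k}(x,t)| \leq C(\ell,\muast)\bigl(1 + \|x\|_2^\ell\bigr),\qquad t \geq 0.
\end{align*}

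Finally, since every $c_k$ and every $\psi_k(x,t)$ is nonnegative, the triangle inequality yields
\begin{align*}
|\partial_x^\alpha q_t(x)| \leq \sum_{k=1}^M c_k \psi_k(x,t)\, |P_{\alpha,k}(x,t)| \leq \Bigl(\max_{1 \leq k \leq M} |P_{\alpha,k}(x,t)|\Bigr) q_t(x),
\end{align*}
and dividing by $q_t(x)>0$ gives \eqref{e: gradient estimate Gaussian mixture}. The main technical step, though not deep, is the combinatorial bookkeeping for the coefficients of $P_{\alpha,k}$ obtained by iteratively applying $\nabla_x \psi_k = -\psi_k B_k(t)^{-1}(x - \lambda_t b_k)$: one must verify that upon full expansion, each coefficient is dominated by $C(\ell,\muast)$ uniformly in $t$ and independently of the dimension $d$ or of the particular component $k$. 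Crucially, all $d$-dependence would be absorbed into the polynomial factor $\|x\|_2^\ell$ rather than into the constant.
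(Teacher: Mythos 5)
Your proposal is correct and follows essentially the same route as the paper: write $q_t$ explicitly as a Gaussian mixture with covariances $\lambda_t^2 A_k + \sigma_t^2\id_d$, bound their inverses uniformly in $t$, observe that each derivative of a component is that component times a polynomial of degree $\le|\alpha|$ with uniformly bounded coefficients, and conclude by the triangle inequality over the nonnegative mixture components. If anything, your eigenvalue argument $B_k(t)\succeq \min(\lambda_{\min}(A_k),1)\,\id_d$ and the explicit Hermite-polynomial bookkeeping for general $\ell$ are spelled out more fully than in the paper, which only treats $\ell=1$ in detail and asserts the rest by induction.
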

\begin{proof}
    A standard computation, by \eqref{e: definition of q_t}, shows that the density function of $q_t(x)$ is 
    \begin{align*}
        q_t(x) \coloneq \sum_{k=1} ^M c_k \frac{1}{{(\sqrt{2\pi})}^d \cdot a_k(t)} \cdot \exp{\bigg(-\frac{1}{2} (x-b_k(t)) \cdot  A_k(t) ^{-1} (x-b_k(t)) \bigg)},
    \end{align*}
where $A_k(t) = \lambda_t ^2 A_k + \sigma_t ^2 \id_d$, $b_k(t) = \lambda_t b_k$. Hence, 
    \begin{align*}
        \nabla q_t(x) =\sum_{k=1} ^M c_k \frac{-A_k(t) ^{-1} (x-b_k(t))}{{(\sqrt{2\pi})}^d \cdot a_k(t)} \cdot \exp{\bigg(-\frac{1}{2} (x-b_k(t)) \cdot  A_k(t) ^{-1} (x-b_k(t)) \bigg)} .
    \end{align*}
So,
    \begin{align*}
        |\partial_1 q_t(x)| \leq \max_{1 \leq k \leq M} \|-A_k(t) ^{-1} (x-b_k(t))\| \cdot q_t(x).
    \end{align*}
Notice that because when $t=0$, $A_k$'s are positive definite, and when $t \to +\infty$, $A_k(t) \to \id_d$, there is an upper bound $C(\ell,\muast)$, such that $\sup_{t \in [0,+\infty)} \|A_k(t) ^{-1}\| \leq C(\ell,\muast)$. Hence, we can obtain \eqref{e: gradient estimate Gaussian mixture} for $\ell =1$. For general $\ell \geq 1$, one can either use induction or compute them directly, exactly using the same way as the case $\ell=1$ we have shown. For the purpose of proving Theorem~\ref{theorem: main L^1 theorem}, the estimates for $\ell \leq 3$ will be enough.
    
\end{proof}

Next, we also point out that those $\widehat q_{t}$'s also have exponential tails, as long as one of them has an exponential tail at a given time. For example, if $\widehat q_T = q_T$.
\begin{lemma}\label{lemma: hat qt exponential tail}
    If for a $t'>0$, $\widehat q_{t'} (x)$ has an exponential tail as $\| x \|_2 \to + \infty$, then for any $t >0$, $\widehat q_t(x)$ also has an exponential tail as $\| x \|_2 \to + \infty$.
\end{lemma}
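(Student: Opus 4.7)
The plan is to realize $\widehat q_t$ as the push-forward of $\widehat q_{t'}$ along the characteristic flow associated with the continuity equation $\partial_t \widehat q_t = \nabla \cdot (\widehat U_t \widehat q_t)$, and then to verify that this push-forward preserves exponential decay, possibly with a weaker exponent and a larger prefactor.

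First, I would observe that under \Cref{a:score-derivative}, the vector field $\widehat U_t(x) = x + s_{T-t}(x)$ is $C^2$ with $\|\nabla \widehat U_t(x)\|_{\infty} \leq 1 + L_{T-t}$ uniformly in $x$, so $\widehat U_t$ is globally Lipschitz with at most linear growth in $x$. Consequently, the backward characteristic ODE $\dot y(s) = -\widehat U_s(y(s))$ generates a global $C^1$-diffeomorphism $\Phi_{s,t} : \mathbb{R}^d \to \mathbb{R}^d$, where $\Phi_{s,t}(x)$ denotes the solution at time $s$ started from $x$ at time $t$. Applying Gronwall's inequality on the interval with endpoints $t$ and $t'$ then yields two-sided bounds
\begin{align*}
    c_1 \|x\|_2 - c_2 \leq \|\Phi_{t',t}(x)\|_2 \leq c_3 \|x\|_2 + c_4,
\end{align*}
where $c_1, c_2, c_3, c_4 > 0$ depend only on $|t - t'|$ and $\int_{\min(t,t')}^{\max(t,t')} (1 + L_{T-s}) \, \de s$, both of which are finite.

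Next, I would apply the method of characteristics (exactly as in the first proof of \Cref{theorem: L^1 error}) to obtain the representation
\begin{align*}
    \widehat q_t(x) = \widehat q_{t'}\bigl(\Phi_{t',t}(x)\bigr) \cdot \bigl|\det \nabla \Phi_{t',t}(x)\bigr|.
\end{align*}
By Liouville's formula, $\det \nabla \Phi_{t',t}(x) = \exp\bigl(-\int_t^{t'} (\nabla \cdot \widehat U_s)(\Phi_{s,t}(x)) \, \de s\bigr)$, and the integrand is bounded in absolute value by $d(1 + L_{T-s})$. Hence the Jacobian is uniformly bounded above by some finite constant $J = J(|t-t'|, d, \mathcal{L})$, independent of $x$.

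Finally, if $\widehat q_{t'}(y) \leq A e^{-\alpha \|y\|_2}$ for all $\|y\|_2$ sufficiently large, combining the two displays gives $\widehat q_t(x) \leq J \cdot A \cdot e^{-\alpha (c_1 \|x\|_2 - c_2)} = A' e^{-\alpha c_1 \|x\|_2}$ for $\|x\|_2$ large enough, showing that $\widehat q_t$ inherits an exponential tail (with possibly smaller exponent $\alpha c_1$ and larger prefactor $A' = J A e^{\alpha c_2}$). The only mildly delicate step is the Gronwall lower bound on $\|\Phi_{t',t}(x)\|_2$, which follows from writing $\Phi_{t',t}(x) - x$ as the time integral of $-\widehat U_s(\Phi_{s,t}(x))$ and applying the triangle inequality together with the linear-growth bound; I do not anticipate any essential obstacle.
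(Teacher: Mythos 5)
Your proposal is correct and follows essentially the same route as the paper: both represent $\widehat q_t$ along the characteristics of $\widehat U_t$ with the Jacobian given by the exponential of the integrated divergence, bound $\nabla\cdot\widehat U_t$ and the linear growth of $s_{T-t}$ via \Cref{a:score-derivative}, and use a two-sided Gr\"onwall estimate on the flow to transfer the exponential tail. The only cosmetic difference is that the paper runs Gr\"onwall on $\|T_t(x)\|_2^2$ rather than on $\|\Phi_{t',t}(x)\|_2$ directly.
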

\begin{proof}
    Notice that $\del_t \widehat q_t =\nabla \cdot (\sU_t
    \widehat q_t)$. Hence, along the characteristic line $\frac{\de}{\de t} T_t(x) = - \sU_{t'+t}(T_t(x))$, one can solve that 
        \begin{align*}
            \widehat q_{t' + t}(T_t(x)) = \widehat q_{t'}(x) \cdot \exp{\bigg( \int_{0} ^t (\nabla \cdot \sU_{t' + s})(T_s(x)) \ \de s \bigg)}.
        \end{align*}
    Because $\sU_t(x) = x + s_{T-t}(x)$, by Assumption~\ref{a:score-derivative}, we have that $|(\nabla \cdot \sU_{t})(x) | \leq d(1+ L_{T-t})$ for any $t >0$. The remaining estimate is on the norm of $T_t(x)$. We notice that, by Assumption~\ref{a:score-derivative} again,
    \begin{align*}
        \|s_{T-t}(x)\|_2 \leq \|s_{T-t}(0)\|_2 + (\|x\|_2 \cdot d \cdot L_{T-t}) \leq d\cdot  L_{T-t} (1 + \|x\|_2).
    \end{align*}
    Hence,
        \begin{align*}
           \begin{split}
                |\del_t \| T_t(x) \|_2 ^2 |= |\sU_{t'+t}(T_t(x)) \cdot T_t(x)  | &\leq (1+ dL_{T-t'-t})\| T_t(x) \|_2 ^2 + d L_{T-t'-t} \| T_t(x) \|_2 \\ &\leq (2+ dL_{T-t'-t}) \| T_t(x) \|_2 ^2 + d^2L_{T-t'-t} ^2.
           \end{split}
        \end{align*}
    Denote $\mathcal{L}_t = \int_0 ^{t} (2+ d\cdot L_{T-t'-s}) \ \de s$.
    Hence, for $t >0$, 
        \begin{align*}
            e^{-\mathcal{L}_t} \cdot \bigg(\|x\|_2^2 - \int_0 ^t e^{\mathcal{L}_s}\cdot d^2 \cdot L_{T-t'-s} ^2 \ \de s \bigg) \leq \| T_t(x) \|_2 ^2 \leq e^{\mathcal{L}_t} \cdot \bigg(\|x\|_2^2 + \int_0 ^t e^{-\mathcal{L}_s} \cdot d^2 \cdot  L_{T-t'-s} ^2 \ \de s\bigg).
        \end{align*}
    Because we can compare the norms of $T_t(x)$ and $x$ by a factor only depending on $t$, and $T_t$ is also a diffeomorphism since $\sU_t(x)$ is locally Lipschitz, then we know that $\widehat q_t(x)$ also has an exponential tail as $\|x\|_2 \to + \infty$.
    One can obtain a similar result for $t <0$. 
\end{proof}

\section{Score Estimation Error}\label{section:score estimation error}

In this section, we assume that $\muast$ has a compact support $K_{\ast}$ as in Assumption~\ref{assumption:secon-moment} to proceed the proof first. At the end of this section, we will point out some possible ways to use other assumptions on this initial data $\muast$ in Remark~\ref{remark: other assumptions on initial data}.
We first need the following Gagliardo-Nirenberg interpolation inequality to estimate $E(t)$ defined in Theorem~\ref{theorem: L^1 error}, and finally use $\int_{\mathbb{R}^d} q_{t}(x) \delta_{t} ^2 (x) \ \de x$ to control $E(t)$ when $t \in [\mft,T]$. For the convenience of readers, let us also sketch the proof of this inequality here.
\begin{lemma}[Gagliardo-Nirenberg]\label{lemma: Gagliardo-Nirenberg}
    There is a positive universal constant $C_u$, such that for any $d \in \mathbb{Z}_{+}$, any $w \in L^1$, any $i \in \llbracket 1,d\rrbracket$, if $\partial^2 _{ii} w \in L^1$, then
        \begin{align}\label{e: Gagliardo k=2}
        {\bigg(\int_{\mathbb{R}^d}|\partial_i w(x)| \ \de x\bigg)}^{2} \leq C_u \bigg(\int_{\mathbb{R}^d}|\partial^2 _{ii} w(x)| \ \de x\bigg)\bigg(\int_{\mathbb{R}^d}|w(x)| \ \de x\bigg).
        \end{align}
In general, if $\partial^k _{i} w \in L^1$ with $k \geq 2$, then
        \begin{align}\label{e: Gagliardo general k}
        \int_{\mathbb{R}^d}|\partial_i w(x)| \ \de x \leq C_u ^{\frac{k-1}{2}}{\bigg(\int_{\mathbb{R}^d}|\partial^k _{i} w(x)| \ \de x\bigg)}^{\frac{1}{k}} {\bigg(\int_{\mathbb{R}^d}|w(x)| \ \de x\bigg)}^{\frac{k-1}{k}}.
        \end{align}
\end{lemma}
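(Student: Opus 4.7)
The plan is to (i) reduce the $d$-dimensional statement to a one-dimensional inequality via Fubini's theorem, which is where the dimension-independence of the constant enters, (ii) prove the 1D inequality by an elementary integral identity plus optimization in a free parameter $h>0$, and (iii) iterate the $k=2$ case to obtain the general-$k$ bound. First I would fix the index $i$, write $x=(x_i,x')\in\mathbb{R}\times\mathbb{R}^{d-1}$, and apply a 1D inequality of the form $\bigl(\int_{\mathbb{R}} |f'|\,\de t\bigr)^{2} \le C_{0}\,\bigl(\int_{\mathbb{R}} |f|\,\de t\bigr)\bigl(\int_{\mathbb{R}} |f''|\,\de t\bigr)$ to the slice $f(t)\coloneq w(x_1,\ldots,t,\ldots,x_d)$ for each fixed $x'$. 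Taking square roots, integrating over $x'$, and using Cauchy--Schwarz in $x'$ yields
$$\int_{\mathbb{R}^d} |\partial_i w|\,\de x \;\le\; C_0^{1/2}\Bigl(\int_{\mathbb{R}^d} |w|\,\de x\Bigr)^{1/2}\Bigl(\int_{\mathbb{R}^d} |\partial^2_{ii} w|\,\de x\Bigr)^{1/2},$$
and squaring gives \eqref{e: Gagliardo k=2} with $C_u=C_0$, visibly independent of $d$.

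For the 1D inequality I would derive and use the identity, valid for any $h>0$ and smooth $f$,
$$f'(x) \;=\; \frac{f(x+h)-f(x)}{h} \;-\; \frac{1}{h}\int_{x}^{x+h}(x+h-s)\,f''(s)\,\de s,$$
obtained by integrating $f'(x)=f'(t)-\int_{x}^{t}f''(s)\,\de s$ in $t\in[x,x+h]$ and applying Fubini. Taking absolute values and integrating in $x\in\mathbb{R}$ then gives, again by Fubini, $\int_{\mathbb{R}} |f'|\,\de x \le (2/h)\int |f|\,\de x + h \int |f''|\,\de x$. Optimizing in $h$ (the minimizer is $h=\sqrt{2\int|f|/\int|f''|}$) yields $\int|f'| \le 2\sqrt{2}\,(\int|f|)^{1/2}(\int|f''|)^{1/2}$, so $C_{0}=8$ works. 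A standard mollification plus cut-off reduces the general case $f,f''\in L^1(\mathbb{R})$ to smooth, compactly supported $f$.

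Finally, for \eqref{e: Gagliardo general k} I would iterate \eqref{e: Gagliardo k=2}: writing $a_j\coloneq \int_{\mathbb{R}^d}|\partial_i^{j}w|\,\de x$, the $k=2$ case applied to $\partial_i^{j-1}w$ gives the log-convexity chain $a_j^{2}\le C_u\,a_{j-1}a_{j+1}$ for $1\le j\le k-1$. Unwinding this in a short induction (for example, substitute $a_{j+1}\le (C_u a_j a_{j+2})^{1/2}$ into the inequality for $a_j^2$ and continue until $a_k$ appears) produces $a_1 \le C_u^{(k-1)/2}\,a_0^{(k-1)/k}\,a_k^{1/k}$, which is exactly \eqref{e: Gagliardo general k}; one can verify the exponents by a direct check for small $k$ and then by induction. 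The only (minor) technical wrinkle is the approximation step underlying the 1D identity, since $f,f''\in L^1$ does not a priori give classical $C^{2}$ regularity; I would handle this by convolving with a mollifier and passing to the limit via dominated convergence. Conceptually, the whole lemma reduces to the two-line integral representation displayed above, so I do not expect any serious obstacle.
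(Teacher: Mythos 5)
Your proposal is correct and follows essentially the same skeleton as the paper's proof: reduce to the one-dimensional inequality on each slice $x_i\mapsto w(x_i,x')$, integrate over $x'$ with Cauchy--Schwarz to get \eqref{e: Gagliardo k=2} with a dimension-free constant, and then obtain \eqref{e: Gagliardo general k} by iterating the $k=2$ case (your log-convexity chain $a_j^2\le C_u a_{j-1}a_{j+1}$ is the same induction the paper runs by substituting $\partial_i w$ for $w$). The one genuine difference is that the paper simply cites the one-dimensional Landau--Kolmogorov-type inequality from the literature (Lemma~3.4 of Fiorenza et al.), whereas you prove it from scratch via the Taylor/Fubini identity
\begin{align*}
f'(x)=\frac{f(x+h)-f(x)}{h}-\frac{1}{h}\int_x^{x+h}(x+h-s)f''(s)\,\de s
\end{align*}
and optimization in $h$; this is a correct and fully self-contained derivation (the Fubini step in fact yields the sharper coefficient $\tfrac{h}{2}\int|f''|$, so your $C_0=8$ is safe), and it makes explicit why the constant is universal. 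Your remark about mollification to handle $f,f''\in L^1$ without classical $C^2$ regularity is the right fix for the approximation step that the paper glosses over by working with $C^2_c$ functions. No gaps.
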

\begin{proof}
    Without loss of generality, we assume that $i=1$. For any $u \in C^2 _c(\mathbb{R}^d)$, we fix its remaining coordinates $x' = (x_2, \ldots, x_d)$, then according to Lemma~3.4 of~\cite{Fiorenza2021Gagliardo}, there is a universal constant $C_u >0$, such that
        \begin{align*}
        {\bigg(\int_{\mathbb{R}}|\partial_1 w(x_1,x')| \ \de x_1 \bigg)}^{2} \leq C_u \bigg(\int_{\mathbb{R}}|\partial^2 _{11} w(x_1,x')| \ \de x_1\bigg)\bigg(\int_{\mathbb{R}}|w(x_1, x')| \ \de x_1\bigg).
        \end{align*}
    Hence,
        \begin{align*}
            \begin{split}
                \bigg(\int_{\mathbb{R}^d}|\partial_i w(x)| \ \de x\bigg) &= \bigg(\int_{\mathbb{R}^{d-1}} \int_{\mathbb{R}}|\partial_1 w(x_1,x')| \ \de x_1 \de x'\bigg) 
                \\ &\leq C_u ^{\frac{1}{2}} \int_{\mathbb{R}^{d-1}}{\bigg(\int_{\mathbb{R}}|\partial^2 _{11} w(x_1,x')| \ \de x_1\bigg) }^{\frac{1}{2}}{\bigg(\int_{\mathbb{R}}|w(x_1, x')| \ \de x_1\bigg)}^{\frac{1}{2}} \ \de x'
                \\ &\leq C_u ^{\frac{1}{2}} {\bigg(\int_{\mathbb{R}^{d-1}}\int_{\mathbb{R}}|\partial^2 _{11} w(x_1,x')| \ \de x_1 \ \de x'\bigg) }^{\frac{1}{2}}
                {\bigg(\int_{\mathbb{R}^{d-1}}\int_{\mathbb{R}}| w(x_1,x')| \ \de x_1 \ \de x'\bigg) }^{\frac{1}{2}} .
            \end{split}
        \end{align*}
In general, assume that we already know that for some $k \geq 2$, 
    \begin{align*}
        \int_{\mathbb{R}^d}|\partial_i w(x)| \ \de x \leq C_u ^{\frac{k-1}{2}}{\bigg(\int_{\mathbb{R}^d}|\partial^k _{i} w(x)| \ \de x\bigg)}^{\frac{1}{k}} {\bigg(\int_{\mathbb{R}^d}|w(x)| \ \de x\bigg)}^{\frac{k-1}{k}},
    \end{align*}
then we replace $w(x)$ with  $\partial_i w(x)$, and obtain that 
    \begin{align*}
        \int_{\mathbb{R}^d}|\partial^2 _{ii} w(x)| \ \de x \leq C_u ^{\frac{k-1}{2}}{\bigg(\int_{\mathbb{R}^d}|\partial^{k+1} _{i} w(x)| \ \de x\bigg)}^{\frac{1}{k}} {\bigg(\int_{\mathbb{R}^d}|\partial_{i} w(x)| \ \de x\bigg)}^{\frac{k-1}{k}}.
    \end{align*}
Combining this inequality and the inequality \eqref{e: Gagliardo k=2}, we can obtain \eqref{e: Gagliardo general k} for $k+1$.
\end{proof}

Now, let's estimate the integral of $E(t)$ in $t$. By \eqref{e: Gagliardo k=2} in Lemma~\ref{lemma: Gagliardo-Nirenberg} and H\"{o}lder inequality,
    \begin{align}\label{e: error after gagliardo}
        \begin{split}
            \int_{\mft} ^ T{\bigg(\int_{\mathbb{R}^d}|\partial_1 (q_t \delta_t ^1) (x)| \ \de x\bigg)} \ \de t &\leq C_u ^{\frac{1}{2}} \int_{\mft} ^ T{\bigg(\int_{\mathbb{R}^d}|\partial^2 _{11} (q_t \delta_t ^1) (x)| \ \de x\bigg)}^{\frac{1}{2}}{\bigg(\int_{\mathbb{R}^d}|(q_t \delta_t ^1) (x)| \ \de x\bigg)}^{\frac{1}{2}} \ \de t
            \\ &\leq C_u ^{\frac{1}{2}} {\bigg(\int_{\mft} ^ T \int_{\mathbb{R}^d}|\partial^2 _{11} (q_t \delta_t ^1) (x)| \ \de x \ \de t \bigg)}^{\frac{1}{2}}{\bigg(\int_{\mft} ^T \int_{\mathbb{R}^d}|(q_t \delta_t ^1) (x)| \ \de x \ \de t\bigg)}^{\frac{1}{2}} ,
        \end{split}
    \end{align}
where we use the notation $\delta_t = (\delta_t ^1 , \ldots, \delta_t ^d)$.
For the term
    \begin{align*}
        {\int_{\mft} ^T \int_{\mathbb{R}^d}|(q_t \delta_t ^1) (x)| \ \de x \ \de t},
    \end{align*}
we use H\"{o}lder inequality twice and the fact that $\int_{\mathbb{R}^d}q_t (x) \ \de x =1$, and see that
    \begin{align*}
        \begin{split}
        {\int_{\mft} ^T \int_{\mathbb{R}^d}|(q_t \delta_t ^1) (x)| \ \de x \ \de t} &\leq {\int_{\mft} ^T {\bigg(\int_{\mathbb{R}^d}q_t(x) {(\delta_t ^1(x))}^2 \ \de x\bigg)} ^{\frac{1}{2}} {\bigg(\int_{\mathbb{R}^d}q_t (x) \ \de x \bigg)}^\frac{1}{2} \ \de t}
        \\  &\leq
        {\bigg(\int_{\mft} ^T \int_{\mathbb{R}^d}q_t(x) {(\delta_t ^1(x))}^2 \ \de x \ \de t \bigg) ^{\frac{1}{2}}  } \cdot {(T-\mft)}^{\frac{1}{2}}.
        \end{split}
    \end{align*}
Notice that our assumption is that $\int_{\mft} ^T\int_{\mathbb{R}^d}q_t(x) {(\delta_t ^1(x))}^2 \ \de x\de t$ can be made very small.
Next, we are going to show that the term 
    \begin{align}\label{e: hessian after gagliardo}
    \int_{\mft} ^ T \int_{\mathbb{R}^d}|\partial^2 _{11} (q_t \delta_t ^1) (x)| \ \de x \ \de t 
    \end{align}
is bounded by a positive constant depending on $\mft, T , \mathcal{L}, D$.
For the terms in $\partial^2 _{11} (q_t \delta_t ^1) (x) = (\partial^2 _{11} q_t) \delta_t ^1 + 2 \partial _{1} q_t \partial _{1} \delta_t ^1 + q_t (\partial^2 _{11} \delta_t ^1) $, we notice that, because $\delta_t(x) =  s_{T-t}(x) - \nabla\log q_t (x)$, by the proof of Lemma~\ref{Lemma: Hessian estimates}
\begin{align*}
    |\partial _{1} \delta_t ^1 (x)| \leq L_{T-t} + \frac{1}{ {\sigma_t}^4} (\sigma_t ^2  + 2 \lambda_t ^2 D^2 ),
\end{align*}

\begin{align*}
    |\partial^2 _{11} \delta_t ^1 (x)| \leq L_{T-t} + \frac{24{\lambda_t} ^3}{ {\sigma_t} ^6} D^3,
\end{align*}
and we see that for any $x = (x_1,x_2, \ldots, x_d) \in \mathbb{R}^d$, by the proof of Lemma~\ref{Lemma: Hessian estimates} again,
    \begin{align}\label{e: gradient log q_t}
        \bigg| \frac{\partial _{1} q_t(x)}{q_t(x)} \bigg| \leq \frac{(|x_1| + \lambda_t D) }{\sigma_t ^2},
    \end{align}
    
    \begin{align*}
        \bigg| \frac{\partial^2 _{11} q_t(x)}{q_t(x)} \bigg| \leq \frac{2({|x_1|}^2 + \lambda_t ^2 D^2) + \sigma_t^2}{\sigma_t ^4}.
    \end{align*}
Hence,
    \begin{align*}
       \begin{split}
            {\bigg(\int_{\mathbb{R}^d}|(\partial^2 _{11} q_t)(x) \delta_t ^1 (x)| \ \de x \bigg)}^2 
            &= {\bigg(\int_{\mathbb{R}^d} \bigg|\frac{(\partial^2 _{11} q_t)(x) }{q_t(x)}  \delta_t ^1 (x) q_t(x) \bigg| \ \de x \bigg)}^2 
            \\ &\leq \bigg(\int_{\mathbb{R}^d} {\bigg( \frac{2({|x_1|}^2+ \lambda_t ^2 D^2) + \sigma_t^2}{\sigma_t ^4} \bigg)}^2 q_t(x) \ \de x \bigg) \cdot \bigg(\int_{\mathbb{R}^d} {(\delta_t ^1 (x))}^2 q_t(x) \ \de x \bigg),
       \end{split}
    \end{align*}
where the first term is a bounded term by similarly analyzing $q_t(x)$. For example, let us show that the $x_1$-fourth moment of $q_t$, i.e., $\int_{\mathbb{R}^d} {|x_1|}^4 q_t(x) \ \de x$, is bounded. By \eqref{e: definition of q_t}, we know that
    \begin{align}\label{e: fourth moment of q_t}
        \begin{split}
            \int_{\mathbb{R}^d} {|y_1|}^4 q_t(y) \ \de y &= \int_{\mathbb{R}^d} \int_{\mathbb{R}^d} \frac{{|y_1|}^4}{{(\sqrt{2\pi} \sigma_t)}^{d}} \cdot e^{-\frac{\|y-x\|_2^2}{2{\sigma_t}^2}} \cdot \muast\bigg(\frac{x}{\lambda_t}\bigg) \frac{1}{{\lambda_t}^d} \ \de x \de y
            \\  &\leq 8\int_{\mathbb{R}^d} \int_{\mathbb{R}^d} \frac{{|y_1-x_1|}^4 + {|x_1|}^4}{{(\sqrt{2\pi} \sigma_t)}^{d}} \cdot e^{-\frac{\|y-x\|_2^2}{2{\sigma_t}^2}} \cdot \muast\bigg(\frac{x}{\lambda_t}\bigg) \frac{1}{{\lambda_t}^d} \ \de x \de y
            \\  &\leq 8\int_{\mathbb{R}^d} \int_{\mathbb{R}^d} \frac{{|y_1-x_1|}^4 + \lambda_t ^4 D^4}{{(\sqrt{2\pi} \sigma_t)}^{d}} \cdot e^{-\frac{\|y-x\|_2^2}{2{\sigma_t}^2}} \cdot \muast\bigg(\frac{x}{\lambda_t}\bigg) \frac{1}{{\lambda_t}^d} \ \de x \de y
            \\  &=C_u (\sigma_t ^4 + \lambda_t ^4 D^4) \leq C_u(1+ D^4),
        \end{split}
    \end{align}
for a universal constant $C_u >0$ coming from the fourth moment of the standard Gaussian. We can similarly estimate the remaining two terms in the expansion of $\partial^2 _{11} (q_t \delta_t ^1) (x)$ and get an upper bound for \eqref{e: hessian after gagliardo}.
Also, we remark that after taking the time integral from $\mft$ to $T$, the main order of $\sigma_t$ involved in \eqref{e: hessian after gagliardo} is at most
    \begin{align*}
        \int_{\mft} ^T \frac{1}{\sigma_t ^6} \ \de t = \int_{\mft} ^T \frac{e^{6t}}{{(e^{2t}-1)}^3} \ \de t,
    \end{align*}
which blows up of order $T$ as $T \to +\infty$ and blows up of order $\mft^{-2}$ as $\mft \to 0^+$. Hence, there is a universal constant $C_u >0$, such that
    \begin{align*}
        \int_{\mft} ^ T \int_{\mathbb{R}^d}|\partial^2 _{11} (q_t \delta_t ^1) (x)| \ \de x \ \de t \leq C_u \int_{\mft} ^T \bigg(L_{T-t} +\frac{D^3}{\sigma_t ^6} \bigg) \ \de t \leq C_u (\mathcal{L}+ T \cdot {\mft}^{-2} \cdot D^3).
    \end{align*}
Then, by \eqref{e: error after gagliardo} and Assumption~\ref{a:score-estimate}, 

    \begin{align*}
        \begin{split}
            & \quad \sum_{i=1} ^d \int_{\mft} ^ T{\bigg(\int_{\mathbb{R}^d}|\partial_i (q_t \delta_t ^i) (x)| \ \de x\bigg)} \ \de t \\ &\leq C_u \cdot {(\mathcal{L}+ T \cdot {\mft}^{-2} \cdot D^3)}^{\frac{1}{2}} \cdot \sum_{i=1} ^d {\bigg(\int_{\mft} ^T \int_{\mathbb{R}^d}q_t(x) {(\delta_t ^i(x))}^2 \ \de x \ \de t \bigg) ^{\frac{1}{4}}  } \cdot {(T-\mft)}^{\frac{1}{4}}
            \\  &\leq C_u \cdot {(\mathcal{L}+ T \cdot {\mft}^{-2} \cdot D^3)}^{\frac{1}{2}} \cdot d^{\frac{3}{4}} \cdot \delta^{\frac{1}{2}} \cdot T^{\frac{1}{4}}.
        \end{split}
    \end{align*}
\begin{remark}\label{remark: weighted error}
    We notice that one can also modify the inequality \eqref{e: error after gagliardo} by 
    \begin{align*}
            \int_{\mft} ^ T{\bigg(\int_{\mathbb{R}^d}|\partial_1 (q_t \delta_t ^1) (x)| \ \de x\bigg)} \de t 
            &\leq C_u ^{\frac{1}{2}} {\bigg(\int_{\mft} ^ T \varphi(t) \int_{\mathbb{R}^d}|\partial^2 _{11} (q_t \delta_t ^1) (x)| \ \de x \de t \bigg)}^{\frac{1}{2}}\\
            &\times {\bigg(\int_{\mft} ^T \frac{1}{\varphi(t)} \int_{\mathbb{R}^d}|(q_t \delta_t ^1) (x)| \ \de x \de t\bigg)}^{\frac{1}{2}},
    \end{align*}
    with a suitably chosen positive function $\varphi(t)$ when we use the H\"{o}lder inequality. For example, we can let $\varphi(t) \to 0$ of order $t^2$ as $t \to 0^+$, and let $\varphi(t) \to 0$ of order $\frac{1}{t}$ as $t \to +\infty$. In this way, the first term in the above inequality is uniformly bounded so that we can pass $\mft \to 0^+$ and $T \to +\infty$. So, we only need to control the second term so that it is small enough. 
\end{remark}

\begin{remark}\label{remark: higher order Gagliardo}
    In our settings, if we know that $s_t(x)$ has higher order derivatives up to $k$ for $k \geq 2$, we can replace \eqref{e: Gagliardo k=2} with \eqref{e: Gagliardo general k} when we estimate $E(t)$ in \eqref{e: error after gagliardo}. Then, in order to estimate the derivatives of $q_t$ in the expansion of $\partial^k _{1} (q_t \delta_t)$, a similar proof of Lemma~\ref{Lemma: Hessian estimates} should work because $q_t(x)$ has an exponential tail as $\|x \|_2 \to +\infty$.
\end{remark}

\begin{remark}\label{remark: other assumptions on initial data}
    As readers have seen these proofs under the assumption that $\muast$ has a compact support $K_{\ast}$, the main reason we need this compact support assumption is to estimate the term \begin{align*}
    \int_{\mft} ^ T \int_{\mathbb{R}^d}|\partial^2 _{11} (q_t \delta_t ^1) (x)| \ \de x \ \de t .
    \end{align*}
    Notice that $q_t \delta_t = q_t(s_t - \nabla\log q_t ) = q_t s_t - \nabla q_t$. If we replace the assumption with $\muast$ being a Gaussian mixture, according to Lemma~\ref{lemma: Hessian estimates of Gaussian mixture initial data}, we can do a similar estimate on $|\partial^2 _{11} (q_t \delta_t ^1) (x)|$, and hence we can similarly obtain an upper bound for $\int_{\mft} ^ T \int_{\mathbb{R}^d}|\partial^2 _{11} (q_t \delta_t ^1) (x)| \ \de x \ \de t$. Such an upper bound will then depend on those parameters in the initial Gaussian mixture $\muast$, but don't blow up as $\mft \to 0^+$. See Lemma~\ref{lemma: Hessian estimates of Gaussian mixture initial data}. One can make other reasonable assumptions on $\muast$ as long as one can reasonably estimate this second derivative integral.
\end{remark}

\section{Discretization Error}\label{s:discretize_error}

As discussed in \Cref{s:time_integrator}, we solve the ODE flow  $\del_t \sY_t=\sY_t+s_t(\sY_t)=\sV_t(\sY_t)$ using the Runge-Kutta method. Although the Runge-Kutta updating rule as in \eqref{eq:RK-update} is a discrete time process, we can interpolate it as a continuous time process on $t_i\leq t\leq t_{i+1}$ as 
\begin{align*}
\widetilde Y_{t}=F_r(\widetilde Y_{t_i}),\quad t=t_i+r,
\end{align*}
where
\begin{align}\label{e:Phitmap0}
 F_r(x)= x + r \sum_{j=1}^s b_j k_j,\quad 0\leq r\leq t_{i+1}-t_i=h,
\end{align}
and
\begin{align}\begin{split}\label{e:RK2}
    &k_1(x) = \widehat V_{t_i + r c_1 }(x),\\
    &k_2(x) = \widehat  V_{t_i + r c_2 }\bigl(x + r(a_{21}k_1) \bigr),\\
    &k_3(x) = \widehat V_{t_i + r c_3 }\bigl(x + r (a_{31}k_1 + a_{32}k_2) \bigr),\\
    &\qquad\qquad\qquad\vdots
    \\
    &k_s(x) = \widehat V_{t_i + r c_s }\bigl(x + r (a_{s1}k_1 + a_{s2}k_2 + \cdots + a_{s,s-1}k_{s-1}) \bigr).
\end{split}\end{align}
We denote the density of $\widetilde Y_t$ as $\widetilde \varrho_t$, then at times $t_i$, $\widetilde \varrho_{t_i}$ is the density of $Y_i$ as given by the $i$-th step of the Runge-Kutta methods.
If $ \widehat V_t(\cdot)$ is differentiable in $t$ on $[t_i, t_{i+1}]$, one can see from the above construction $\widetilde Y_t$ is differentiable in $t$, and 
\begin{align}\label{e:dttY}
    \del_t \widetilde Y_t =\del_r F_r(\widetilde Y_{t_i}),\quad t=t_i+r.
\end{align}

The following proposition states that for $t_{i+1}-t_i= h$ small enough, we can rewrite \eqref{e:dttY} as an ODE flow \begin{align}\label{e:discreteODE}
   \del_t \widetilde Y_t=\widetilde V_t(\widetilde Y_t), 
\end{align} 
for $t_i\leq t\leq t_{i+1}$, and $\widetilde V_t(\cdot)$ is close to $V_t(\cdot)$ up to an error of size $\cO(h^p)$. 

\begin{proposition}\label{p:high_order_error}
Adopt \Cref{a:score-high-derivative}, and denote $B:=1+\max_{1\leq i<j\leq s}|a_{ji}|+\max_{1\leq j\leq s}|b_j|$ There exists a large constant $C(p,s, B)$, such that if $8sBhdL\leq 1$, then the following holds. For any $0\leq r\leq h$, $F_r$
    is a diffeomorphism from $\bR^d$ to $\bR^d$. We denote its functional inverse as $F^{-1}_{r}(x)$, then 
    \begin{align}\label{e:def_wtV}
      \widetilde V_{t_i+r}(x)=  (\del_r F_r)(F^{-1}_{r}(x)).
    \end{align}
Moreover, for $t_i\leq t\leq t_{i+1}$, $\del_t \widetilde Y_{t}=\widetilde V_{t}(\widetilde Y_{t})$, and 
    \begin{align}\label{e:stdiff}
        \|\widetilde V_{t}(x)- \widehat V_{t}(x)\|_{\infty},\quad  \|\nabla (\widetilde V_{t}(x)- \widehat V_{t}(x))\|_{\infty} \leq C(p,s,B) \cdot L((\sqrt d+\|x\|_2)\sqrt d hL)^p,\quad 
    \end{align}
\end{proposition}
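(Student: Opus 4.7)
The plan is to establish the three assertions of the proposition in order, with the final inequality \eqref{e:stdiff} being the substantive part.

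For the diffeomorphism claim, I would bound the Jacobians of the intermediate stages $k_j$ inductively in $j$. Since each $k_j(x)$ is the composition of $\widehat V_{t_i+rc_j}$ with the affine map $x+r\sum_{\ell<j} a_{j\ell} k_\ell(x)$, and Assumption~\ref{a:score-high-derivative} gives the elementwise bound $\|\nabla \widehat V_t\|_\infty \leq 1+L$, an induction on $j$ yields $\|\nabla k_j\|_\infty \leq C(s,B)L$ provided the smallness condition $8sBhdL \leq 1$ holds. Consequently $\nabla F_r = \id + r\sum_{j} b_j \nabla k_j$ differs from the identity elementwise by at most $rBs\cdot C(s,B)L$, which is made $\leq 1/(2d)$ by absorbing the factor of $d$ from the smallness hypothesis. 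This makes $\nabla F_r(x)$ uniformly invertible in operator norm, and together with the linear-growth behavior of $F_r$ at infinity inherited from $\widehat V_t$, a standard inverse-function and proper-map argument shows $F_r$ is a $C^{p+1}$ diffeomorphism of $\bR^d$.

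The ODE identity $\partial_t \widetilde Y_t = \widetilde V_t(\widetilde Y_t)$ is then tautological: since $\widetilde Y_{t_i+r} = F_r(\widetilde Y_{t_i})$ by construction, differentiating in $r$ gives $\partial_t \widetilde Y_t = (\partial_r F_r)(\widetilde Y_{t_i}) = (\partial_r F_r)(F_r^{-1}(\widetilde Y_t)) = \widetilde V_t(\widetilde Y_t)$ by the definition \eqref{e:def_wtV}.

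The heart of the proof is the bound \eqref{e:stdiff}. Here the key input is the order-condition property of a $p$-th order Runge-Kutta scheme: the Taylor expansions in $r$ of $F_r(x)$ and of the exact flow $\Phi_r(x)$ of $\partial_r \Phi = \widehat V_{t_i+r}(\Phi)$, $\Phi_0 = x$, match through order $r^p$, as sketched in Remark~\ref{r:RK}. I would make this effective by applying Taylor's theorem with integral remainder to both expansions, and bounding the $(p+1)$-st $r$-derivatives of $F_r$ and $\Phi_r$ uniformly by Fa\`a di Bruno applied inductively through \eqref{e:RK2}, together with the derivative bounds of Assumption~\ref{a:score-high-derivative} and the linear growth $\|\widehat V_t(x)\|_2 \leq (1+L)(\sqrt d+\|x\|_2)$. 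The tracking rule is that each additional $r$-derivative contributes one factor of $L$ (from either $\partial_t s_t$ or $\nabla_x s_t$), one factor of $\sqrt d+\|x\|_2$ (from the linear growth of $\widehat V$), and one factor of $\sqrt d$ (from translating between elementwise and $\ell_2$ norms under the chain rule). After $p+1$ derivatives, and after the order conditions cancel the first $p$ Taylor coefficients, this yields
\begin{align*}
\|\partial_r F_r(x) - \partial_r \Phi_r(x)\|_\infty \leq C(p,s,B)\,L\bigl((\sqrt d+\|x\|_2)\sqrt d\,rL\bigr)^{p}.
\end{align*}
Since $\partial_r \Phi_r(x) = \widehat V_{t_i+r}(\Phi_r(x))$ and $\Phi_r(x) - F_r(x) = \cO(r^{p+1})$, a mean-value expansion of $\widehat V_{t_i+r}$ around $y = F_r(x)$ converts this into the first half of \eqref{e:stdiff} upon setting $x = F_r^{-1}(y)$; the mean-value correction is of higher order and absorbed. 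The gradient estimate is obtained by differentiating the whole chain once more in $x$, using the analogous bound on $\nabla(\partial_r F_r - \partial_r \Phi_r)$ and that $\nabla F_r^{-1}$ is close to $\id$ from Step~1, with Assumption~\ref{a:score-high-derivative} supplying the one extra derivative of $s_t$ needed.

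The main obstacle I expect is the explicit dimensional and combinatorial bookkeeping in the last step. The cancellation of Taylor coefficients up to order $p$ is classical rooted-tree calculus for Runge-Kutta order conditions, but extracting the claimed dependence $L^{p+1}d^{p/2}(\sqrt d+\|x\|_2)^p$ (rather than larger powers of $d$) from the $(p+1)$-st derivative requires a careful inductive estimate on $\partial_r^j k_\ell$ that tracks, stage by stage and derivative by derivative, exactly how each factor of $L$, each $\sqrt d$, and each $\sqrt d+\|x\|_2$ enters through the chain rule applied inside \eqref{e:RK2}.
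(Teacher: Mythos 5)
Your proposal is correct and follows essentially the same route as the paper: inductive elementwise Jacobian bounds on the stages $k_j$ under the smallness condition $8sBhdL\leq 1$, combined with a global inverse function (Hadamard--Caccioppoli) argument, give the diffeomorphism property, while the key estimate \eqref{e:stdiff} comes from the Runge--Kutta order conditions cancelling the first $p$ Taylor coefficients in $r$, with the integral remainders controlled by exactly the stage-by-stage inductive bookkeeping of $\partial_r^m k_j$ you describe (one factor of $L$, one of $\sqrt d+\|x\|_2$, and one of $\sqrt d$ per $r$-derivative). The only cosmetic difference is that you compare $\partial_r F_r$ with $\partial_r\Phi_r=\widehat V_{t_i+r}(\Phi_r)$ and then insert a mean-value correction to pass from $\widehat V(\Phi_r)$ to $\widehat V(F_r)$, whereas the paper Taylor-expands $\widehat V_{t_i+r}(F_r(x))$ and $\widetilde V_{t_i+r}(F_r(x))$ directly in $r$ (using $\Phi_r$ only to verify that their derivatives at $r=0$ agree up to order $p-1$), which spares it from having to bound the derivatives of the exact flow $\Phi_r$ itself.
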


\begin{proof}[Proof of \Cref{theorem: main L^1 theorem discretized}]
We assume $8sBhdL\leq 1$, so the assumptions of \Cref{p:high_order_error} hold. Otherwise, the discretization error in  \eqref{e:total_TV} is bigger than $1$, and the statement \eqref{e:total_TV} holds trivially.

To prove \eqref{e:total_TV} we need to analyze the density evolution under \eqref{e:discreteODE}. We let $\widetilde q_{t}$ 
piecewisely solve the transport equation 
\begin{align*}
    \del_t \widetilde q_{t} = 
    \nabla \cdot (\dU_t
    \widetilde q_{t}) \quad \textrm{ with }\quad \dU_t = \dV_{T-t},
\end{align*}
on each interval $[t_i,t_{i+1}]$ for $i \geq 1$, where $0=t_0<t_1<\cdots<t_{N}=T-\mft$. Then, we define $\widetilde \delta_t(x) \coloneq  \dU_t(x) - U_t(x)$,  $\widetilde \varepsilon_t(x) \coloneq \widetilde q_t(x) - q_t(x)$. We remark that $\dU_t$ is continuous on the $t$-direction when $t \in [t_i,t_{i+1}]$ but it may not be continuous crossing each $t_i$. 
    \begin{align*}
        \begin{split}
            \bigg| \int_{\mathbb{R}^d}  | \widetilde \varepsilon_{\mft}(x)  | \ \de x  - \int_{\mathbb{R}^d}  | \widetilde \varepsilon_{T}(x)  | \ \de x \bigg | &\leq \sum_{i=0} ^{N-1} \bigg| \int_{\mathbb{R}^d}  | \widetilde \varepsilon_{T-t_i}(x)  | \ \de x  - \int_{\mathbb{R}^d}  | \widetilde \varepsilon_{T-t_{i+1}}(x)  | \ \de x \bigg |  \\ &\leq \sum_{i=0} ^{N-1} \int_{T-t_{i+1}} ^{T-t_i}  \int_{\mathbb{R}^d}  \bigg|  (\nabla \cdot (q_{t} \widetilde \delta_{t} ))(x) \bigg| \ \de x \ \de t,
        \end{split}
    \end{align*}
where we used Theorem~\ref{theorem: L^1 error} on each interval $[t_i, t_{i+1}]$.
We also notice that 
    \begin{align}\begin{split}\label{e:triangle}
        \int_{T-t_{i+1}} ^{T-t_i} \int_{\mathbb{R}^d}  \bigg|  (\nabla \cdot (q_{t} \widetilde \delta_{t} ))(x) \bigg| \ \de x \de t 
        &\leq \int_{T-t_{i+1}} ^{T-t_i} \int_{\mathbb{R}^d}  \bigg|  (\nabla \cdot (q_{t} \delta_{t} ))(x) \bigg| \ \de x \de t \\
        &+ \int_{T-t_{i+1}} ^{T-t_i} \int_{\mathbb{R}^d}  \bigg|  (\nabla \cdot (q_{t} (\widetilde \delta_{t}- \delta_{t}) ))(x) \bigg| \ \de x \de t,
    \end{split}
    \end{align}
where the summation of the first term on the righthand side  from $i=0$ to $i=N-1$ is $\int_{\mft} ^T E(t) \de t$ and we have estimated this error term in Section~\ref{section:score estimation error} and also obtained Theorem~\ref{theorem: main L^1 theorem},
\begin{align}\label{e:scorematching}
    \sum_{i=0}^{N-1}\int_{T-t_{i+1}} ^{T-t_i} \int_{\mathbb{R}^d}  \bigg|  (\nabla \cdot (q_{t} \delta_{t} ))(x) \bigg| \ \de x \de t\leq C_u \cdot d^{\frac{3}{4}}  \cdot T^{\frac{1}{4}} \cdot {(TL+ T \cdot {\mft}^{-2} \cdot D^3)}^{\frac{1}{2}} \cdot \delta^{\frac{1}{2}},
\end{align}
where under \Cref{a:score-high-derivative}, $\cL$ in Theorem~\ref{theorem: main L^1 theorem} is bounded by $TL$. This gives the score matching error in \Cref{theorem: main L^1 theorem discretized}.

The second term on the righthand side of \eqref{e:triangle} is 
    \begin{align*}
        \int_{T-t_{i+1}} ^{T-t_i} \int_{\mathbb{R}^d}  \bigg|  (\nabla \cdot (q_{t} (\dU_{t}- \sU_{t}) ))(x) \bigg| \ \de x \de t,
    \end{align*}
which can be further bounded as (the term corresponding to the derivative  $\del_1$)
    \begin{align}
     \begin{split}\label{e:Ttiterm}
            \int_{T-t_{i+1}} ^{T-t_i} \int_{\mathbb{R}^d}  \bigg|  (\partial_1(q_{t} (\dU_{t} ^1 - \sU_{t}^1) ))(x) \bigg| \ \de x \de t \leq &\int_{T-t_{i+1}} ^{T-t_i} \int_{\mathbb{R}^d} \|\dU_{t}(x)-\sU_{t}(x)\|_{\infty} \cdot |\partial_1 q_t(x)| \ \de x \de t
            \\  &+ \int_{T-t_{i+1}} ^{T-t_i} \int_{\mathbb{R}^d} \|\nabla \dU_t(x)-\nabla \sU_t(x)\|_{\infty} \cdot q_t(x) \ \de x \de t,     
     \end{split}
\end{align}
where we use the notation $\dU_t(x) = (\dU_{t} ^1(x), \dU_{t} ^2 (x), \ldots, \dU_{t} ^d(x))$.
By \eqref{e:stdiff}, we obtain that
 \begin{align*}
        \|\widetilde V_{t}(x)-\widehat V_{t}(x)\|_{\infty},\quad  \|\nabla (\widetilde V_{t}(x)-\widehat V_{t}(x))\|_{\infty} \leq C(p,s,B) \cdot L((\sqrt d+\|x\|_2)\sqrt d hL)^p,\quad 
    \end{align*}
    and the definition that $\dU_t = \dV_{T-t}$, $\sU_t = \sV_{T-t}$, we know that the right hand side of \eqref{e:Ttiterm} can be bounded by 
    \begin{align}\label{e:discretization_error}
        C(p,s,B) {L}^{p+1} {(h\sqrt d)}^p \int_{T-t_{i+1}} ^{T-t_i} \int_{\mathbb{R}^d} {(\sqrt d+\|x\|_2)}^p\cdot (|\partial_1 q_t(x)| + q_t(x)) \ \de x \de t.
    \end{align}
According to Lemma~\ref{Lemma: Hessian estimates}, the integral can be bounded by 
    \begin{align}\label{e:moment0qt}
        \int_{T-t_{i+1}} ^{T-t_i} \int_{\mathbb{R}^d} (\sqrt d+\|x\|_2 ^p)\cdot \bigg(\frac{(|x_1| + \lambda_t  D) }{\sigma_t ^2} +1 \bigg) \cdot  q_t(x) \ \de x \de t.
    \end{align}
The above integral can be bounded by using the following two relations. 
\begin{align}\label{e:moment1q_t}
        \begin{split}
            &\phantom{{}={}}\int_{\mathbb{R}^d} |y_1|(\sqrt d+\|y\|_2)^p \cdot q_t(x) \ \de x = \int_{\mathbb{R}^d} \int_{\mathbb{R}^d} \frac{|y_1|(\sqrt d+\|y\|_2)^p}{{(\sqrt{2\pi} \sigma_t)}^{d}} \cdot e^{-\frac{\|y-x\|_2 ^2}{2{\sigma_t}^2}} \cdot \muast\bigg(\frac{x}{\lambda_t}\bigg) \frac{1}{{\lambda_t}^d} \ \de x \de y
            \\  &\leq \int_{\mathbb{R}^d} \int_{\mathbb{R}^d} \frac{({|y_1-x_1|} + {|x_1|})(\sqrt d +\|y-x\|_2+\|x\|_2)^p}{{(\sqrt{2\pi} \sigma_t)}^{d}} \cdot e^{-\frac{\|y-x\|_2 ^2}{2{\sigma_t}^2}} \cdot \muast\bigg(\frac{x}{\lambda_t}\bigg) \frac{1}{{\lambda_t}^d} \ \de x \de y
            \\  &\leq \int_{\mathbb{R}^d} \int_{\mathbb{R}^d} \frac{({|y_1-x_1|} +\lambda_t D)(\sqrt d(1+\lambda_t D)+\|y-x\|_2)^p }{{(\sqrt{2\pi} \sigma_t)}^{d}} \cdot e^{-\frac{\|y-x\|_2^2}{2{\sigma_t}^2}} \cdot \muast\bigg(\frac{x}{\lambda_t}\bigg) \frac{1}{{\lambda_t}^d} \ \de x \de y
            \\  &\leq \int_{\mathbb{R}^d}  \frac{({|z_1|} +\lambda_t D)(\sqrt d(1+\lambda_t D)+\|z\|_2)^p }{{(\sqrt{2\pi} \sigma_t)}^{d}} \cdot e^{-\frac{\|z\|_2 ^2}{2{\sigma_t}^2}} \rd z
            \\  &\leq C(p)d^{p/2} (\sigma_t  + \lambda_t  D)((1+\lambda_t D + \sigma_t)^p) \leq 4^pC(p)d^{p/2} D^{p+1},
        \end{split}
    \end{align}
 where we used that $D\geq 1\geq \sigma_t$ and $\la_t\leq 1$; and similarly
\begin{align}\label{e:moment2q_t}
        \begin{split}
            &\phantom{{}={}}\int_{\mathbb{R}^d} (\sqrt d+\|y\|_2)^p q_t(x) \ \de x 
            \leq 4^pC(p)d^{p/2} D^{p},
        \end{split}
    \end{align}
 where $C(p)$ is constant depending only on $p$.
Combine these above estimates \eqref{e:moment1q_t} and \eqref{e:moment2q_t}, we see that
\begin{align}\label{e:moment3qt}
    \eqref{e:moment0qt}\leq 4^{p+1}\int_{T-t_{i+1}}^{T-t_i}\frac{C(p) d^{p/2}D^{p+1}}{\sigma_t^2} \rd t.
\end{align}
Finally by plugging \eqref{e:moment0qt} and \eqref{e:moment3qt} back into \eqref{e:discretization_error}, we conclude 
    \begin{align}
        \begin{split}\label{e:discretizationbb}
            \sum_{i=0} ^{N-1}  \int_{T-t_{i+1}} ^{T-t_i} \int_{\mathbb{R}^d}  \bigg|  (\nabla \cdot (q_{t} (\dU_{t}- \sU_{t}) ))(x) \bigg| \ \de x \de t &\leq C(p,s,B) {L}^{p+1} {(hd)}^p D^{p+1}\cdot d\int_{\mft} ^{T} \frac{1}{\sigma_t^2}   \ \de t
            \\ &\leq C(p,s,B) d {(hd)}^p (LD)^{p+1}   \log (T/ \mft).
        \end{split}
    \end{align}
This gives the discretization error in \Cref{theorem: main L^1 theorem discretized}. \Cref{theorem: main L^1 theorem discretized} follows from combining \eqref{e:scorematching} and \eqref{e:discretizationbb}.
\end{proof}

\subsection{Proof of \Cref{p:high_order_error}}

We first state some estimates on $F_r(x)$ from \eqref{e:Phitmap0} and $k_j(x)$ from \eqref{e:RK2}.
For a vector $v \in \mathbb{R}^d$, we denote its $q$-th coordinate as $v^{(q)}$, and $\del_p$ is the derivative with respect to $x^{(p)}$.
\begin{lemma}\label{lemma: infinity norm pth order growth in RK}
   Adopt the assumptions of \Cref{p:high_order_error}. For those $k_j(x)$'s as in \eqref{e:RK2}, we define 
        \begin{align*}
            \begin{split}
                &\|\nabla   k_j(x)\|_{\infty} = \sup_{1\leq p,q \leq d} |\partial_p   k_j ^{(q)} (x)|. 
            \end{split}
        \end{align*}
        We have that, for any $x\in \mathbb{R}^d$, 
        
         \begin{align}\label{e:K2}
            \begin{split}
                &\sum_{j=1} ^s \| k_j(x) \|_{2} \leq s[\|x\|_2+L(\sqrt d+\|x\|_2)]{(1+ L Br)}^{s-1}, \quad \|k_j(x)\|_2\leq 2 L(\sqrt{d}+\|x\|_2).  
            \end{split}
        \end{align} 
        and
        \begin{align}\label{e:Dk}
                &\sum_{j=1} ^s \|\nabla   k_j(x) \|_{\infty} \leq s   L {(1+   L  Br d)}^{s-1} ,\quad \|\nabla   k_j(x)\|_{\infty}\leq  2L.
        \end{align}
 
\end{lemma}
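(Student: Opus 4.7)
The plan is to prove both pairs of estimates by a joint induction on the stage index $j \in \{1, 2, \ldots, s\}$, first handling the $L^2$-bounds in \eqref{e:K2} and then the gradient bounds in \eqref{e:Dk} by essentially the same template. The base case $j=1$ is immediate: since $k_1(x) = \widehat V_{t_i+rc_1}(x) = x + s_{t_i + rc_1}(x)$, Assumption~\ref{a:score-high-derivative} gives $\|k_1(x)\|_2 \le \|x\|_2 + L(\sqrt d + \|x\|_2) \le 2L(\sqrt d + \|x\|_2)$ and, entrywise, $\|\nabla k_1\|_\infty \le 1 + L \le 2L$ (using $L \ge 1$).

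For the inductive step I would introduce $Y_j(x) \coloneq x + r\sum_{\ell<j}a_{j\ell}k_\ell(x)$, so that $k_j(x) = Y_j(x) + s_{t_i + rc_j}(Y_j(x))$. Plugging in the linear growth of $s$ yields
\begin{align*}
\|k_j(x)\|_2 \le (1+L)\|Y_j(x)\|_2 + L\sqrt d, \qquad \|Y_j(x)\|_2 \le \|x\|_2 + rB \sum_{\ell<j}\|k_\ell(x)\|_2.
\end{align*}
Combining these and summing over $j$ gives a linear recursion for $S_j \coloneq \sum_{\ell\le j}\|k_\ell\|_2$ whose solution is a geometric expression of the claimed form $[\|x\|_2 + L(\sqrt d + \|x\|_2)](1+LBr)^{j-1}$, yielding the first inequality in \eqref{e:K2}. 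The individual bound $\|k_j(x)\|_2 \le 2L(\sqrt d + \|x\|_2)$ is obtained by bootstrap: assume it inductively for $\ell<j$, insert into the bound for $\|Y_j\|_2$, and use the smallness hypothesis $8sBhdL \le 1$ (which makes $2LrBs \le 1/(4d)$) to absorb the correction and close the induction.

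For the gradient bounds I would differentiate to get $\nabla k_j = (\id + \nabla s_{t_i+rc_j}(Y_j))\,\nabla Y_j$, with $\nabla Y_j = \id + r\sum_{\ell<j}a_{j\ell}\nabla k_\ell$. The crucial subtlety is that an entrywise $\|\cdot\|_\infty$ bound on a product of $d\times d$ matrices carries an extra factor of $d$, namely $\|AB\|_\infty \le d\|A\|_\infty\|B\|_\infty$. This is precisely the source of the additional $d$ inside $(1 + LBrd)^{s-1}$, in contrast to the vector-norm recursion where no dimensional factor appears. With this accounting the recursion and bootstrap proceed along exactly the same lines as for \eqref{e:K2}: the geometric sum delivers the first estimate in \eqref{e:Dk}, and the smallness hypothesis pins down the uniform bound $\|\nabla k_j\|_\infty \le 2L$.

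The main obstacle is the bookkeeping of the geometric-series manipulations, especially matching the stated constants in both the aggregate bound and the uniform bound. The extra factor of $d$ in the gradient recursion (from entrywise matrix multiplication) must be tracked so that $(1+LBrd)^{s-1}$ rather than $(1+LBr)^{s-1}$ appears in \eqref{e:Dk}; conversely, this $d$ must \emph{not} appear in \eqref{e:K2} because the vector-norm recursion only involves sums of vectors, not products of Jacobians. Keeping these two branches of the induction properly separated, while invoking $8sBhdL \le 1$ at the correct moment in each, is the delicate part.
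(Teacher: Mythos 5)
Your proposal follows essentially the same route as the paper's proof: the same linear recursion for the partial sums $\sum_{\ell\le j}\|k_\ell(x)\|_2$ (resp.\ $\sum_{\ell\le j}\|\nabla k_\ell(x)\|_\infty$), the same geometric iteration yielding the $(1+LBr)^{s-1}$ and $(1+LBrd)^{s-1}$ factors, the same bootstrap via $8sBhdL\le 1$ for the individual bounds, and the same identification of the entrywise matrix-product estimate as the source of the extra factor of $d$ in \eqref{e:Dk}. No substantive differences.
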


\begin{lemma}\label{lemma:V derivative bound}
Adopt the assumptions of \Cref{p:high_order_error}.  There exists a large constant $C(p,s,B)$, such that
    \begin{align}\label{e:dtVbound}
        \|\del_r^p \widetilde V_{t_i+r}(F_r(x))\|_\infty, \ 
         \|\del_r^p\nabla \widetilde V_{t_i+r}(F_r(x))\|_\infty\leq C(p,s,B)  L((\sqrt d+\|x\|_2)\sqrt d L)^p,
    \end{align}
and 
\begin{align}\label{e:dhVbound}
        \|\del_r^p \widehat V_{t_i+r}(F_r(x))\|_\infty , \ 
         \|\del_r^p\nabla \widehat V_{t_i+r}(F_r(x))\|_\infty\leq C(p,s,B)   L((\sqrt d+\|x\|_2)\sqrt d L)^p.
    \end{align}
\end{lemma}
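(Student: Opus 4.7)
The plan is to reduce both estimates to explicit bounds on the Runge--Kutta stages $k_j(x,r)$ of \eqref{e:RK2} and their $r$-derivatives. By the defining identity \eqref{e:def_wtV} we have $\widetilde V_{t_i+r}(F_r(x))=\partial_r F_r(x)$; since $F_r(x)=x+r\sum_j b_j k_j(x,r)$, differentiating $p$ times yields
\[
\partial_r^p\widetilde V_{t_i+r}(F_r(x))=\partial_r^{p+1}F_r(x)=(p+1)\,\partial_r^p\Bigl[\sum_j b_j k_j\Bigr]+r\,\partial_r^{p+1}\Bigl[\sum_j b_j k_j\Bigr].
\]
For $\widehat V$, the identity $\widehat V_t(x)=x+s_t(x)$ gives $\widehat V_{t_i+r}(F_r(x))=F_r(x)+s_{t_i+r}(F_r(x))$, and its $p$-th $r$-derivative expands, by Fa\`a di Bruno, as a sum of products of mixed partials $(\partial_t^\beta\nabla^\alpha s_{t_i+r})(F_r(x))$ contracted against tuples $\partial_r^{\ell_1}F_r(x),\ldots,\partial_r^{\ell_{|\alpha|}}F_r(x)$. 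In either case, the problem reduces to estimating $\|\partial_r^m k_j\|_\infty$ (and $\|\partial_r^m\nabla k_j\|_\infty$) for $m\leq p+1$.

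The central work is an induction on $m$, applied simultaneously to all stages $j=1,\ldots,s$, establishing
\[
\|\partial_r^m k_j(x,r)\|_\infty\leq C(m,s,B)\,L^{m+1}d^{m/2}(\sqrt d+\|x\|_2)^m,
\]
together with the companion $\ell_2$-bound $\|\partial_r^m k_j\|_2\leq C(m,s,B)L^{m+1}d^{(m+1)/2}(\sqrt d+\|x\|_2)^m$ (obtained via $\|v\|_2\leq\sqrt d\|v\|_\infty$). The base case $m=0$ is Lemma~\ref{lemma: infinity norm pth order growth in RK}. For the inductive step, differentiate $k_j(x,r)=\widehat V_{t_i+rc_j}(y_j(x,r))$ with $y_j=x+r\sum_{i<j}a_{ji}k_i$ by Fa\`a di Bruno. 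Each summand is a tensor $(\partial_t^{\beta}\nabla^{\alpha}\widehat V)(y_j)$---whose entries are bounded by $L$ under Assumption~\ref{a:score-high-derivative}, plus the trivial identity contribution to $\nabla\widehat V$ coming from $\widehat V=x+s_t$---contracted against iterated derivatives $\partial_r^{\ell_1}y_j,\ldots,\partial_r^{\ell_{|\alpha|}}y_j$ with $\ell_1+\cdots+\ell_{|\alpha|}+\beta=m$. The estimate $\|T\cdot(v_1,\ldots,v_k)\|_\infty\leq L\prod_\nu\|v_\nu\|_1\leq Ld^{k/2}\prod_\nu\|v_\nu\|_2$ for a $(k+1)$-tensor $T$ with bounded entries then supplies exactly the $d^{m/2}$ factor matching the target. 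The hypothesis $8sBhdL\leq 1$ is the key damping mechanism: it absorbs the unique term in $\partial_r^m y_j$ involving the top-order factor $r\cdot\partial_r^m k_i$, so the inter-stage coupling does not blow the induction up.

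The same induction applied to $\nabla k_j$ produces $\|\partial_r^m\nabla k_j\|_\infty\leq C(m,s,B)L^{m+1}d^{m/2}(\sqrt d+\|x\|_2)^m$, and combined with the previous bounds it yields \eqref{e:dhVbound} for $\widehat V_{t_i+r}(F_r(x))=F_r(x)+s_{t_i+r}(F_r(x))$ directly by Leibniz and Fa\`a di Bruno. For \eqref{e:dtVbound} on $\nabla\widetilde V$, one additional step is needed. Differentiating $\partial_rF_r(x)=\widetilde V_{t_i+r}(F_r(x))$ in $x$ gives
\[
\nabla\widetilde V_{t_i+r}(F_r(x))=\partial_r\nabla F_r(x)\cdot(\nabla F_r(x))^{-1}.
\]
Under the smallness hypothesis, $\nabla F_r-I=r\sum_j b_j\nabla k_j$ has $\ell_\infty$-operator norm at most $2sBrdL\leq 1/4$, so $(\nabla F_r)^{-1}$ exists and is uniformly bounded through a Neumann series. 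Iterated $r$-derivatives of the inverse are expressed by the identity $\partial_r(M^{-1})=-M^{-1}(\partial_rM)M^{-1}$ applied recursively, as polynomials in $(\nabla F_r)^{-1}$ and $\partial_r^\ell\nabla F_r$. A Leibniz expansion together with the inductive bounds on $\partial_r^m\nabla k_j$ then delivers the target estimate for $\partial_r^p\nabla\widetilde V$.

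The principal obstacle is the combinatorial bookkeeping in Fa\`a di Bruno: each passage between $\ell_\infty$ and $\ell_2$ on a single vector costs a $\sqrt d$, and these must cancel precisely against the $d^{k/2}$ gain from the action of a rank-$k$ tensor with bounded entries on $k$ vectors in $\ell_1$, so that the final $d$-power lands at $d^{m/2}$. A careful ledger---tracking for each factor in each Fa\`a di Bruno summand which norm is being used, and isolating the unique term carrying the top-order derivative so that it can be absorbed via $rdL\leq 1/(8sB)$---is required to close the induction with constants $C(p,s,B)$ depending only on the combinatorics of Fa\`a di Bruno and on $B$ and $s$, and not on $d$, $L$, or $\|x\|_2$.
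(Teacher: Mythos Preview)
Your plan is correct and follows the same route as the paper: reduce to bounds on $\partial_r^m k_j$ and $\partial_r^m\nabla k_j$, and run an induction over $m$ (and over the stage index $j$) using the multilinear tensor estimate $\|T[v_1,\ldots,v_\alpha]\|_\infty\le\|T\|_\infty\prod_\nu\|v_\nu\|_1$ together with Assumption~\ref{a:score-high-derivative}. The paper organizes this induction by building explicit families $\mathcal D_m$ of admissible terms, tagged by a six-tuple $(m_1,\ldots,m_6)$ that tracks exactly how each $r$-derivative landed; your Fa\`a di Bruno bookkeeping is an equivalent packaging of the same combinatorics.

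One unnecessary complication: for the gradient bound in \eqref{e:dtVbound} you pass through $(\nabla F_r)^{-1}$, interpreting $\nabla\widetilde V_{t_i+r}(F_r(x))$ as the gradient of $\widetilde V$ evaluated at $F_r(x)$. In the paper's usage (see \eqref{e:Taylor} and \eqref{e:gradinfite_bound}) what is actually needed is the $x$-gradient of the \emph{composite} $x\mapsto\widetilde V_{t_i+r}(F_r(x))$, which equals $\partial_r\nabla F_r(x)=\sum_j b_j\nabla k_j+r\sum_j b_j\partial_r\nabla k_j$ directly. So once you have the bounds on $\partial_r^m\nabla k_j$, the gradient estimate follows immediately without any Neumann-series control of the inverse; your detour works (the smallness hypothesis does make $(\nabla F_r)^{-1}$ and its $r$-derivatives tame), but it is not needed.
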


In the following, we first prove \Cref{p:high_order_error}. The proofs of \Cref{lemma: infinity norm pth order growth in RK} and \Cref{lemma:V derivative bound} are postponed to the end of this section. 
\begin{proof}[Proof of \Cref{p:high_order_error}]
Recall $F_r(x)=x+r\sum_{j=1}^s b_j k_j$. It follows from \Cref{lemma: infinity norm pth order growth in RK} that 
\begin{align}\begin{split}\label{e:Fr_bound}
    \|F_r(x)-x\|_2&\leq r B \sum_{i=1}^s \|k_i(x)\|_2\leq rBs[\|x\|_2+L(\sqrt d+\|x\|_2)]{(1+ L Br)}^{s-1} \\
    &\leq 2 L rBs{(1+ L Br)}^{s-1}(\sqrt d+\|x\|_2) \leq \frac{1}{2}(\sqrt d+\|x\|_2),
\end{split}\end{align}
where we used that $8sBhL\leq 8sBhdL\leq 1$, and $2 L rBs{(1+ L Br)}^{s-1}\leq 2sLBre^{sLBr}\leq (1/4)e^{1/8}\leq 1/2$. It follows that 
\begin{align}\label{e:change}
    \sqrt d+\|F_r(x)\|_2\geq \sqrt d+\|x\|_2-\|F_r(x)-x\|_2\geq \frac{1}{2}(\sqrt d+\|x\|_2).
\end{align}

Next, to show the map $F_r(x)$ from \eqref{e:Phitmap0} is a local diffeomorphism, we check its Jacobian matrix 
    \begin{align}\label{e:Jacobian}
        \nabla F_r(x)={\mathbb I}_d +A,\quad A:=r \sum_{j=1}^s b_j \nabla k_j(x).
    \end{align}
   Thanks to \Cref{lemma: infinity norm pth order growth in RK}, the $(i,j)$-th entry of $A$ is bounded by
    \begin{align}\label{e:Aijbound}
        |A_{ij}|\leq r B \sum_{i=1}^s \|\nabla k_i(x)\|_\infty\leq rB s L(1+LBr d )^{s-1} ,\quad 1\leq i,j\leq d.
    \end{align}
    The spectral norm $\|A\|_{\rm norm}$ of the matrix $A$ is bounded by its  Frobenius norm as
    \begin{align*}
        \|A\|_{\rm norm}\leq \|A\|_{\rm F}\leq \sqrt{\sum_{ij}A_{ij}^2}\leq rB s L d (1+LBr d )^{s-1}\leq 1/2.
    \end{align*}
    where again we used that $8sLBrd\leq 1$.
    
      It follows that $\nabla F_r(x)=\mathbb I_d+A$ is invertible, and $F_r$ is a local diffeomorphism, and then Hadamard-Cacciopoli theorem implies that $F_r$ is also a bijection from $\bR^d$ to itself. Therefore, $F_r$ is a diffeomorphism from $\bR^d$ to itself. Moreover, thanks to \eqref{e:Aijbound}, we have the following entrywise bound for the inverse matrix $(\nabla F_r(x))^{-1}$:
\begin{align}\begin{split}\label{e:DFr_inverse}
       |((\nabla F_r(x))^{-1}-\mathbb I_d)_{ij}|
       &= |(\mathbb I_d +A)_{ij}^{-1}-\delta_{ij}|\leq |(\mathbb I_d
       -A+A^2-A^3+\cdots)_{ij}-\delta_{ij}|\\
       &\leq \sum_{k\geq 1}(rB s L(1+LBrd )^{s-1})^k d^{k-1}\leq 2rB s L(1+LBrd )^{s-1},\quad 1\leq i,j\leq d,
    \end{split}\end{align}
   where we used that $8sLBrd\leq 1$.
  We denote the functional inverse of $F_r$ as $F^{-1}_{r}(x)$, then \eqref{e:def_wtV} follows from \eqref{e:dttY}.


 Next we show the claim \eqref{e:stdiff} follows from the following statement: for $0\leq r\leq t_{i+1}-t_i\leq h$
\begin{align}\begin{split}\label{e:tV-V}
     &\|\widetilde V_{t_i+r}(F_r(x))-\widehat V_{t_i+r}(F_r(x))\|_{\infty}\leq C(p,s, B) L((\sqrt d+\|x\|_2)\sqrt d r L)^p,\\
     &\|\nabla(\widetilde V_{t_i+r}(F_r(x))-\widehat V_{t_i+r}(F_r(x)))\|_{\infty}\leq C(p,s, B) L((\sqrt d+\|x\|_2)\sqrt d  rL)^p.
 \end{split}\end{align}

In fact, if we denote $y=F_r(x)$, then
\begin{align*}
    \|\widetilde V_{t_i+r}(y)-\widehat V_{t_i+r}(y)\|_{\infty}
    &\leq C(p,s,B) L((\sqrt d+\|F^{-1}_{r}(y)\|_2)\sqrt d r L)^p\\
    &\leq 2^{p}C(p,s,B) L((\sqrt d+\|y\|_2)\sqrt d r L)^p,
\end{align*}
where in the last inequality we used \eqref{e:change} to bound $\sqrt d+\|F^{-1}_{r}(y)\|_2$ by $2(\sqrt d+\|y\|_2)$.

For the gradient, by the chain rule we have
\begin{align}\label{e:DeltaV}
    (\nabla \widetilde V_{t_i+r})(y)-(\nabla \widehat V_{t_i+r})(y)=\nabla(\widetilde V_{t_i+r}(F_r(x))-\widehat V_{t_i+r}(F_r(x)))(\nabla F_r(x))^{-1}.
\end{align}
By plugging \eqref{e:DFr_inverse} into \eqref{e:DeltaV}, we conclude that
    \begin{align}\begin{split}\label{e:VFdiff}
        &\phantom{{}={}}\|(\nabla \widetilde V_{t_i+r})(y)-(\nabla \widehat V_{t_i+r})(y)\|_\infty \\
        &\leq (1+2rB s L d (1+LBr d )^{s-1})\|\nabla(\widetilde V_{t_i+r}(F_r(x))-\widehat V_{t_i+r}(F_r(x)))\|_{\infty}\\
    &\leq 2\|\nabla(\widetilde V_{t_i+r}(F_r(x))-\widehat V_{t_i+r}(F_r(x)))\|_{\infty}
    \leq 2C(p,s,B) L((\sqrt d+\|x\|_2)\sqrt d r L)^p,
    \end{split}\end{align}
  where in the third line we used that $8rsBLd\leq 8hsBLd\leq  1$ and \eqref{e:tV-V}.

In the rest we prove \eqref{e:tV-V}. 
We denote the characteristic flow corresponding to $\widehat V_{t_i+r}$ as $\Phi_r(x)$, i.e. $\Phi_0(x)=x$ and $\del_r\Phi_r(x)= \widehat V_{t_i+r}(\Phi_r(x))$. 
Recall from \Cref{r:RK}, the  Runge–Kutta matrix $[a_{jk}]$, weights $b_j$ and nodes $c_j$ are carefully chosen such $\Phi_r(x)$ and $F_r(x)$ matches for the first $p$-th derivative at $r=0$. It follows that for any $0\leq m\leq p$,
\begin{align*}
    \left.\frac{\rd^m \Phi_r(x)}{\rd^m r}\right|_{r=0}=\left.\frac{\rd^m F_r(x)}{\rd^m r}\right|_{r=0},\quad 
    \left.\frac{\rd^m \nabla \Phi_r(x)}{\rd^m r}\right|_{r=0}=\left.\frac{\rd^m \nabla F_r(x)}{\rd^m r}\right|_{r=0}.
\end{align*}
Thus by the chain rule, we have for $0\leq m\leq p-1$,
\begin{align*}
    &\left.\frac{\rd^m \widehat V_{t_i+r}(\Phi_r(x))}{\rd^m r}\right|_{r=0}=\left.\frac{\rd^m \widehat V_{t_i+ r}(F_r(x))}{\rd^m r}\right|_{r=0},\\
    &\left.\frac{\rd^m \nabla \widehat V_{t_i+ r}(\Phi_r(x))}{\rd^m r}\right|_{r=0}=\left.\frac{\rd^m \nabla \widehat V_{t_i+ r}(F_r(x))}{\rd^m r}\right|_{r=0}.
\end{align*}

Then by Taylor expansion we conclude that
\begin{align}\begin{split}\label{e:Taylor}
    &\widehat V_{t_i+r }(F_r(x))-\widetilde V_{t_i+r }(F_r(x))
    =R_1(r,x)-R_2(r,x),\\
    &\nabla \widehat V_{t_i+r }(F_r(x))-\nabla \widetilde V_{t_i+r }(F_r(x))
    =\nabla R_1(r,x)-\nabla R_2(r,x).
\end{split}\end{align}
where the two remainder terms are given by 
\begin{align*}
    &R_1(r,x)=\frac{1}{p!}\int_0^{r}(r-\tau)^{p-1}\frac{\rd^p \widehat V_{t_i+\tau }(F_\tau(x))}{\rd \tau^p} \de \tau,\\
    &R_2(r,x)=\frac{1}{p!}\int_0^{r}(r-\tau)^{p-1}\frac{\rd^p \widetilde V_{t_i+\tau }(F_\tau(x))}{\rd \tau^p} \de \tau .
\end{align*}

We conclude from  \Cref{lemma:V derivative bound} that 
\begin{align}\begin{split}\label{e:gradinfite_bound}
\| R_1(r,x)\|_{\infty},\| R_2(r,x)\|_{\infty}
    \|\nabla R_1(r,x)\|_{\infty},\|\nabla  R_2(r,x)\|_{\infty}
    \leq  C(p,s,B) r^pL((\sqrt d+\|x\|_2 ) \sqrt d  L)^p.
\end{split}\end{align}

The estimates \eqref{e:Taylor} and \eqref{e:gradinfite_bound} together give \eqref{e:tV-V}.
This finishes the proof of \Cref{p:high_order_error}. 
\end{proof}

Next, we are going to prove \Cref{lemma: infinity norm pth order growth in RK}.
\begin{proof}[Proof of \Cref{lemma: infinity norm pth order growth in RK}]
   
To prove \eqref{e:K2},  we notice that, according to our \Cref{a:score-high-derivative}, $\|s_t(x)\|_2\leq L(\sqrt d+\|x\|_2)$ and $\|\widehat V_t(x)\|_2 \leq \|x\|_2+L(\sqrt d+ \|x\|_2)$ for any $x \in \mathbb{R}^d$. Hence, $\|k_1(x)\|_2 \leq \|x\|_2+L(\sqrt d+\|x\|_2)$, according to \eqref{e:RK2}. By the relation that $k_j = \widehat V_{t_i + c_j r}\bigl(x + (a_{j1}    k_1 + a_{j2}    k_2 + \cdots + a_{jj-1}    k_{j-1}) r\bigr)$, we obtain that
        \begin{align*}
            \begin{split}
                &\|   k_j (x)\|_{2} \leq    L\sqrt d+ (1+L)(\|x\|_2+Br (\|   k_1(x)\|_{2} + \|   k_2(x)\|_{2} + \cdots + \|   k_{j-1}(x)\|_{2})),
            \end{split}
        \end{align*}
    If we define $T_m \coloneq \sum_{j=1} ^m \|   k_j(x) \|_{2}$, we see that
    \begin{align*}
        \begin{split}
            &T_j \leq   L \sqrt d+ (1+L)\|x\|_2 +   (1+(1+L)Br )T_{j-1}.
        \end{split}
    \end{align*} 
    Hence,
    \begin{align*}
        \begin{split}
            &T_s \leq s(\|x\|_2+L(\sqrt d+\|x\|_2){(1+ L Br)}^{s-1} , 
        \end{split}
    \end{align*}
    and
    \begin{align*}
        \|k_j(x)\|_2
        &\leq (L\sqrt d+(1+L)\|x\|_2)+(1+L)Brs{(1+ L Br)}^{s-1}(\|x\|_2+L(\sqrt d+\|x\|_2)\\
        &\leq 2L(\sqrt d+\|x\|_2),
    \end{align*}
       where we used $8sLBrd\leq 8sLBhd\leq 1$. This finishes the proof of \eqref{e:K2}.
       
    By the definition that $  k_j(x) = \widehat V_{t_i + r c_j}\bigl(x + (a_{j1}   k_1 + a_{j2}   k_2 + \cdots + a_{jj-1}   k_{j-1}) r\bigr)$, we obtain that
        \begin{align*}
    \partial_{p_1}   k_j ^{(q)} = \sum_{p_2 = 1} ^{d} (\partial_{p_2}\dV_{t_i + c_j r} ^{(q)})[\delta_{p_1 p_2} + r(a_{j1} \partial_{p_1}  k_1 ^{(p_2)} + a_{j2} \partial_{p_1}   k_2 ^{(p_2)} + \cdots + a_{jj-1} \partial_{p_1}   k_{j-1} ^{(p_2)}) ].
        \end{align*}
Hence,
    \begin{align*}
        \|\nabla   k_j(x)\|_{\infty} \leq L [1+ Br d (\|\nabla   k_1(x)\|_{\infty} +\|\nabla   k_2(x)\|_{\infty} +\cdots+ \|\nabla   k_{i-1}(x)\|_{\infty}) ].
    \end{align*}
If we define $\widetilde T_m \coloneq \sum_{j=1} ^{m} \|\nabla   k_j(x)\|_{\infty}$, we see that 
    \begin{align*}
       \widetilde T_j \leq \widetilde T_{j-1}+ L (1+   Br d\widetilde T_{j-1})=L+(1+LBrd)\widetilde T_{j-1}.
    \end{align*}
Also, $\|\nabla   k_1(x)\|_{\infty} \leq   L$.
So by iteration, 
    \begin{align*}
        \widetilde T_s \leq s   L {(1+   L  Br d)}^{s-1},
    \end{align*}
    and 
    \begin{align*}
         \|\nabla   k_j(x)\|_{\infty}\leq    L (1+BsrdL{(1+   L  Br d)}^{s-1})
        \leq 2L,
    \end{align*}
    where we used $8sLBrd\leq 8sLBhd\leq 1$. This finishes the proof of \eqref{e:Dk}.

\end{proof}

\begin{proof}[Proof of \eqref{e:dtVbound}]
In this proof, we will use the notation $X\lesssim Y$ if there exists a constant $C$ depending only on $p,s,B$ such that $|X|\leq C Y$. Given a symmetric tensor $T=(T_{j_1 j_2 \cdots j_{\al+1}})\in \bR^{d^{\al+1}}$, and vectors $u_1, u_2, \cdots, u_\al\in \bR^d$, we denote the vector $T[u_1, u_2, \cdots, u_\al]\in \bR^d$ as
\begin{align*}
    (T[u_1, u_2, \cdots, u_\al])_j
    =\sum_{1\leq j_1,j_2,\cdots, j_\al\leq d}T_{j j_1 j_2 \cdots j_\al}u_1^{(j_1)}u_2^{(j_2)}\cdots u_\al^{(j_\al)}.
\end{align*}
For any matrix $A=(A_{j_1j_2})\in \bR^{d\times d}$, we denote the matrix $T[u_1, u_2, \cdots, u_{\al-1}, A]\in \bR^{d\times d}$ as
\begin{align*}
    (T[u_1, u_2, \cdots, u_{\al-1},A])_{jj'}
    =\sum_{1\leq j_1,j_2,\cdots, j_\al\leq d}T_{j j_1 j_2 \cdots j_\al}u_1^{(j_1)}u_2^{(j_2)}\cdots u_{\al-1}^{(j_{\al-1})}A_{j_\al j'}.
\end{align*}

    Recall that $F_r(x)=x+r\sum_{j=1}^s b_j k_j$, so by \eqref{e:discreteODE}, we have
    \begin{align}\label{e:tV=dF}
        \widetilde V_{t_i+r}(F_r(x))=\del_r F_r(x)=\sum_{j=1}^s b_j k_j+r\sum_{j=1}^s b_j \del_r k_j.
    \end{align}

   In the following, we prove by induction that for any $1\leq m\leq p$
   \begin{align}\label{e:kjbound}
      \|\del_r^m k_j\|_\infty, \|\del_r^m\nabla  k_j\|_\infty\lesssim  L(2(\sqrt d+\|x\|_2) d L)^m.
   \end{align}
    Then claim \eqref{e:dtVbound} follows from plugging \eqref{e:kjbound} to \eqref{e:tV=dF}.
   
   Since $k_1=\widehat V_{t_i+r c_1}(x)$, by \Cref{a:score-high-derivative}, \eqref{e:kjbound} holds for $j=1$ and any $1\leq m\leq p$. In the following we assume statement \eqref{e:kjbound} holds for $j-1$, we prove it for $j$ by induction on $m$

   We define the following set of vectors $\cD_0, \cD_1,\cD_2, \cdots$. Let $\cD_0=\{k_1, k_2, \cdots, k_j\}$. Then thanks to \Cref{lemma: infinity norm pth order growth in RK}, for any $v\in \cD_0$,
   \begin{align}
       \|v\|_2\leq 2L(\sqrt d+\|x\|_2).
   \end{align}

   For $m\geq 1$, $\cD_m$ is defined as the set of vectors in the following form: for  $\beta\geq 1, 0\leq \zeta,\al\leq \beta\leq m$,
    \begin{align}\label{e:defcDm}
        r^{\zeta}\del_r^{\beta-\al}\nabla^{\al} \widehat V_{t_i+r c_j}(x+r(a_{j1}k_1+a_{j2}k_2+\cdots+a_{jj-1}k_{j-1}))[u_1, u_2, \cdots, u_\al],
    \end{align}
    where $\del_r^{\beta-\al}\nabla^{\al} \widehat V_{t_i+r c_j}\in \bR^{d^{\al+1}}$ is a tensor, and for each $1\leq \gamma\leq \al$, \begin{align*}
        u_\gamma\in \{\del_r^{\ell_\gamma} k_1, \del_r^{\ell_\gamma} k_2, \cdots, \del_r^{\ell_\gamma} k_{j-1} \},
    \end{align*}
    for some $\ell_\gamma\geq 0$. Moreover, there exist positive integers $m_1+m_2+\cdots+m_6=m$, such that
    \begin{align}\label{e:parameter}
        \zeta=m_4-m_1,\quad  \al=m_3+m_4, \quad \sum_{\gamma}\ell_\gamma=m_4+m_5+m_6,
        \quad 
        \sum_{\gamma:\ell_\gamma\geq 1}1=m_4+m_5.
    \end{align}

    Next, we show that for each $v\in \cD_m$, $\del_r v$ is a linear combination of at most $2+2s+p$ terms in $\cD_{m+1}$, with coefficients bounded by $\max\{p,B\}$. Say $v$ is given in \eqref{e:defcDm}, satisfying \eqref{e:parameter}. By the chain rule, there are several cases:
    \begin{enumerate}
    \item If $\del_r$ hits $r^\zeta$, we get
    \begin{align*}
        r^{\zeta-1}\del_r^{\beta-\al}\nabla^{\al} \widehat V_{t_i+r c_j}[u_1, u_2, \cdots, u_\al]\in \cD_{m+1},
    \end{align*}
    with the parameters \eqref{e:parameter} given by $(m_1', m_2',m_3', m_4', m_5',m_6')=(m_1+1, m_2,m_3, m_4, m_5,m_6)$.
    \item If $\del_r$ hits $t_i+rc_j$, we get
    \begin{align*}
        r^\zeta\del_r^{(\beta+1)-\al}\nabla^{\al} \widehat V_{t_i+r c_j}[u_1, u_2, \cdots, u_\al]\in \cD_{m+1},
    \end{align*}
    with the parameters \eqref{e:parameter} given by $(m_1', m_2',m_3', m_4', m_5',m_6')=(m_1, m_2+1,m_3, m_4, m_5,m_6)$.
    
    \item If $\del_r$ hits the factor $r$ in $(x+r(a_{j1}k_1+a_{j2}k_2+\cdots+a_{jj-1}k_{j-1})$, we get
    \begin{align*}
        \sum_{1\leq i\leq j-1}r^{\zeta}a_{ji}\del_r^{\beta-\al}\nabla^{\al+1} \widehat V_{t_i+r c_j}[k_i, u_1, u_2, \cdots, u_\al],
    \end{align*}
    where each summand is in $\cD_{m+1}$, with the parameters \eqref{e:parameter} given by $(m_1', m_2',m_3', m_4', m_5',m_6')=(m_1, m_2,m_3+1, m_4, m_5,m_6)$. 
    \item If $\del_r$ hits $k_i$ in $(x+r(a_{j1}k_1+a_{j2}k_2+\cdots+a_{jj-1}k_{j-1})$ for some $1\leq i\leq j-1$, we get
    \begin{align*}
        r^{\zeta+1} \del_r^{\beta-\al}\nabla^{\al+1} \widehat V_{t_i+r c_j}[\del_r k_i, u_1, u_2, \cdots, u_\al]\in \cD_{m+1},
    \end{align*}
    with the parameters \eqref{e:parameter} given by $(m_1', m_2',m_3', m_4', m_5',m_6')=(m_1, m_2,m_3, m_4+1, m_5,m_6)$.
    
    \item If $\del_r$ hits $u_\gamma$ and $u_\gamma=k_i$ we get
    \begin{align*}
        \del_r^{\beta-\al}\nabla^{\al} \widehat V_{t_i+r c_j}[ u_1, u_2, \cdots, u_{\gamma-1}, \del_r k_i, u_{\gamma+1},\cdots, u_\al]\in \cD_{m+1},
    \end{align*}
    with the parameters \eqref{e:parameter} given by $(m_1', m_2',m_3', m_4', m_5',m_6')=(m_1, m_2,m_3, m_4, m_5+1,m_6)$.
    \item If $\del_r$ hits $u_\gamma$ and $u_\gamma=\del_r^{\ell_\gamma}k_i$ with $\ell_\gamma\geq 1$ we get
    \begin{align*}
        \del_r^{\beta-\al}\nabla^{\al} \widehat V_{t_i+r c_j}[ u_1, u_2, \cdots, u_{\gamma-1}, \del_r^{\ell_\gamma+1}k_i, u_{\gamma+1},\cdots, u_\al]\in \cD_{m+1},
    \end{align*}
      with the parameters \eqref{e:parameter} given by $(m_1', m_2',m_3', m_4', m_5',m_6')=(m_1, m_2,m_3, m_4, m_5,m_6+1)$.
\end{enumerate}
We conclude that for $v\in \cD_m$, $\del_r v$ is a linear combination of finite terms in $\cD_{m+1}$. In particular, $\del_r^m k_j$ is a linear combination of at most $(2+2s+p)^p$ terms in $\cD_m$, with coefficients bounded by $\max\{p,B\}^p$

 In the following, we show the following bound for vectors in $v\in \cD_m$
    \begin{align}\label{e:Dkbound}
        \|v\|_\infty\lesssim
            L(2(\sqrt d+\|x\|_2) \sqrt{d} L)^m, \quad  m\geq 1. 
    \end{align}
Then it follows 
\begin{align*}
    \|\del_r^m k_j\|_{\infty} \lesssim L(2(\sqrt d+\|x\|_2) \sqrt{d} L)^m,
\end{align*}
and the first statement in \eqref{e:kjbound} holds.

 We prove \eqref{e:Dkbound} by induction. We recall that $\|k_i\|_2\leq 2L(\sqrt{d}+\|x\|_2)$ from \Cref{lemma: infinity norm pth order growth in RK}. It follows that $\|k_i\|_1\leq 2\sqrt d L(\sqrt{d}+\|x\|_2)$.
    We assume \eqref{e:Dkbound} holds for $ 1,2,\cdots, m-1$, and next we prove it for $m$.
    \begin{align}\begin{split}\label{e:ubb}
        &\phantom{{}={}}\|  r^{\zeta}\del_r^{\beta-\al}\nabla^{\al} \widehat V_{t_i+r c_j}[u_1, u_2, \cdots, u_\al]\|_\infty
        \leq r^{\zeta} \|  \del_r^{\beta-\al}\nabla^{\al} \widehat V_{t_i+r c_j}\|_\infty\|u_1\|_1\|u_2\|_1\cdots\|u_\al\|_1\\
        &\lesssim  L r^{\zeta}  \prod_{\gamma:\ell_\gamma=0}(2(\sqrt d+\|x\|_2)\sqrt d L)
        \prod_{\gamma:\ell_\gamma\geq 1} d L(2(\sqrt d+\|x\|_2)  \sqrt{d} L)^{\ell_\gamma}\\
        &=L r^{\zeta}  (2(\sqrt d+\|x\|_2)\sqrt d L)^\al
        \prod_{\gamma:\ell_\gamma\geq 1} d L(2(\sqrt d+\|x\|_2) \sqrt{d} L)^{\ell_\gamma-1}\\
        &=Lr^{m_4-m_1}(2(\sqrt d+\|x\|_2)\sqrt d L)^{m_3+m_4}( d L)^{m_4+m_5}(2(\sqrt d+\|x\|_2) \sqrt{d} L)^{m_6}
        \\ &
        = L{(r d L)}^{m_4-m_1}(2(\sqrt d+\|x\|_2)\sqrt d L)^{m_3+m_4}( d L)^{m_1+m_5}(2(\sqrt d+\|x\|_2) \sqrt{d} L)^{m_6}
        \\  &
        \lesssim L{(rdL)}^{m_4-m_1}(2(\sqrt d+\|x\|_2) \sqrt{d} L)^{m},
    \end{split}\end{align}
    where in the last line we used \eqref{e:parameter} and $dL \leq (\sqrt d+\|x\|_2) \sqrt{d} L$. 
    By our assumption, $r\sqrt{d}L \leq 1$ and $m_4- m_1 \geq 0$. This finishes the proof of \eqref{e:Dkbound}.

  In the following, we prove the second statement in \eqref{e:kjbound}. We define the following set of $d\times d$ matrices $\widetilde\cD_0, \widetilde\cD_1,\widetilde\cD_2, \cdots$. Let $\widetilde\cD_0=\{\nabla k_1, \nabla  k_2, \cdots, \nabla 
 k_j\}$. Then thanks to \Cref{lemma: infinity norm pth order growth in RK}, for any $v\in \widetilde\cD_0$,
   \begin{align}
       \|v\|_\infty\leq 2L.
   \end{align}
We denote $\widetilde \cD_m$ the set of matrices obtained from taking gradient of \eqref{e:defcDm}, with respect to $x$ 
   \begin{align}\begin{split}\label{e:defcDm2}
        &r^{\zeta}\del_r^{\beta-\al}\nabla^{\al+1} \widehat V_{t_i+r c_j}[(\mathbb{I}_d+r(a_{j1}\nabla k_1+a_{j2}\nabla k_2+\cdots+a_{jj-1}\nabla k_{j-1})), u_1, u_2, \cdots, u_\al]\\
        &r^{\zeta}\del_r^{\beta-\al}\nabla^{\al} \widehat V_{t_i+r c_j}[u_1, u_2, \cdots, u_{\gamma-1},\nabla u_{\gamma}, u_{\gamma+1},\cdots, u_\al], \quad 1\leq \gamma\leq \al.
    \end{split}\end{align}
  Then  $\del_r^m \nabla k_j$ is a finite linear combination of terms in $\widetilde \cD_m$. 

  Next we show by induction that $v\in \cD_m$, 
   \begin{align}\label{e:wDkbound}
        \|v\|_\infty\lesssim
            L(2(\sqrt d+\|x\|_2)\sqrt d L)^m, \quad  m\geq 1. 
    \end{align}
    Then it follows 
\begin{align*}
    \|\del_r^m \nabla k_j\|_{\infty}\lesssim L(2(\sqrt d+\|x\|_2)\sqrt d L)^m,
\end{align*}
and the second statement in \eqref{e:kjbound} holds.

  Similarly to \eqref{e:ubb}, 
  we can bound the $L^\infty$-norm of $v$ as
  \begin{align*}
       L r^\zeta (1+2sBrL) \|u_1\|_1\|u_2\|_1\cdots\|u_\al\|_1
       &\leq 2L r^\zeta \|u_1\|_1\|u_2\|_1\cdots\|u_\al\|_1\\
       &\lesssim 2L(2(\sqrt d+\|x\|_2)\sqrt d L)^m.
    \end{align*}
    Also, for any $q \in \{1, 2,\dots,d\}$, we can similarly obtain that
    \begin{align*}
       L r^\zeta \|\partial_q u_\gamma\|_1 \prod_{\gamma'\neq \gamma}\|u_{\gamma'}\|_1 &\lesssim  L r^\zeta 
       dL(2(\sqrt d+\|x\|_2)\sqrt d L)^{\ell_\gamma}\prod_{\gamma'\neq \gamma}\|u_{\gamma'}\|_1\\ &\lesssim  2L(2(\sqrt d+\|x\|_2)\sqrt d L)^m,
  \end{align*}
as we did in \eqref{e:ubb}.
This gives \eqref{e:wDkbound}.
\end{proof}

\begin{proof}[Proof of \eqref{e:dhVbound}]
    Explicitly, 
    \begin{align*}
        \widehat V_{t_i+r}(F_r)=\widehat V_{t_i+r}(x+r\sum_{j=1}^s b_j k_j),
    \end{align*}
    which is of the same form as $k_j$. Thus the same argument as for \eqref{e:kjbound} gives \eqref{e:dhVbound}
\end{proof}

\end{document}